\newtheorem{theorem}{Theorem}
\newtheorem{lemma}[theorem]{Lemma}
\newcommand{\oea}{\mbox{$(1 + 1)$~EA}\xspace}
\newcommand{\ollga}{\mbox{$(1+(\lambda,\lambda))$~GA}\xspace}
\newcommand{\onemax}{\textsc{OneMax}\xspace}
\newcommand{\LO}{\textsc{Leading\-Ones}\xspace}
\newcommand{\leadingones}{\LO}
\newcommand{\LeadingOnes}{\LO}
\newcommand{\PLeadingOnes}{\textsc{PLeading\-Ones}\xspace}
\newcommand{\needle}{\textsc{Needle}\xspace}
\newcommand{\binval}{\textsc{BinVal}\xspace}
\newcommand{\jump}{\textsc{Jump}\xspace}
\newcommand{\Pjump}{\textsc{PJump}\xspace}
\newcommand{\PJump}{\textsc{PJump}\xspace}
\newcommand{\trap}{\textsc{Trap}\xspace}
\newcommand{\HAM}{\textsc{Ham}\xspace}
\newcommand{\olsi}[1]{\,\overline{\!{#1}}}
\DeclareMathOperator{\fp}{fp}
\DeclareMathOperator{\pow}{pow}
\DeclareMathOperator{\Poi}{Poi}
\DeclareMathOperator{\Id}{Id}
\newcommand{\R}{\ensuremath{\mathbb{R}}}
\newcommand{\N}{\ensuremath{\mathbb{N}}} % ohne Null!!!
\newcommand{\Z}{\ensuremath{\mathbb{Z}}}
\newcommand{\calE}{\ensuremath{\mathcal{E}}} 
\newcommand{\EE}{\calE}
\newcommand{\eps}{\varepsilon}
\newcommand{\new}[1]{{#1}}
\newcommand{\newer}[1]{\textcolor{blue}{#1}}
\let\originalleft\left
\let\originalright\right
\renewcommand{\left}{\mathopen{}\mathclose\bgroup\originalleft}
\renewcommand{\right}{\aftergroup\egroup\originalright}
\begin{document}
%\begin{large}
\title{Runtime Analysis for Permutation-based Evolutionary Algorithms\setcounter{footnote}{1}\thanks{Author-generated version of a paper appearing in \emph{Algorithmica}. Extends a paper that appeared in the proceedings of GECCO 2022~\cite{DoerrGI22}. This version contains all proofs that were omitted in~\cite{DoerrGI22} for reasons of space. It contains as new results the tight runtime estimates for the permutation-based \LeadingOnes problem when using the classic or the heavy-tailed scramble operator. It also contains a section with experimental results, which includes an analysis of the probability that the four mutation operators discussed in this work generate an offspring equal to the parent.}
}
%\titlerunning{Lower Bounds Via Multiplicative Drift}

\author{Benjamin Doerr\setcounter{footnote}{0}\thanks{\new{Corresponding author.}} \setcounter{footnote}{6} \thanks{Laboratoire d'Informatique (LIX), CNRS, \'Ecole Polytechnique, Institut Polytechnique de Paris, Palaiseau, France.}
\and Yassine Ghannane\thanks{\'Ecole Polytechnique, Institut Polytechnique de Paris, Palaiseau, France}
\and Marouane Ibn Brahim\thanks{\'Ecole Polytechnique, Institut Polytechnique de Paris, Palaiseau, France}
}

%\date{}

\maketitle

{\sloppy

\begin{abstract}
While the theoretical analysis of evolutionary algorithms (EAs) has made significant progress for pseudo-Boolean optimization problems in the last 25 years, only sporadic theoretical results exist on how EAs solve permutation-based problems.

To overcome the lack of permutation-based benchmark problems, we propose a general way to transfer the classic pseudo-Boolean benchmarks into benchmarks defined on sets of permutations. We then conduct a rigorous runtime analysis of the permutation-based $(1+1)$ EA proposed by Scharnow, Tinnefeld, and Wegener (2004) on the analogues of the \textsc{LeadingOnes} and \textsc{Jump} benchmarks. The latter shows that, different from bit-strings, it is not only the Hamming distance that determines how difficult it is to mutate a permutation $\sigma$ into another one $\tau$, but also the precise cycle structure of $\sigma \tau^{-1}$. For this reason, we also regard the more symmetric scramble mutation operator. We observe that it not only leads to simpler proofs, but also reduces the runtime on jump functions with odd jump size by a factor of $\Theta(n)$. Finally, we show that a heavy-tailed version of the scramble operator, as in the bit-string case, leads to a speed-up of order $m^{\Theta(m)}$ on jump functions with jump size~$m$. \new{A short empirical analysis confirms these findings, but also reveals that small implementation details like the rate of void mutations can make an important difference.}% 
\end{abstract}

\section{Introduction}

Mathematical runtime analyses have raised our understanding of evolutionary algorithms (EAs) for many years now (see~\cite{DrosteJW02} for an early, very influential work in this field). They have explained their working principles, have given advice on how to set their parameters, and have even lead to the development of new operators and algorithms. 

A closer look at these results~\cite{NeumannW10,AugerD11,Jansen13,DoerrN20}, however, reveals that the vast majority of these works only consider bit-string representations, that is, the search space is the space $\Omega = \{0,1\}^n$ of bit strings of length~$n$. Hence for the practically also relevant case of permutation-based optimization problems~(see, e.g.,~\cite{EibenS15}), that is, the search space is the set $S_n$ of permutations of $[1..n] := \{1, \dots, n\}$, our rigorous understanding is much less developed (see Section~\ref{sec:previous} for a detailed account of the state of the art). This shortage is visible, e.g., from the fact that there are no established benchmark problems except for the sorting problem and there are no mathematical \new{runtime analyses discussing} how to set the parameters of permutation-based evolutionary algorithms. 

With this work, we aim at contributing to the foundations of a systematic and principled analysis of permutation-based evolutionary algorithms. Noting that the theory of evolutionary algorithms for bit-string representations has massively profited from the existence of widely accepted and well-understood benchmarks such as \onemax, \binval, linear functions, \leadingones, royal road functions, \trap, \jump, and many others, we first propose a simple generic way to translate benchmarks defined on bit strings into permutation-based benchmarks. 

Since the resulting permutation-based \onemax problem is equivalent to a sorting problem regarded in~\cite{ScharnowTW04}, we proceed with mathematical runtime analyses of the two next most prominent benchmarks \leadingones~\cite{Rudolph97} and \jump~\cite{DrosteJW02}. As algorithm, we consider the permutation-based \oea of~\cite{ScharnowTW04} performing as mutation a Poisson-distributed number of swaps (called exchanges in~\cite{ScharnowTW04}). 

For \leadingones, without greater problems, we prove an upper bound via fitness level arguments analogous to~\cite{Rudolph97} and a lower bound via the observation that, different from the bit-string case, it is unlikely to gain more than two fitness levels while the fitness is below~$\frac n2$. This observation saves us from counting so-called free-riders as in~\cite{DrosteJW02}. The final result is a $\Theta(n^3)$ runtime guarantee for the permutation-based \oea on this \leadingones benchmark. Given that the probability of a fitness improvement in the permutation-based case is $\Theta(n^{-2})$ (as opposed to $\Theta(n^{-1})$ in the bit-string case), this runtime estimate, higher by a factor of $\Theta(n)$ than for the bit-string case, is very natural. 

Our analysis for jump functions, in contrast, reveals a subtle difference to the bit-string case. Similar to the bit-string case, also in the optimization of a permutation-based jump function, the most difficult step is to mutate a local optimum\footnote{\newer{The precise definition of a local optimum depends on a neighborhood structure. We omit the formal details since for most of this text, an informal understanding of local optima is sufficient. For jump functions with gap parameter $m$, we say that an $x \in \{0,1\}^n$ with $\|x\|_1=n-m$ (in the case of bit-string representations) and a $\sigma \in S_n$ with exactly $n-m$ fixed points (in the case of permutation representations) is called a \emph{local optimum}.}} and say that for a jump  into the global optimum, which is the only improving solution here. This requires flipping $m$ particular bits in the bit-string case and permuting $m$ particular elements in the permutation-case, where $m$ is the jump size parameter of the jump function. Different from the bit-string case, the probability that one application of the mutation operator achieves this goal depends critically on the current permutation, more precisely, on its cycle structure. Consequently, the success probability for this event can be as low as $\Theta(n^{-2(m-1)})$ and as high as $\Theta(n^{-2 \lceil m/2 \rceil})$. By analyzing the random walk on the plateau formed by the local optima, we manage to show a runtime guarantee of only $\Theta(n^{2 \lceil m/2 \rceil})$, but this analysis is definitely more involved than for the bit-string case. 

Both from the complicated analysis and the slightly odd result that jump functions with jump size $m$ and $m+1$, $m$ odd, have the same asymptotic optimization time, we were led to wonder if the mutation operator regarded in~\cite{ScharnowTW04} is really the most appropriate one. We therefore also considered a variant of the scramble mutation operator, which randomly permutes a subset of the ground set. To be comparable with the previous operator, we choose again a number $k$ from a Poisson distribution with expectation $\lambda = 1$, then choose a random set of $k$ elements from the ground set $[1..n]$, and randomly permute these in our given permutation. For this operator, we prove that the runtime of the \oea on jump functions with jump size $m$ becomes $\Theta(n^m)$ regardless of the parity of~$m$, hence a factor of $\Theta(n)$ less when $m$ is odd. Both from the more natural result and the easier proof, we would speculate that this is a superior way of performing mutation on permutation spaces. For reasons of completeness, we prove that this operator also leads to a $\Theta(n^3)$ runtime on the permutation-based \leadingones problem. 

Finally, we analyze the performance of a heavy-tailed variant of the scramble mutation operator. For bit-string representations, it was observed in~\cite{DoerrLMN17} that heavy-tailed mutation operators, and more generally heavy-tailed parameter choices~\cite{AntipovBD21gecco}, can greatly speed up the runtime of evolutionary algorithms. In particular, on jump functions with gap size $m$ the \oea with a heavy-tailed mutation rate was shown to be by a factor of $m^{\Theta(m)}$ faster than with the standard mutation rate $\frac 1n$. We show the same result for permutation-based jump functions: Choosing the number $k$ in the scramble operator not according to a Poisson distribution with expectation $\lambda=1$, but from a power-law distribution on $[1..n]$, gives a speed-up of order $m^{\Theta(m)}$.

We also analyze the runtime of this heavy-tailed EA on the permutation-based \leadingones problem. Given that this is a unimodal problem and that the previous proofs obtained the asymptotically optimal runtime via local mutations (swapping two items) only, we do not expect a runtime different from $\Theta(n^3)$. This is also the result we shall prove, however, with more technical effort than expected. The reason is that the generally higher number of items moved leads to different upper bounds for the probability to bring a certain set of items onto the right position. For example, the classic scramble operator moves a particular misplaced item onto the right position with probability $\Theta(n^{-2})$, but the heavy-tailed one does so with probability $\Theta(n^{-\beta})$, where $\beta$ is the power-law exponent (which can be arbitrarily close to one). 

\new{To see if the insights stemming from our asymptotic analysis are visible already for realistic problems sizes, we conduct a small empirical analysis as well. We defer the details to Section~\ref{sec:experiments} and note here only that the different rates of void mutations (mutations that create an offspring equal to the parent) of the different operators have a significant impact on the performance. This suggests that a finetuning of the operators can give considerable performance gains over the canonical definitions of the mutation operators.
}

In summary, our results on the \leadingones and \jump benchmarks show that several arguments and methods from the bit-string world can easily be extended to permutation search spaces, however, the combinatorially richer structure of the set of permutations also leads to new challenges and new research problem such as what is the best way to perform mutation. From our results on \jump functions, we would rather suggest to use scramble mutations than swap mutations, and rather with a heavy-tailed mutation strength than with a Poisson distributed one. We hope that our general way to translate bit-string benchmarks into permutation-based benchmarks eases the future development of the mathematical analysis of permutation-based evolutionary algorithms, a subfield where, different from bit-string representations, many fundamental questions have not yet been studied under a theoretical perspective.

\section{Previous Work}\label{sec:previous}

In this section, we describe the most relevant previous works. \new{In the interest of brevity}, we only concentrate on runtime analysis works, knowing well that other theoretical aspects have been studied for permutation problems as well. Since the theory of evolutionary algorithms using bit-string representations has started with and greatly profited from the analysis how simple EAs optimize polynomial-time solvable problems, we mostly focus on such results.

To the best of our knowledge, the first mathematical runtime analysis for a permutation-based problem is the study of how the \oea can be used to sort an array of $n$ elements, which is formulated at the optimization problem of maximizing the sortedness of a permutation~\cite{ScharnowTW04}. In that work, several mutation operators are proposed for permutations. Imitating the classic bit-wise mutation operator with mutation rate $\frac 1n$, which flips a number of bits that asymptotically follows a Poisson law with expectation $\lambda = 1$, a random number $k$ is chosen according to such a Poisson law and then $k+1$ elementary mutations are performed\footnote{The change from the natural value $k$ to $k+1$ was done in~\cite{ScharnowTW04} because for the problems regarded there, a mutation operation that returns the parent, that is, the application of $k=0$ elementary mutations, cannot be profitable. It is easy to see, however, that all results in~\cite{ScharnowTW04} remain valid when using $k$ elementary mutations as mutation operator.}. As elementary mutations, exchanges of two neighboring elements (called ``swap'' in~\cite{ScharnowTW04}), exchanges two arbitrary elements (called ``exchange'' in~\cite{ScharnowTW04}, but ``swap'' in the textbook~\cite{EibenS15}), jumps and reversals were proposed. Since the majority of the results in~\cite{ScharnowTW04} concern exchange mutations, we shall only discuss these here. We shall adopt the language of~\cite{EibenS15} though and call these ``swaps''. A swap thus swaps two random different elements in the word notation of a permutation, or, equivalently, replaces the current permutation $\sigma$ by $\tau \circ \sigma$, where $\tau$ is a random transposition ($2$-cycle) on the ground set $[1..n]$.

We omit the results for some measures of sortedness and only state the result most relevant for our work, namely that if the sortedness is measured by the number of items that are placed correctly, that is, the fitness is $\HAM(\sigma) = |\{i \in [1..n] \mid \sigma(i) = i\}|$, then the \oea with swap-based mutation operator takes an expected number of $\Theta(n^2 \log n)$ iterations to sort a random permutation. 

The seminal work~\cite{ScharnowTW04} has seen surprisingly little follow-up work on permutation-based EAs. There is a second early work on sorting~\cite{DoerrH08} regarding a tree-based representation and a series of works on how the choice of the (problem-specific) mutation operator influences the complexity of computing Eulerian cycles~\cite{Neumann08,DoerrHN07,DoerrKS07,DoerrJ07gecco}. In~\cite{CorusDEL18}, the sorting problem appears in one of several applications of the level-based method to analyze non-elitist algorithms. In~\cite{GavenciakGL19}, sorting via swaps in the presence of noise is investigated. Finally, in~\cite{BassinB20} it is discussed how to adjust the \ollga to permutation spaces and then an $O(n^2)$ runtime of the resulting algorithm on the sorting problem with $\HAM$ fitness is proven. Slightly less related to the focus of this work, there is an interesting a sequence of results on how EAs optimize NP-hard variants of the travelling salesman problem (TSP) in the parameterized complexity paradigm~\cite{CorusLNP16, SuttonN12, SuttonNN14}, works on finding diverse sets of TSP solutions~\cite{DoBNN20,DoGNN21}, a fixed-budget analysis for the TSP~\cite{NallaperumaNS17}, and a result on how particle swarm algorithms solve the sorting problem~\cite{MuhlenthalerRSW22}.

In summary, there are a few runtime analyses for permutation search spaces, however much fewer than for bit-string representations and strongly concentrated on very few problems.

\section{Preliminaries: Basic Notation, Permutations, and the Permutation-based \oea}\label{sec:prelims}

In this section, we define the notation used in the remainder of the paper and we describe the permutation-based \oea from~\cite{ScharnowTW04}. 

We write $[a..b] := \{z \in \Z \mid a \le z \le b\}$ to denote the set of integers between $a$ and $b$, where $a$ and $b$ can be arbitrary real numbers. We denote the problems size of an algorithmic problem by $n$. When using asymptotic notations such as $O(\cdot)$ or $\Theta(\cdot)$, these will be with respect to $n$, that is, for $n$ tending to $\infty$.

A mapping $\sigma : [1..n] \to [1..n]$ is called \emph{permutation} (of $[1..n]$) if it is bijective. As common, we denote by $S_n$ the set of all permutations of $[1..n]$. Different from some branches of algebra and combinatorics that regard permutation groups, we use the standard composition $\circ$ of permutations: For $\sigma, \tau \in S_n$, the permutation $\tau \circ \sigma$ is defined by $(\tau \circ \sigma)(i) = \tau(\sigma(i))$ for all $i \in [1..n]$. 

We recall that there are two common notations for {permutations}. The most intuitive one is to describe the permutation $\sigma \in S_n$ via the vector (``word'') of its images, that is, we write $\sigma = (\sigma(1), \sigma(2), \dots, \sigma(n))$. 
%This word notation gives rise to a natural language on permutations, e.g., we say that $i \in [1..n]$ is \emph{in place} (in $\sigma$) when $\sigma(i) = i$
To understand the structure of a permutation, the \emph{cycle notation} is more convenient. A \emph{cycle} of length $k$, also called $k$-cycle, is a permutation $\sigma \in S_n$ such that there are pair-wise distinct elements $i_1, \dots, i_k \in [1..n]$ such that $\sigma(i_j) = i_{j+1}$ for all $j \in [1..k-1]$, $\sigma(i_k) = i_1$, and $\sigma(i) = i$ for all $i \in [1..n] \setminus \{i_1, \dots, i_k\}$. The notation $\sigma = (i_1 \dots i_k)$ is standard for such a cycle. Two cycles $(i_1 \dots i_k)$ and $(j_1 \dots j_\ell)$ are called disjoint if they are moving different elements, that is, if $\{i_1, \dots, i_k\}$ and $\{j_1, \dots, j_\ell\}$ are disjoint sets. Every permutation can be written as composition of disjoint cycles of length at least $2$. This \new{\emph{cycle decomposition}} is unique apart from the order of the cycles in the composition, which however is not important since disjoint cycles commute, that is, satisfy $\sigma \circ \tau = \tau \circ \sigma$. To ease the writing, the $\circ$ symbols are usually omitted in the cycle notation. For example $\sigma = (12)(345)$ is the cycle notation of the permutation $\sigma = (2,1,4,5,3)$ in word notation. We finally recall the fact that every permutation $\sigma \in S_n$ can be written as composition of (usually not disjoint) $2$-cycles (called transpositions). This writing is not unique. For a $k$-cycle $\sigma = (i_1 \dots i_k)$, a shortest way to write it as composition of transpositions uses $k-1$ transpositions, e.g., $\sigma = (i_1 i_2) \circ (i_2 i_3) \circ \dots \circ (i_{k-1} i_k)$. Consequently, a permutation that is the product of $\ell$ disjoint cycles of lengths $k_1, \dots, k_\ell$ can be written as product of $\sum_{i=1}^{\ell} (k_i - 1)$ transpositions, but not of fewer. 

We note the following well-known fact about fixed points of random permutations, which we shall frequently use to estimate the quality of random initial solutions. We note that much stronger results are known, e.g., that the probability to have a constant number of exactly $k$ fixed points is $(1\pm o(1)) \frac{1}{ek!}$, that is, asymptotically follows a Poisson law with mean~$1$~\cite{Remond13} \new{(for a more recent and accessible treatment of this topic, see the chapter on derangements and rencontre numbers in any good combinatorics textbooks)}.

\begin{lemma}[\cite{Remond13}]\label{lem:fp}
  Let $\sigma \in S_n$ be random. Denote by $\fp(\sigma) := |\{i \in [1..n] \mid \sigma(i) = i\}|$ the number of fixed points of $\sigma$. Then $\Pr[\fp(\sigma)=0] = \frac 1 e \pm o(1)$, where the asymptotic notation refers to $n$ tending to infinity.
\end{lemma}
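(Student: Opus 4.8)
The plan is to compute $\Pr[\fp(\sigma)=0]$ exactly by inclusion--exclusion and then pass to the limit. For each $i \in [1..n]$, let $A_i$ denote the event that $i$ is a fixed point, i.e., $A_i = \{\sigma(i) = i\}$. Since $\sigma$ is a uniformly random permutation, having no fixed point means that $\sigma$ avoids all of the $A_i$, so that
\[
  \Pr[\fp(\sigma)=0] = \Pr\Big[\bigcap_{i=1}^n \overline{A_i}\Big] = 1 - \Pr\Big[\bigcup_{i=1}^n A_i\Big].
\]
The first step is therefore to evaluate $\Pr[\bigcup_i A_i]$ by the inclusion--exclusion principle.

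The key computation is the probability of a $k$-fold intersection. For a fixed set $S \subseteq [1..n]$ with $|S| = k$, the event $\bigcap_{i \in S} A_i$ prescribes the images of the $k$ elements of $S$ and leaves the remaining $n-k$ elements to be permuted arbitrarily; hence $\Pr[\bigcap_{i \in S} A_i] = (n-k)!/n!$, which depends only on $k$ and not on $S$. Since there are $\binom{n}{k}$ such sets, the $k$-th inclusion--exclusion term sums to
\[
  \sum_{|S| = k} \Pr\Big[\bigcap_{i \in S} A_i\Big] = \binom{n}{k} \frac{(n-k)!}{n!} = \frac{1}{k!}.
\]
Plugging this into inclusion--exclusion and taking the complement yields the exact identity
\[
  \Pr[\fp(\sigma)=0] = 1 - \sum_{k=1}^n (-1)^{k-1} \frac{1}{k!} = \sum_{k=0}^n \frac{(-1)^k}{k!}.
\]

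It remains to compare this partial sum with its limit. Recognizing the right-hand side as the $n$-th partial sum of the series for $e^{-1} = \sum_{k=0}^\infty (-1)^k/k!$, the error equals the tail $\sum_{k=n+1}^\infty (-1)^k/k!$, which, being an alternating series with decreasing terms, is bounded in absolute value by its first omitted term $1/(n+1)!$. Thus $|\Pr[\fp(\sigma)=0] - e^{-1}| \le 1/(n+1)! = o(1)$, which is exactly the claim. I do not expect a genuine obstacle here: the argument is a textbook application of inclusion--exclusion, and the only point requiring a little care is the final tail estimate, where one should invoke the alternating-series bound (or simply $\sum_{k > n} 1/k! = O(1/(n+1)!)$) rather than a crude term-by-term estimate.
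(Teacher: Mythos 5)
Your proof is correct. The paper does not prove this lemma at all---it cites it as a classical fact going back to de Montmort and points the reader to the standard treatment of derangements in combinatorics textbooks---and the inclusion--exclusion argument you give, with the exact identity $\Pr[\fp(\sigma)=0]=\sum_{k=0}^n \frac{(-1)^k}{k!}$ and the alternating-series tail bound $\frac{1}{(n+1)!}$, is precisely that textbook proof; it even yields a quantitatively much stronger error term than the $o(1)$ the lemma asks for.
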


%\begin{proof}
  %By symmetry, the probability that a fixed $i \in [1..n]$ is a fixed point of~$\sigma$, is exactly~$\frac 1n$. By linearity of expectation, the expected number $E[\fp(\sigma)]$ of fixed points of $\sigma$ is exactly one. By Markov's inequality, we have $\Pr[\fp(\sigma) \ge k+1] \le \frac 1 {k+1}$, which proves the claim.
%\end{proof}

We finally discuss the evolutionary algorithm (EA) considered in this study. As in most previous theoretical works, we shall regard a very simple EA. This is justified both by the fact that many questions cannot be answered for more complicated algorithms and by the fact that simple algorithms consisting essentially of only one component allow a more focused study of this component. With this reasoning, as in the classic first theory works on EAs for bit-string representations, we shall regard the \oea, which is essentially a hill-climber using a mutation operator to create new solutions. In this sense, we are following the approach of the first runtime analysis work on permutation-based EAs~\cite{ScharnowTW04}. As sketched in the introduction already, a number of different mutation operators was proposed in~\cite{ScharnowTW04}, but the most promising results were obtained by building on swap operations. We first note that if $\sigma = (i_1, \dots, i_n)$ in word notation and $\tau$ is the transposition swapping $i_k$ and $i_\ell$ (that is, $\tau = (i_k i_\ell)$ in cycle notation), then $\tau \circ \sigma = (j_1, \dots j_n)$ with $j_k = i_\ell$, $j_\ell = i_k$, and $j_a = i_a$ for all $a \in [1..n] \setminus \{i_k,i_\ell\}$. In other words, we obtain the word representation for $\tau \circ \sigma$ by swapping $i_k$ and $i_\ell$ in the word representation of $\sigma$. 

It is clear that a local mutation operator such as a single random swap is not enough to let an EA \newer{leave a local optimum to a better solution}. Noting that the classic bit-wise mutation operator for bit-string representations (that flips each bit independently with probability~$\frac 1n$) performs a number of local changes (bit-flips) that asymptotically follows a Poisson law with parameter $\lambda = 1$, the authors of~\cite{ScharnowTW04} argue that it is a good idea in the permutation-case to sample a number $k \sim \Poi(1)$ and then perform $k$ random swap operation\new{s}. Since in their application mutation operations that return the parent cannot be useful, they exclude the result of zero swaps by deviating from this idea and instead performing $k+1$ random transposition\new{s}. To ease the comparison with the bit-string case, we shall not follow this last idea and perform instead $k \sim \Poi(1)$ random transpositions as mutation operation. We note that in many EAs for bit-string representations, zero-bit flips cannot be profitable as well, but nevertheless the standard bit-wise mutation operator is used, which with constant probability flips no bit.

With these considerations, we arrive at the permutation-based \oea described in Algorithm~\ref{alg:oea}.
\begin{algorithm}
\caption{The permutation-based \oea for the maximization of a given function $f : S_n \to \R$. It is identical to the one in~\cite{ScharnowTW04} except that we perform only $k$ random swaps, not $k+1$.}
\label{alg:oea}
\begin{algorithmic}[1] 
%\REQUIRE $n \in \mathbb {N} $
\State Choose $\sigma\in S_n$ uniformly at random
\Repeat
\State Choose $k \sim \Poi(1)$
\State Choose $k$ transpositions $T_1,T_2,...,T_k$ independently and uniformly at random
\State $\sigma'\leftarrow T_k \circ T_{k-1} \circ ... \circ T_1\circ \sigma$
\If{$f(\sigma')\geq f(\sigma)$}
\State $\sigma \leftarrow \sigma'$
\EndIf
\Until forever
\end{algorithmic}
\end{algorithm}

\section{Benchmarks for Permutation-based EAs}\label{sec:construction}

As discussed in the introduction, the theory of evolutionary computation has massively profited from having a small, but diverse set of benchmark problems. These problems are simple enough to admit mathematical runtime analyses for a broad range of algorithms including more sophisticated ones such as ant colony optimizers or estimation-of-distribution algorithms. At the same time, they cover many aspects found in real-world problems such as plateaus and local optima. Being synthetic examples, they often come with parameters that allow one to scale the desired property, say the radius of attraction of a local optimum.

Such an established and generally accepted set of benchmarks is clearly missing for permutation-based EAs, which might be one of the reasons why this part of EA theory is less developed. To overcome this shortage, and to do this in a natural and systematic manner, ideally profiting to the maximum from the work done already for EAs using bit-string representations, we now propose a simple way to transform benchmarks for pseudo-Boolean optimization into permutation-based problems. We are sure that future work on permutation-based EAs will detect the need for benchmarks which cannot be constructed in this way, but we are confident that our approach sets a good basis for a powerful sets of benchmarks for permutation-based EAs. 

We note that there are different classes of permutation-based problems. In problems of the \emph{assignment type}, we have two classes of $n$ elements and the task is to assign each member of the first class to a member of the second in a bijective fashion. The quadratic assignment problem or the stable marriage problem are examples for this type. In problems of the \emph{order type}, we have precedence relations that must be respected or that are profitable to be respected. Such problems occur in production \new{scheduling, where a given set of jobs have to be placed on a given machine in an optimal order}. Finally, in problems of the \emph{adjacency type}, it is important that certain items are placed right before another one (possibly in a cyclic fashion). The travelling salesman problem is the classic hard problem of this type, the Eulerian cycle problem is a polynomial-time solvable example. We note that the order and adjacency types were, also under these names, already described in~\cite[p.~68]{EibenS15}. Due to the different nature of these types of problems, it appears difficult to define benchmarks that are meaningful for all types. We therefore restrict ourselves to defining benchmarks that appear suitable for the assignment type. 

In an assignment type permutation-based problem, what counts is that each element of the first class is assigned to the right element of the second class. Without loss of generality, we may assume that both classes are equal $[1..n]$. Then each possible solution to this type of problem is described by a permutation $\sigma \in S_n$. Since the way we number the elements of the original sets is arbitrary, we can without loss of generality assume that the optimal solution is the identity permutation, that is, the $\sigma$ such that $\sigma(i) = i$ for all $i \in [1..n]$. With this setup, each permutation $\sigma \in S_n$ defines a bit-string $x(\sigma)$ which indicates which of the elements are already assigned correctly, namely the string $x(\sigma) \in \{0,1\}^n$ defined by $x(\sigma)_i = 1$ if and only if $\sigma(i) = i$. Now an arbitrary $f : \{0,1\}^n \to \R$ defines a permutation-based problem $g : S_n \to \R$ via $g(\sigma) := f(x(\sigma))$ for all $\sigma \in S_n$. 

This construction immediately defines permutation-based versions of the classic benchmarks such as \onemax, \leadingones, and \jump functions. We note that the sorting problem with the $\HAM$ fitness function regarded in~\cite{ScharnowTW04} is exactly what we obtain from applying this construction to the classic \onemax benchmark. We are not aware of any other classic benchmark for which the permutation-based variant (as constructed above) has been analyzed so far. Being the next most prominent benchmarks after \onemax, in the remainder of this work we shall conduct a mathematical runtime analysis for the permutation variants of the \leadingones and \jump benchmarks. 
% In our endeavor to give a rigorous runtime analysis of a permutation-based $(1+1)$ EA, we use a systematic approach to build from a given fitness function defined on bit-strings $f : \{0,1 \} ^n \rightarrow \mathbb{R}$, its analogue $g : \frak{S}_n \rightarrow \mathbb{R}$
% $$\forall \sigma \in\frak{S}_n\;\;,\;\;g(\sigma)\coloneqq f((1_{\sigma(i)=i})_{1 \leq i \leq n})$$

\section{Runtime Analysis for the Permutation-\LeadingOnes Benchmark}\label{sec:LO}

We start our runtime analysis work for permutation-based EAs with an analysis of the runtime of the \oea on the permutation version of the \leadingones benchmark. 

\subsection{Definition of the Problem}

The classic \textsc{LeadingOnes} benchmark on bit-strings was defined by Rudolph~\cite{Rudolph97} as an example for a unimodal function that is harder for typical EAs than \onemax, but still unimodal. The \leadingones functions counts the number of successive ones from left to right, that is, we have \[\LeadingOnes(x)\coloneqq\sum\limits_{i=1}^n \prod\limits_{k=1}^i x_k = \max\{i \in [0..n] \mid \forall j \in [1..i] : x_j = 1\}\]
for all $x = (x_1,...,x_n) \in \{0,1 \}^n$. 

\LeadingOnes has quickly become an intensively studied benchmark in evolutionary computation. The \oea optimizes \LeadingOnes in quadratic time, as has been shown in~\cite{Rudolph97} (upper bound) and~\cite{DrosteJW02} (lower bound).

From our general construction principle for permutation-based benchmarks proposed in Section~\ref{sec:construction}, we immediately obtain the following permutation-variant \PLeadingOnes of this problem. For all $\sigma \in S_n$, let
\begin{align*}
\PLeadingOnes(\sigma) 
&\coloneqq  \LeadingOnes(x(\sigma)) \\
 &= \max\{i \in [0..n] \mid \forall j \in [1..i] : \sigma(j)=j\}.
\end{align*}

\subsection{Runtime Analysis}

We now show that the expected runtime of the permutation-based \oea on \PLeadingOnes is $\Theta(n^3)$. As in the bit-string case, this result follows from a fitness level argument (upper bound) and the argument that a typical run will visit a linear number of fitness levels. This second argument is actually easier in the permutation setting: We can show that the probability to gain three or more levels in one iteration is so small that with constant probability this does not happen in $O(n^3)$ iterations. Hence in this time, each iteration can increase the fitness by at most two. Since any improvement takes $\Omega(n^2)$ expected time and, when assuming that no fitness gains of more than two happen, $\Omega(n)$ improvements are necessary to reach the optimum, an $\Omega(n^3)$ lower bound for the runtime follows. 

\begin{lemma}\label{LeaInc}
  In each iteration of a run of the permutation-based \oea (Algorithm~\ref{alg:oea}) on the \PLeadingOnes benchmark, the probability of a fitness improvement is at most $\frac{6}{(n-1)^2}$.
\end{lemma}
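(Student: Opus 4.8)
The plan is to reduce a fitness improvement to a single necessary event about where one particular element is sent, and then to control that event by following the trajectory of that element under the random product of transpositions. This avoids having to reason about the combined effect of several swaps at once.

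First I would fix the current state. Suppose $\PLeadingOnes(\sigma) = i$, so that $\sigma(j) = j$ for all $j \in [1..i]$ while $\sigma(i+1) \neq i+1$ (the case $i = n$ gives improvement probability $0$, and $i = n-1$ cannot occur, since a permutation fixing $n-1$ points fixes all of them). Any fitness improvement forces the offspring to satisfy $\sigma'(j) = j$ for all $j \in [1..i+1]$, so in particular it forces $\sigma'(i+1) = i+1$. Writing the mutation as $\sigma' = \Pi \circ \sigma$ with $\Pi := T_k \circ \dots \circ T_1$, and setting $v := \sigma(i+1) \neq i+1$, this necessary condition reads $\Pi(v) = i+1$. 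Hence the probability of improvement is at most $\Pr[\Pi(v) = i+1]$, and by the symmetry of uniformly random transpositions this equals $\Pr[\Pi(v) = w]$ for any fixed $w \neq v$.

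Next I would analyze the image of $v$ step by step. Set $X_0 := v$ and $X_j := T_j(X_{j-1})$, so that $X_k = \Pi(v)$. Since each $T_j$ is an independent uniform transposition, the current element $X_{j-1}$ is left unchanged unless it is one of the two swapped elements, which happens with probability exactly $\frac{2}{n}$ independently of $X_{j-1}$ and of the past; conditioned on being moved, $X_j$ is uniform over the remaining $n-1$ elements. Thus $(X_j)$ is a symmetric Markov chain, and starting from $X_0 = v$ its distribution at any time is supported on $v$ together with the uniform distribution on the other $n-1$ states, so that for $k \geq 1$ and any $w \neq v$ we have $\Pr[X_k = w] = \frac{\Pr[X_k \neq v]}{n-1}$. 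A union bound over the $k$ steps then gives $\Pr[X_k \neq v] \le \frac{2k}{n}$, whence $\Pr[X_k = i+1] \le \frac{2k}{n(n-1)}$. Averaging over $k \sim \Poi(1)$ and using $\sum_{k \ge 1} \frac{e^{-1}}{k!}\,k = 1$ yields
\[
\Pr[\text{improvement}] \;\le\; \sum_{k \ge 1} \frac{e^{-1}}{k!}\cdot \frac{2k}{n(n-1)} \;=\; \frac{2}{n(n-1)} \;\le\; \frac{6}{(n-1)^2},
\]
which is the claim, in fact with room to spare.

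The only genuine obstacle is conceptual rather than computational: because the mutation is a \emph{sequential} product of random transpositions rather than a single swap of two random positions, one cannot simply argue ``there is exactly one good swap.'' Tracking a single element's trajectory is what linearizes the problem into the symmetric Markov chain above, after which the displacement estimate and the Poisson average are elementary. The remaining points to verify are routine: the exactness of the per-step move probability $\frac{2}{n}$ and its independence from the history, and the symmetry argument identifying the non-$v$ states; the degenerate small-$n$ cases and the impossibility of $i = n-1$ do not affect the bound.
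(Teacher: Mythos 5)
Your proof is correct, and it takes a genuinely different route from the paper's. The paper bounds the improvement probability by the event that the two relevant elements ($\sigma(i+1)$ and $i+1$) both appear among the $2k$ elements making up the $k$ transpositions, which costs roughly $\binom{2k}{2}(n-1)^{-2}$ per value of $k$ and is then summed using the first two moments of the $\Poi(1)$ distribution, giving $\frac{6}{(n-1)^2}$. You instead isolate the single necessary event $\Pi(v)=i+1$ for $v=\sigma(i+1)$ and analyze the trajectory $X_0=v$, $X_j=T_j(X_{j-1})$ as a symmetric random walk: the exchangeability of the law of $X_k$ over $[1..n]\setminus\{v\}$ converts the displacement probability $\Pr[X_k\neq v]\le \frac{2k}{n}$ into $\Pr[X_k=i+1]\le\frac{2k}{n(n-1)}$, and averaging over $k$ only needs the first Poisson moment. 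All the steps check out (the per-step move probability $\frac 2n$ is indeed history-independent, and the symmetry argument is valid since the law of the walk is invariant under conjugation by permutations fixing $v$). Your bound $\frac{2}{n(n-1)}$ is in fact sharper than the paper's and matches, up to the factor $e$, the lower bound $\frac{2}{en(n-1)}$ used in the proof of the $\Theta(n^3)$ theorem. The one advantage of the paper's cruder counting argument is that it extends verbatim to the event that \emph{three} elements must be touched (needed for the bound on $\Pr[A_i\mid\overline{B_i}]$ in the lower-bound proof of Theorem~\ref{thm:lo}), whereas your single-trajectory walk would have to be upgraded to a joint analysis of several walkers to serve that purpose.
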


\begin{proof}
To increase the fitness via a mutation operation, it is necessary that the first element that is not in place is moved away from its position and that the correct element is moved  there. In particular, these two elements have to be among the $2k$ elements (counted with repetition) the $k$ transpositions are composed of. We recall that the probability that $k$ transpositions are applied as mutation is $\frac{1}{ek!}$. Hence the probability for this latter event is at most 
\[
\sum_{k = 0}^\infty\frac{2}{e \cdot k!}\binom{2k}{2}\left(\frac{1}{n-1}\right)^2.
\]
\new{We compute
\[
    \sum_{k = 0}^\infty\frac{1}{e k!}\binom{2k}{2} =  \sum_{k = 1}^\infty\frac{k (2k-1)}{ek!}
=  2\sum_{k = 0}^\infty\frac{k^2}{ek!} - \sum_{k = 0}^\infty\frac{k}{ek!}
\]
and note that $\sum_{k = 0}^\infty\frac{k}{ek!}$ and $\sum_{k = 0}^\infty\frac{k^2}{ek!}$ are the first and second moment of the Poisson distribution with parameter $\lambda = 1$. These moments are well-known and are equal, in the general case, to $\lambda$ and $\lambda^2 + \lambda$ (the latter is possibly better known as the equivalent statement that the variance of this distribution is~$\lambda$). Hence 
\[
\sum_{k = 0}^\infty\frac{1}{e k!}\binom{2k}{2} = 3
\] 
and thus 
\[
\sum_{k = 0}^\infty\frac{2}{e \cdot k!}\binom{2k}{2}\left(\frac{1}{n-1}\right)^2
 = 6\left(\displaystyle\frac{1}{n-1}\right)^2.\qedhere
\]}
\end{proof}

\begin{theorem}\label{thm:lo}
  The expected runtime of the permutation-based \oea on \PLeadingOnes is $\Theta(n^3)$.
\end{theorem}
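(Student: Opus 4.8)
The plan is to prove the matching upper and lower bounds separately, both building on the fitness-level structure of \PLeadingOnes together with the key probabilistic estimate of Lemma~\ref{LeaInc}.

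For the \textbf{upper bound} of $O(n^3)$, I would use a standard fitness level argument. The fitness levels are the values $i \in [0..n]$ that \PLeadingOnes can take, and there are $n+1$ of them. When the current fitness is $i < n$, the permutation has $\sigma(j) = j$ for all $j \le i$ but $\sigma(i+1) \ne i+1$. To make progress it suffices to perform a single transposition that swaps the misplaced element currently sitting in position $i+1$ with the element $i+1$ (wherever it is), while leaving positions $1,\dots,i$ untouched. The probability of applying exactly $k=1$ transpositions is $\frac{1}{e}$, and the probability that this single transposition is the specific required one is $\binom{n}{2}^{-1} = \Theta(n^{-2})$. Such a move never decreases the fitness below $i$ (positions $1,\dots,i$ stay fixed and position $i+1$ becomes correct), so it is always accepted. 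Hence the probability of leaving level $i$ is $\Omega(n^{-2})$, giving an expected waiting time of $O(n^2)$ per level. Summing over the at most $n$ levels yields the $O(n^3)$ bound by the fitness level theorem. I would need to be slightly careful that being on level $i$ does not prescribe where the other elements sit, but this does not matter since I only lower-bound the success probability by one favorable transposition.

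For the \textbf{lower bound} of $\Omega(n^3)$, the idea sketched in the text is that a typical run visits $\Omega(n)$ distinct fitness levels, and each level-increasing step costs $\Omega(n^2)$ in expectation, so the total is $\Omega(n^3)$. The first step is to bound the expected waiting time for any improvement from below: by Lemma~\ref{LeaInc}, any fitness improvement has probability at most $\frac{6}{(n-1)^2}$ per iteration, so the expected number of iterations between consecutive improvements is at least $\frac{(n-1)^2}{6} = \Omega(n^2)$. The second, more delicate step is to argue that $\Omega(n)$ improvements actually occur. Here I would invoke the observation highlighted in the introduction: while the fitness is below $\frac n2$, it is very unlikely to gain three or more levels in a single step. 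Concretely, gaining three levels from fitness $i$ requires that elements $i+1, i+2, i+3$ all become fixed simultaneously, which needs at least two of them to be moved into place and hence involves several specific transpositions; I expect this event to have probability $O(n^{-3})$ per iteration (analogous to the $\binom{2k}{2}$ computation in Lemma~\ref{LeaInc}, but now requiring a triple coincidence). Taking a union bound over the $O(n^3)$ iterations of a run of that length, the probability that a triple jump ever happens is bounded by a constant strictly less than one. On the complementary constant-probability event, every improvement raises the fitness by at most two, so starting from an initial fitness of $O(1)$ (by Lemma~\ref{lem:fp}, the initial permutation has $O(1)$ fixed points and in particular \PLeadingOnes value $O(1)$ with constant probability) the run needs at least $\frac n2 - O(1) = \Omega(n)$ improvements just to reach fitness $\frac n2$.

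Combining the two pieces gives the lower bound: conditioned on the constant-probability good event (no triple jumps, small initial fitness), the run performs $\Omega(n)$ improvements, each taking $\Omega(n^2)$ expected iterations, for a total expected runtime of $\Omega(n^3)$; since this good event has constant probability, the unconditional expected runtime is also $\Omega(n^3)$. Together with the $O(n^3)$ upper bound this establishes the $\Theta(n^3)$ claim. I expect the \textbf{main obstacle} to be making the ``$\Omega(n)$ improvements with at most two levels each'' argument fully rigorous: one must carefully estimate the probability of a three-level jump (establishing the $O(n^{-3})$ bound, ideally uniformly over the relevant fitness values), and then combine the union bound over the run length with the expected per-improvement cost, for instance via a stopping-time or Wald-type argument or by conditioning on the good event as above. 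The quantitative factor $\frac n2$ is convenient because below it there are always enough misplaced elements that a single lucky step cannot shortcut the process, which is precisely why the text restricts the triple-jump estimate to fitness below $\frac n2$.
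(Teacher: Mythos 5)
Your upper bound is essentially the paper's argument, and the overall architecture of your lower bound (bound the probability of a three-level gain, union-bound over $\Theta(n^3)$ iterations, conclude that $\Omega(n)$ improvements at $\Omega(n^2)$ expected cost each are needed) is also the paper's. However, there is a genuine gap at exactly the step you flag as the main obstacle: the claimed $O(n^{-3})$ bound on the probability of gaining three or more levels. Your justification --- that such a gain ``needs at least two of \{$i+1$, $i+2$, $i+3$\} to be moved into place'' --- is false. If the current state happens to satisfy $\sigma(i+2)=i+2$ and $\sigma(i+3)=i+3$ already (free-riders), then a mutation that merely puts $i+1$ in place, an event of probability $\Theta(n^{-2})$, yields a gain of three or more levels. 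Your closing remark that below fitness $\frac n2$ ``there are always enough misplaced elements that a single lucky step cannot shortcut the process'' is also not a valid repair, because it is not true deterministically: a permutation of fitness $i$ can have every element except $i+1$ and one other in place. The paper closes this gap probabilistically. It introduces the event $B_i$ that $i+2$ or $i+3$ is already in place, observes that the elements not contributing to the fitness are uniformly distributed over the remaining positions, so that $\Pr[B_i]\le \frac{2}{n-i-1}$, which is $O(n^{-1})$ precisely because $i<\frac n2$ (this is where the $\frac n2$ threshold is really used), and then decomposes $\Pr[A_i]\le\Pr[A_i\mid B_i]\Pr[B_i]+\Pr[A_i\mid \overline{B_i}]$, bounding the first product by $O(n^{-2})\cdot O(n^{-1})$ via Lemma~\ref{LeaInc} and the second term by $O(n^{-3})$ via the three-elements-touched computation you describe. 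Without some version of this free-rider control, the union bound over $\Theta(n^3)$ iterations does not go through.

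A second, more minor point: converting ``$\Omega(n)$ improvements, each costing $\Omega(n^2)$ in expectation'' into an $\Omega(n^3)$ bound via a Wald-type argument needs care, since the improvement events are neither independent nor identically distributed along the (random) trajectory. The paper sidesteps this by fixing a horizon $t=\Theta(n^3)$ and bounding the probability that $\Delta=\lceil n/4\rceil$ improvements all occur within $t$ iterations by the binomial tail $\binom{t}{\Delta}\left(\frac{6}{(n-1)^2}\right)^{\Delta}$, a union bound over the possible positions of the improving iterations. Your conditioning-on-the-good-event variant can be made to work, but it still requires such a quantitative tail estimate rather than only the expectation of the per-improvement waiting time.
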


\begin{proof}
If the current state $\sigma$ is such that $\textsc{PLeadingOnes}(\sigma) = i$, then the element $i+1$ is at some position $j$ with $j > i + 1$. Thus, a transposition between $i+1$ and $j$ increases the fitness by at least 1. Picking this transposition as a random transposition has probability $ \frac{2}{n(n-1)}$. Thus the probability of increasing the fitness with one local operation (which happens with probability $\frac1e$) is at least $\frac{2}{e{n(n-1)}}$. Needing at most $n$ of such steps, the expected waiting time can be bounded from above by $e\frac{n^2(n-1)}{2} = O(n^3)$; this argument is known as Wegener's fitness level method~\cite{Wegener01}.

For the lower bound, our analysis will rely on the fact that large fitness gains occur rarely. Let us consider the event that we raise the fitness by at least 3 and call it $A_i$. Let $B_i$ be the event that elements $i+2$ or $i+3$ were in place before the mutation step. Then
\begin{align*} 
\Pr[A_i] & = \Pr[A_i \mid B_i]\Pr[B_i] + \Pr[A_i \mid \olsi{B_i}]\Pr[\olsi{B_i}] \\
&\leq \Pr[A_i \mid B_i]\Pr[B_i] + \Pr[A_i \mid \olsi{B_i}].
\end{align*}

To increase the fitness by at least 3, when neither $i+2$ nor $i+3$ were in place, we need that $i+1$, $i+2$ and $i+3$ be amongst the elements touched by some transposition of the mutation step. We can hence bound $ \Pr[A_i \mid \olsi{B_i}]$ by
\begin{align*} 
\Pr[A_i \mid \olsi{B_i}] 
& \le \sum_{k = 0}^\infty\frac{3!}{e \cdot k!}\binom{2k}{3} \left(\frac{1}{n-1}\right)^3 \\
& = \frac{4}{(n-1)^3} \left(2\sum_{k = 0}^\infty\frac{k^3}{ek!} - 3\sum_{k = 0}^\infty\frac{k^2}{ek!} + \sum_{k = 0}^\infty\frac{k}{ek!}\right) \\
& = {20}\left(\displaystyle\frac{1}{n-1}\right)^3,
\end{align*}
where we used that the second and third moment of a Poisson distribution with parameter $\lambda$ are $\lambda^2 + \lambda$  and $\lambda^3 + 3\lambda^2 +\lambda$.

Similarly, to increase the fitness in general, we need that $i+1$ and $\sigma(i+1)$ be amongst the elements touched by a transposition. Hence, by Lemma \ref{LeaInc},
\begin{align*}  
\Pr[A_i \mid B_i]  \le & \sum_{k = 0}^\infty\frac{2}{e \cdot k!}\binom{2k}{2}\left(\frac{1}{n-1}\right)^2 = 6\left(\displaystyle\frac{1}{n-1}\right)^2.
\end{align*} 

Finally, to estimate $\Pr[B_i]$, we note that, for a permutation $\sigma$ and until reaching \textsc{PLeadingOnes}($\sigma$) = $i$, the elements $i+2, i+3, \dots, n$ play symmetric roles for the decisions taken by the algorithm. Hence $i+2$ and $i+3$ are equally likely to be at any position $i+2$ through $n$, and thus $\Pr[B_i] \leq \Pr[\sigma(i+2)=i+2] + \Pr[\sigma(i+3)=i+3] \leq 2\frac{1}{n-i-1}$.
Putting these estimates together, we obtain $\Pr[A_i] \leq \frac{44}{(n-1)^3}$ for all $i \le \frac n2- \frac 12$.

Since we aim at an asymptotic result, let us assume that $n$ is at least~$4$. Let $E$ be the event of reaching fitness greater than $\frac n2 - \frac 12$, that is, at least $\frac n2$ in at most $t = \lfloor \frac{(n-1)^3}{m} \rfloor$ steps starting from a fitness of $0$, where $m$ is a constant we will explicit later. Let $F$ be the event of having at least one fitness increase of at least $3$ during this time span. If $F$ does not occur, we  need at least $\Delta = \lceil \frac n4 \rceil$ fitness improvements, giving the following bound for $m$ sufficiently large. 
\begin{align*}  
\Pr[E] & \leq \Pr[F] + \Pr[E \mid \olsi{F}\,]\\
& \leq  t \frac{44}{(n-1)^3} + \binom{t}{\Delta} \left(\frac{6}{(n-1)^2}\right)^\Delta\\
& \leq \frac{44}{m} + \left(\frac{\frac{(n-1)^3}{m}e}{\Delta}\right)^\Delta \left(\frac{6}{(n-1)^2}\right)^\Delta\\
& \leq \frac{44}{m} + \left(\frac{24e}{m}\right)^\frac{n}{4} \le \frac 12.
\end{align*}

Since $n \ge 4$, the initial random permutation has fitness 0 with probability at least $\frac 34$. Hence the expected time to reach a fitness of at least $\frac n2$ from a random initial permutation is at least $\frac 34 \Pr[\overline E] (t+1) = \Omega\left(n^3\right)$. Thus, also the unconditional expected runtime is $\Omega\left(n^3\right).$
\end{proof}

\section{Runtime Analysis for the Permutation-Jump Benchmark}\label{sec:jump}

We proceed with a runtime analysis of the permutation variant of the \jump benchmark. In contrast to our analysis for \leadingones, where mild adaptations of the proofs for the bit-string case were sufficient, we now observe substantially new phenomena, which require substantially more work in the analysis. In particular, different from the bit-string case, where all local optima of the jump function were equivalent, now the cycle structure of the local optimum is important. Consequently, the probability of jumping from a local optimum to the global one in one iteration can range from $\Theta(n^{-2(m-1)})$ to $\Theta(n^{-2\lceil m/2 \rceil})$, where $m$ is the (constant) jump parameter. By analyzing the random walk which the \oea performs on the set of local optima while searching for the global optimum, we shall nevertheless prove a runtime of order $\Theta(n^{2\lceil m/2 \rceil})$ only.

\subsection{Definition of the Problem}

The \jump benchmark as pseudo-Boolean optimization problem was proposed in~\cite{DrosteJW02}. It is the by far most studied multimodal benchmark in the theory of evolutionary algorithm\new{s} and has led to a broad set of interesting insights, mostly on crossover and on how evolutionary algorithms cope with local optima~\cite{DrosteJW02,JansenW02,DoerrDK15ecj,DangFKKLOSS16,DoerrLMN17,HasenohrlS18,DangFKKLOSS18,WhitleyVHM18,RoweA19,RajabiW21gecco,Doerr21cgajump,BenbakiBD21,RajabiW22,Doerr22,AntipovDK22}. 

We now define its permutation version, following our general construction from Section~\ref{sec:construction}. To ease the notation, let $g$ denote the function that counts the number of fixed points of a permutation, that is, the number $i \in [1..n]$ of elements that are ``in place'', that is, that satisfy $\sigma(i) = i$. By our general construction principle, this is nothing else than the permutation-variant of the \onemax benchmark. The permutation-based version of the \jump benchmark, again following our general construction, is now defined as follows.

% l. Formally:$$\forall \sigma \in\frak{S}_n\;\;,\;\;g(\sigma)\coloneqq\sum\limits_{i=1}^n\mathbb1_{\sigma(i)=i}$$
% We defined jump as in \emph{Drost, Wegener, et al., 2002}\cite{DrosteJTW02}: \\
For all $n,m\in\mathbb N$, such that $m\leq n$, let $\PJump_{n,m}$ be the map from $S_n$ to $\mathbb N$ defined by
\[\PJump_{n,m}(\sigma):=\left\{\begin{array}{l}m+g(\sigma)\;\;\;\; \text{ if }g(\sigma)\leq n-m\text{ or }g(\sigma)=n,\\
                                n-g(\sigma)\;\;\;\;\text{ otherwise.}
\end{array}\right.
\]
Since a permutation cannot have exactly $n-1$ fixed points, we see that $\PJump_{n,2}$ is equal to $g+2$, hence essentially a \onemax function. For that reason, we shall always assume $m \ge 3$.

For the complexity analysis, we define the sets
$$
\begin{matrix*}[l]
    A_1 = \{\sigma\in S_n \mid g(\sigma)>n-m \text{ and } g(\sigma)\neq n\},\\
    A_2 = \{\sigma\in S_n \mid g(\sigma)\leq n-m\}, \\
    A_2^+ = \{\sigma\in S_n \mid g(\sigma)=n-m\}, \\
    A_3 = \{\Id_{[1..n]}\}.
\end{matrix*}
$$
By definition, for all $ \left(\sigma_1,\sigma_2,\sigma_2^+,\sigma_3\right)\in A_1\times A_2\times A_2^+\times A_3$, we have
$$
\PJump(\sigma_1)<\PJump(\sigma_2) \leq \PJump(\sigma_2^+)<\PJump(\sigma_3).
$$

\subsection{Runtime Analysis, Upper Bound}

To prove an upper bound on the runtime of the permutation-based \oea on jump functions, we first show the following upper bound on the expected time spent on $A_2^+$, which will be the bottleneck for the runtime of the algorithm. 

\begin{theorem}\label{localglobalup}\label{thm:jumpUP}
Let $m\geq 3$ be a constant. The permutation-based \oea started in a local optimum finds the global optimum of $\PJump_{n,m}$ in an expected number of $O(n^{2\lceil\frac m2\rceil})$ iterations.
\end{theorem}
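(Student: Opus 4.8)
The plan is to reduce the analysis to a random walk on the plateau of local optima and then to show that this walk spends a constant fraction of its time in the configurations from which the jump to the optimum is most likely. First I would note that, started in a local optimum $\sigma \in A_2^+$ (so $g(\sigma)=n-m$), selection accepts an offspring $\sigma'$ only if $\PJump(\sigma')\ge \PJump(\sigma)=n$. Every $\sigma'$ with $n-m<g(\sigma')<n$ lies in $A_1$ and has $\PJump(\sigma')=n-g(\sigma')<m\le n$, and every $\sigma'$ with $g(\sigma')<n-m$ has $\PJump(\sigma')=m+g(\sigma')<n$; hence the only accepted moves either keep the search point in $A_2^+$ or take it directly to $\Id$. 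So the \oea performs a random walk on $A_2^+$ until, in one mutation, it hits $\Id$. I would then characterize the plateau moves: a single transposition stays in $A_2^+$ exactly when it swaps two of the $m$ misplaced elements $a,b$ with $\sigma(a)\neq b$ and $\sigma(b)\neq a$ (swapping adjacent misplaced elements, or any element together with a fixed point, changes $g$ and is rejected). Such a swap merges two cycles of the derangement on the misplaced set, or splits one of length $\ge 4$ into two parts of length $\ge 2$. By symmetry the cycle type — a partition of $m$ into parts $\ge 2$ — forms a Markov chain on the constantly many such partitions, each allowed transition having probability $\Theta(n^{-2})$ (choose $k=1$, then one of $O(1)$ suitable transpositions).

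The second ingredient is the jump probability. If the derangement of the current local optimum splits into $\ell$ disjoint cycles, then reaching $\Id$ requires the sampled transpositions to compose to $\sigma^{-1}$, whose minimal factorizations use $m-\ell$ transpositions, all supported on the misplaced set. Sampling $k=m-\ell$ transpositions that form such a factorization has probability $\Theta(n^{-2(m-\ell)})$: there is a constant number of ordered minimal factorizations, each of probability $\Theta((n^{-2})^{m-\ell})$, and $\Pr[k=m-\ell]=\Theta(1)$; larger $k$ contributes only lower-order terms. Since every cycle has length $\ge 2$ we have $\ell\le\lfloor m/2\rfloor$, with equality for the \emph{maximally split} configurations (all $2$-cycles, plus one $3$-cycle when $m$ is odd). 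These maximize the jump probability at $\Theta(n^{-2\lceil m/2\rceil})$, while any less split configuration jumps with probability smaller by at least a factor of $n^{-2}$.

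The heart of the proof is then the random-walk estimate. The cycle-type chain is irreducible: merges take any type to the single $m$-cycle, from which repeated splits reach any target partition, in particular the maximally split one. Since its (at most constantly many) nonzero transition probabilities are all of order $\Theta(n^{-2})$, up to the common time scale $n^2$ it behaves like a fixed finite irreducible Markov chain; hence all its hitting times and sojourn lengths are of order $\Theta(n^2)$, and the walk occupies the maximally split configurations during a constant fraction of all iterations. Each such iteration is an essentially history-independent Bernoulli trial that hits $\Id$ with probability $\Omega(n^{-2\lceil m/2\rceil})$, so the number $N$ of maximally split iterations before the jump is stochastically dominated by a geometric variable with $E[N]=O(n^{2\lceil m/2\rceil})$. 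As these form a constant fraction of all iterations, the expected number of iterations to reach $\Id$ is $O(n^{2\lceil m/2\rceil})$ as well.

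The step I expect to be the main obstacle is making this coupling between the rare jump and the plateau walk rigorous: the walk leaves a maximally split configuration at rate $\Theta(n^{-2})$ but jumps only at rate $\Theta(n^{-2\lceil m/2\rceil})$ with $\lceil m/2\rceil\ge 2$, so one cannot simply wait in a good configuration until the jump occurs. The clean route is a renewal argument over excursions of the cycle-type chain: bound the expected time spent in the good configurations per visit and the expected return time to them (both $\Theta(n^2)$), deduce the constant occupation fraction, and only then treat the good-configuration iterations as independent jump trials. Some additional care is needed to verify that higher-order plateau moves (using $k\ge 2$ transpositions, probability $O(n^{-3})$) and void mutations do not change the leading-order transition and jump rates, and that ``started in a local optimum'' may be read as an arbitrary element of $A_2^+$.
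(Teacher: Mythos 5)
Your proposal is correct in substance and rests on exactly the same structural insight as the paper's proof: the process is confined to the plateau $A_2^+$, the probability of jumping to $\Id$ is governed by the minimal number of transpositions needed to write the current derangement (hence by its cycle type), and the maximally split configurations --- disjoint $2$-cycles plus one $3$-cycle when $m$ is odd --- are those from which the jump has probability $\Theta(n^{-2\lceil m/2\rceil})$. Where you diverge is in how the random-walk step is executed. You propose to establish the Markov property and irreducibility of the induced cycle-type chain, show that all its transitions occur at rate $\Theta(n^{-2})$, and derive via a renewal/occupation-fraction argument that a constant fraction of iterations is spent in good configurations, finishing with a Wald-type bound; you correctly identified the delicate point, namely converting an occupation fraction into a bound on the expected hitting time. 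The paper avoids this machinery with a cruder but fully elementary restart argument: Lemma~\ref{lem:good} exhibits, for an arbitrary $\sigma \in A_2^+$, an explicit sequence of at most $m/2$ transpositions leading to a good permutation; Lemma~\ref{Dec} shows that the parent is left unchanged with probability $1 - O(n^{-2})$ per iteration; so in a window of $\Theta(n^2)$ iterations one can demand that exactly this prescribed sequence is executed (one transposition per designated iteration, the state frozen elsewhere) and then that the jump occurs from the good permutation while the state again stays frozen. Each window succeeds with probability $\Omega(n^{2-2\lceil m/2\rceil})$ independently of the past, which yields the bound without ever discussing stationary measures, sojourn times, or the effect of multi-transposition plateau moves on the cycle-type dynamics --- the last point being the loose end in your sketch that would still need to be pinned down: accepted moves built from $k \ge 2$ transpositions that alter the set of deranged elements do occur at rate $\Theta(n^{-3})$, so one must verify they neither spoil the Markov structure you rely on nor the claimed transition rates. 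Your route buys a sharper picture of the plateau dynamics (useful, e.g., for leading constants); the paper's buys a shorter and more robust proof.
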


The key to prove this result is the following observation. Since we use sequences of swap operations as mutation operation, the probability that we mutate a local optimum into the global optimum heavily depends on the smallest number $\ell$ such that the local optimum can be written as product of $\ell$ transpositions. This number can range from $\lceil \frac m2 \rceil$ to $m-1$. Hence to prove a good upper bound on the time to go from a local to the global optimum, we argue that the algorithm regularly visits local optima with this shortest possible product length and then from there has a decent chance to generate the global optimum. 

For this, we shall need the following estimate for the probability of modifying the cycle structure of a given local optimum.

\begin{lemma}\label{Dec}
If the current permutation is a local optimum, then  the probability that one iteration of the \oea changes the number of its cycles in the cycle decomposition is at most $3(\frac{m}{n-1})^2$.
\end{lemma}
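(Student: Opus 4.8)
The plan is to reduce the statement to a purely combinatorial question about how a product of transpositions acts on a permutation with exactly $m$ non-fixed points, and then to bound the resulting probability by a union bound of exactly the kind already used in Lemma~\ref{LeaInc}. Write the local optimum as $\sigma \in A_2^+$, so that $g(\sigma) = n-m$ and $\sigma$ moves precisely the elements of the set $M := \{i \in [1..n] \mid \sigma(i) \neq i\}$ with $|M| = m$; the cycles we track are the non-trivial cycles of $\sigma$, all of which live inside $M$. First I would account for selection: an iteration changes the current search point only if the offspring $\sigma' = T_k \circ \dots \circ T_1 \circ \sigma$ is accepted, and since $\PJump(\sigma) = n$, an accepted $\sigma'$ either stays on the plateau ($g(\sigma') = n-m$) or equals the global optimum $\Id_{[1..n]}$. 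In every rejected case the current permutation, and hence its cycle count, is unchanged, so it suffices to bound the probability that $\sigma'$ is accepted and has a cycle count different from that of $\sigma$.

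The combinatorial heart of the proof is the claim that \emph{if at most one of the $2k$ endpoints (counted with multiplicity) of the sampled transpositions lies in $M$, then $\sigma'$ is either equal to $\sigma$ or satisfies $g(\sigma') < n-m$}. I would prove this by writing $\pi := T_k \circ \dots \circ T_1$ and splitting into two cases. If no endpoint lies in $M$, then $\pi$ is supported on fixed points of $\sigma$, hence commutes with $\sigma$ on disjoint support, so $g(\sigma') = (n-m) - |\mathrm{supp}(\pi)|$, and acceptance forces $\pi = \mathrm{id}$ and $\sigma' = \sigma$. If exactly one endpoint $a \in M$ occurs, its partner in that transposition is a fixed point, so at least one former fixed point is moved by $\pi$ and becomes non-fixed in $\sigma'$; the delicate sub-step is to verify that no element of $M$ can become a new fixed point, because an element $x$ with $\sigma(x) \in M$ is moved by $\pi$ only when $\sigma(x) = a$, and then $\sigma'(x) = \pi(a)$ lies in the support of $\pi$, which consists of $a$ together with fixed points of $\sigma$, and so cannot equal $x \in M$. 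Thus the number of moved elements strictly increases, giving $g(\sigma') \le n-m-1$ and forcing rejection. The claim therefore shows that any change of cycle count requires at least two transposition endpoints in $M$.

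Finally I would estimate this probability just as in Lemma~\ref{LeaInc}. For a fixed number $k$ of transpositions, a union bound over the $\binom{2k}{2}$ pairs of endpoint slots, together with the fact that any such pair lies in $M$ with probability at most $\left(\frac{m}{n-1}\right)^2$ — checking separately two endpoints of the same transposition, where the probability is $\frac{m(m-1)}{n(n-1)}$, and two endpoints of distinct independent transpositions, where it is $\frac{m^2}{n^2}$ — bounds the probability of two endpoints in $M$ by $\binom{2k}{2}\left(\frac{m}{n-1}\right)^2$. Summing against the Poisson weights $\frac{1}{ek!}$ and invoking the identity $\sum_{k=0}^\infty \frac{1}{ek!}\binom{2k}{2} = 3$ already established in the proof of Lemma~\ref{LeaInc} yields the claimed bound $3\left(\frac{m}{n-1}\right)^2$.

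I expect the main obstacle to be the combinatorial claim, and within it the case of exactly one endpoint in $M$: one must argue cleanly that a single transposition touching the support can never convert a moved element into a fixed one, so that the offspring necessarily leaves the plateau and is discarded. Once this is in place, the probabilistic estimate is routine and reuses the moment computations from Lemma~\ref{LeaInc} verbatim.
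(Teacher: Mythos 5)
Your proposal is correct and follows essentially the same route as the paper: the key observation that a change in the cycle count requires at least two of the $2k$ transposition endpoints to lie among the $m$ deranged elements (since otherwise the offspring is either identical to the parent or of inferior fitness and hence rejected), followed by a union bound over $\binom{2k}{2}$ endpoint pairs and the Poisson moment identity $\sum_{k\ge 0}\frac{1}{ek!}\binom{2k}{2}=3$ from Lemma~\ref{LeaInc}. You merely spell out in more detail the combinatorial claim that the paper asserts in a single sentence, which is a welcome addition rather than a deviation.
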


\begin{proof}[Proof]
For the number of cycles to change, by applying $k$ transpositions, at least $2$ elements among the $2k$ elements which appear in the $k$ transpositions have to be among the $m$ deranged ones. Otherwise, applying these $k$ transpositions would either lead to a permutation of inferior fitness or \new{to an identical permutation; in both cases the new parent would be identical with the old one}. Hence, an iteration modifies the number of cycles with probability at most
\begin{align*} 
& \sum_{k = 0}^\infty\frac{1}{e \cdot k!}\binom{2k}{2}\left(\frac{m}{n-1}\right)^2 \le 3\left(\displaystyle\frac{m}{n-1}\right)^2,
\end{align*}
where the sum was estimated already in the proof of Lemma~\ref{LeaInc}.
\end{proof}
We call a permutation $\sigma \in A_2^+$ \emph{good} if it consists of as many disjoint cycles as possible. This means that, apart from the $n-m$ cycles of length one, which are not that interesting, the remaining $m$ elements are permuted via (i)~a product of $m/2$ disjoint transpositions if $m$ is even, or (ii)~a product of $(m-3)/2$ disjoint transpositions and a $3$-cycle, also disjoint from these, if $m$ is odd. We first show that any $\sigma \in A_2^+$ can be transformed into a good permutation in $A_2^+$ by applying at most $m/2$ transpositions.

\begin{lemma}\label{lem:good}
  Let $\sigma \in A_2^+$. Then there is an $\ell \le \frac m2$ and a sequence of transpositions $\tau_1, \dots, \tau_\ell$ such that $\tau_\ell \circ    \dots \circ \tau_1 \circ \sigma$ is a good permutation in $A_2^+$.
\end{lemma}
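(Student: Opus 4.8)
The plan is to work entirely on the set $M$ of the $m$ elements displaced by $\sigma$, on which $\sigma$ acts as a fixed-point-free permutation whose cycle decomposition consists of cycles of lengths $k_1,\dots,k_r\ge 2$ with $\sum_i k_i = m$. Every transposition I construct will move only elements of $M$, so the $n-m$ fixed points of $\sigma$ stay fixed; it therefore suffices to reshape the action on $M$ into the good cycle type while never creating a fixed point inside $M$, which guarantees that the final permutation still has exactly $n-m$ fixed points and hence lies in $A_2^+$.

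The construction rests on two elementary moves. First, for a cycle of length $k\ge 4$, left-multiplying by a single suitable transposition splits it into a $2$-cycle and a $(k-2)$-cycle while keeping all $k$ elements displaced; iterating this on one cycle peels off $2$-cycles until a $2$-cycle remains (if $k$ is even, after $\frac k2-1$ transpositions) or a $3$-cycle remains (if $k$ is odd, after $\frac{k-3}{2}$ transpositions). Second, two disjoint $3$-cycles can be converted into three disjoint $2$-cycles using exactly three transpositions: one transposition (joining elements of the two cycles) merges them into a single $6$-cycle, which is then split into three $2$-cycles by two applications of the first move. In both moves every touched element stays displaced, so we never leave the region where landing in $A_2^+$ is possible.

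The argument then proceeds in two phases. In the splitting phase I apply the first move to each cycle of $\sigma$; afterwards the action on $M$ consists only of $2$-cycles together with exactly one leftover $3$-cycle for every odd-length cycle of $\sigma$, at a total cost of $\frac m2 - |E| - \frac32 o$ transpositions, where $E$ is the set of even-length cycles and $o$ is the number of odd-length ones. In the combining phase I use the second move to pair up the leftover $3$-cycles: when $m$ is even, $o$ is even and all $o$ threes can be paired away into $2$-cycles, reaching the all-$2$-cycle good permutation; when $m$ is odd, $o$ is odd and I pair away all but one, reaching the good permutation with a single $3$-cycle. Adding the three transpositions per pair to the splitting cost gives a total of $\frac m2 - |E| \le \frac m2$ when $m$ is even and $\frac{m-3}{2} - |E| \le \frac m2$ when $m$ is odd, establishing $\ell \le \frac m2$ with the constructed sequence landing on a good permutation of $A_2^+$.

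I expect the bookkeeping around the odd-length cycles to be the main obstacle. Each odd cycle unavoidably leaves a $3$-cycle after splitting, and one must verify both that these leftovers can be recombined without ever producing a fixed point inside $M$ (so that the end result genuinely lies in $A_2^+$) and that the recombination cost of three transpositions per pair is exactly absorbed by the transpositions saved during splitting, so the total never exceeds $\frac m2$. A secondary point to check is parity consistency: since a good permutation has a fixed sign, the number $\ell$ is forced into a single residue class modulo $2$; but the explicit construction automatically yields an $\ell$ of the correct parity, so this raises no genuine difficulty.
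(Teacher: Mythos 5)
Your proof is correct and rests on exactly the same two elementary moves as the paper's argument (splitting a cycle with one transposition, merging two disjoint cycles with one transposition) together with the same counting; the only difference is cosmetic, namely that the paper first merges pairs of odd-length cycles and then splits everything, whereas you split first and then recombine the leftover $3$-cycles, arriving at the same bound $\ell \le \frac m2$. Your explicit check that no intermediate step creates a fixed point (so that all intermediate permutations remain in $A_2^+$) matches the corresponding remark in the paper and is indeed what the later application in Theorem~\ref{thm:jumpUP} requires.
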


\begin{proof}
  Let $c$ denote the number of cycles of odd length larger than one in the cycle decomposition of $\sigma$. Note that two such odd-length cycles can be merged by applying a transposition that contains one element from each cycle. Hence there are $c' = \lfloor \frac c2 \rfloor$ transpositions $\tau_1, \dots, \tau_{c'}$ such that $\sigma' := \tau_{c'} \circ \dots \circ \tau_1 \circ \sigma$ contains exactly $c - 2c'$ cycles of odd length larger than one (which is one such cycle if $c$ is odd and no such cycle if $c$ is even). \new{Note that $\sigma' \in A_2^+$ since $\sigma$ and $\sigma'$ have the same fixed points.}
  
  We note that a cycle of some length $k$ can be split into a $2$-cycle and a $(k-2)$-cycle by applying one transposition. Since $\sigma'$ is the product of at least $c'$ disjoint cycles (of length larger than one) whose lengths add up to at most $m$, we see that there are $\ell' \le \frac m2 - c'$ and transpositions $\tau'_1, \dots, \tau'_{\ell'}$ such that $\tau'_{\ell'} \circ \dots \circ \tau'_1 \sigma'$ is the product of disjoint $2$-cycles and possibly one $3$-cycle (namely when $m$ is odd). This is the good permutation proving this lemma. 
\end{proof}

We are now ready to prove Theorem~\ref{thm:jumpUP}. This proof will be divided into two steps:
 \begin{enumerate}
     \item  We show that from the current local optimum, a good permutation can be reached within the next $\frac{(n-1)^2}{m}$ iterations with at least a constant probability.

    \item We give a lower bound on the probability of reaching the global optimum from a good local optimum within again $\frac {(n-1)^2}{m^2}$ iterations.
 \end{enumerate}

\begin{proof}[Proof of Theorem \ref{localglobalup}]

 {\bf Step 1:} Since we aim at an asymptotic statement, we can always assume that $n$ is sufficiently large. Let $\sigma \in A_2^+$ be the current permutation. By Lemma~\ref{lem:good}, there are $\ell \le \frac m2$ and transpositions $\tau_1,\dots, \tau_\ell$ such that $\tau_\ell \circ \dots \circ \tau_1 \circ \sigma$ is a good permutation in $A^+_2$. 
 
 Let $E$ be the event of applying this sequence of transpositions during a timespan of $t = \frac{(n-1)^2}{m}$ iterations, using mutations which keep the intermediate states unmodified in the remaining $t-\ell$ iterations. Each of these latter mutations occurs with probability at least $p_u = 1-3\left(\frac{m}{n-1}\right)^2$ by Lemma~\ref{Dec}.

 \new{Using $\ell \le \frac m2$ and $\frac{n-1}{n} \ge \frac 12$, we estimate}
 \begin{align*}
 \Pr[E] 
 &\ge \binom{t}{\ell} \left(\frac{2}{en(n-1)}\right)^{\ell} p_u^{t-\ell} \\
&\ge \left(\frac 2e\right)^{\ell} 
\left(\frac{1}{\ell m}\right)^\ell 
\left(\frac{n-1}{n} \right)^{\ell} p_u^{t} \\
&\ge \left(\frac{2}{em^2}\right)^{\frac{m}{2}} p_u^{t} \\
&\ge \left(\frac{2}{em^2}\right)^{\frac{m}{2}} \left(1-3\left(\frac{m}{n-1}\right)^2\right)^{t}.
\end{align*} 
Since $\left(1-3\left(\frac{m}{n-1}\right)^2\right)^{\frac{(n-1)^2}{m}} \rightarrow e^{-3m}$ for $n$ sufficiently large, we have
\[
\Pr[E] \ge \displaystyle \frac{1}{2}\cdot\left(\frac{2}{m^2}\right)^{\frac{m}{2}}\exp{\left(-\frac{7}{2} m\right)}:=B_m,
\]
\new{which is a constant independent of~$n$.}

\noindent{\bf Step 2:} The second argument is a lower bound on the probability of going from a good local optimum to the global optimum in $t' = \frac{(n-1)^2}{m^2}$ steps. For this, we first observe that a good local optimum can be written as the product of $\lceil \frac m2 \rceil$ transposition\new{s} (namely the disjoint transpositions the good local optimum consists of plus possibly two more for the $3$-cycle in the case that $m$ is odd). Hence the good local optimum can be mutated into the global optimum by applying $k = \lceil \frac m2 \rceil$ suitable transpositions. The probability for this is at least
\[
\frac{1}{e\lceil\frac m 2\rceil !}\frac{1}{\left(\frac{n(n-1)}{2}\right)^{\lceil\frac m2\rceil}}.
\]
To estimate the probability that this happens within $t'$ steps, we regard the $t'$ disjoint events that this happens in one iteration and that the state is not changed in the remaining $t'-1$ iterations (it is necessary that we are in a good local optimum in the iteration which shall bring us to the global optimum).

The probability of this event (assuming $n$ sufficiently large), is
\begin{align*} 
t' &\frac{1}{e \lceil\frac m 2\rceil ! \left(\frac{n(n-1)}{2}\right)^{\lceil\frac m2\rceil}} \left(1-3 \left(\frac{m}{n-1}\right)^2\right)^{\frac {(n-1)^2}{m^2}-1} \\
&\ge \frac {(n-1)^2}{m^2} \frac{1}{e^4 \lceil\frac m 2\rceil !}  \left(\frac{n(n-1)}{2}\right)^{-\lceil\frac m2\rceil}:=D_{n,m}.
\end{align*}
Combining Steps~1 and~2, we see that in each interval of $C_m (n-1)^2$ iterations ($C_m:= \frac{1}{m^2} + \frac{1}{m})$, independently of what happened before, we find the optimum with probability at least $B_m D_{n,m}$.

For each positive integer $t$, let $A_t$ be the event of not reaching the global optimum in $t$ iterations. We therefore have
\begin{align*}
    \Pr[A_t] & \le \left(1-B_m D_{n,m}\right)^{\left \lfloor \frac{t}{C_m (n-1)^2} \right \rfloor}&&\\
        &\leq \exp\left(-B_m D_{n,m}\left \lfloor \frac t {C_m (n-1)^2} \right \rfloor \right).&&
\end{align*}

Thus, for $t > \lambda\frac{C_m (n-1)^2}{B_m D_{n,m}}$ for some positive real $\lambda$, we have $\Pr[A_t] \le \exp(-\lambda)$.

We conclude that the expected time for reaching the global optimum is $ O(n^{2\lceil\frac m2\rceil})$, where we recall that we treat $m$ as a constant.
\end{proof}

\new{We are now ready to prove the upper bound on the runtime of the permutation-based \oea on \PJump. In the light of Theorem~\ref{localglobalup}, all that is missing is a fitness-level argument bounding the time to reach \newer{a local optimum (or the global one)}. We note that we cannot directly reuse the analysis on the permutation-based \onemax problem (sorting) from~\cite{ScharnowTW04}, see the proof below for the details.
}

\begin{theorem}
  Let $m \ge 3$ be a constant. 
  The expected runtime of the permutation-based \oea on $\Pjump_{n,m}$ is $O(n^{2\lceil\frac m2\rceil})$.
\end{theorem}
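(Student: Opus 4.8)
```latex
The plan is to combine the bottleneck estimate from Theorem~\ref{localglobalup} with a fitness-level argument covering the approach to the set of local optima $A_2^+$. The overall runtime splits naturally into two phases: the time to reach a local optimum (i.e.\ to enter $A_2^+$, equivalently to reach $g(\sigma) = n-m$ while still in $A_2$), and the time to jump from a local optimum to the global one. Theorem~\ref{localglobalup} already handles the second phase, giving an $O(n^{2\lceil m/2\rceil})$ bound once the algorithm sits in a local optimum. Since $2\lceil m/2 \rceil \ge m \ge 3$ for the relevant $m$, the second phase dominates, so it suffices to show that the first phase takes only $O(n^{2\lceil m/2\rceil})$ iterations as well, and in fact a much weaker polynomial bound will do.

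For the first phase I would use Wegener's fitness-level method, exactly as in the upper-bound part of Theorem~\ref{thm:lo}. The fitness levels here are indexed by the number of fixed points $g(\sigma)$. Starting from an arbitrary permutation in $A_1 \cup A_2$, the algorithm climbs toward $A_2^+$ by improving $g$. The key estimate is a lower bound on the probability of a fitness improvement in one iteration: if $\sigma$ is not yet a local or global optimum, there is at least one misplaced element, and a single well-chosen transposition (the local mutation with $k=1$, which occurs with probability $\frac1e$) increases the number of fixed points. Concretely, if $\sigma(i)\neq i$, then swapping the entry currently sitting at position $i$ with the entry at the position holding the value $i$ fixes element $i$; this specific transposition is chosen with probability $\frac{2}{n(n-1)}$, so each improving step has probability $\Omega(n^{-2})$. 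Summing the waiting times $O(n^2)$ over at most $n$ fitness levels yields an $O(n^3)$ bound to reach a local optimum.

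The one subtlety to address carefully — and the reason, as the excerpt warns, that one cannot simply quote the sorting analysis of~\cite{ScharnowTW04} — is the non-monotone fitness structure of $\PJump$. On the set $A_1$ (where $g(\sigma) > n-m$ but $\sigma \ne \Id$), the fitness is $n - g(\sigma)$, which \emph{decreases} as $g$ increases, whereas on $A_2$ the fitness $m + g(\sigma)$ increases with $g$. Thus improving $g$ is rewarded in $A_2$ but punished in $A_1$. I would handle this by arguing that from any starting point the algorithm is driven into $A_2 \cup A_3$: within $A_1$, the selection pressure favours \emph{fewer} fixed points, so the fitness-level argument on $A_1$ uses the quantity $n-g(\sigma)$ (which the algorithm maximizes there) and shows that $A_1$ is left quickly toward $A_2^+$, while the identity $A_3$ is reached from $A_1$ only via a direct jump, governed by Theorem~\ref{localglobalup}. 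In either case, the dominant cost is the jump from $A_2^+$ to $A_3$, so the total expected runtime is $O(n^3) + O(n^{2\lceil m/2\rceil}) = O(n^{2\lceil m/2\rceil})$.

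The main obstacle I anticipate is making the case analysis across $A_1$, $A_2$, and $A_2^+$ fully rigorous without double-counting: one must verify that the algorithm cannot get stuck oscillating between $A_1$ and $A_2$, and that the fitness-level bounds in the two regions compose into a single clean polynomial bound. Since all the probabilistic estimates are of the same $\Omega(n^{-2})$ flavour already used in Lemma~\ref{LeaInc} and Theorem~\ref{localglobalup}, the difficulty is organizational rather than technical: carefully defining which monotone potential the fitness-level method tracks in each region so that every improving move is genuinely accepted by the selection step of Algorithm~\ref{alg:oea}.
```
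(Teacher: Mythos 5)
Your decomposition is the same as the paper's: bound the time spent in $A_1$, in $A_2\setminus A_2^+$, and then invoke Theorem~\ref{localglobalup} for the jump from $A_2^+$ to the optimum, noting that $n^3 = O(n^{2\lceil m/2\rceil})$ for $m\ge 3$. You also correctly flag that in $A_1$ the selection pressure pushes toward \emph{fewer} fixed points, which is exactly how the paper empties $A_1$ in time $O(n\log n)$.

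However, there is a genuine gap in your phase-one argument, and it is not the subtlety you identified. Your fitness-level estimate claims that whenever $\sigma\in A_2\setminus A_2^+$ has a misplaced element $i$, the single transposition $(i\ \sigma(i))$ yields an accepted improvement with probability $\Omega(n^{-2})$. This fails at the level just below the local optima, namely when $\sigma$ has exactly $n-m-1$ fixed points. The transposition $(i\ \sigma(i))$ fixes $i$ and \emph{possibly also} $\sigma(i)$; if the restriction of $\sigma$ to its non-fixed points is a product of disjoint $2$-cycles, then \emph{every} transposition that increases the number of fixed points increases it by exactly two, producing a permutation with $n-m+1$ fixed points. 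That permutation lies in $A_1$, the low-fitness valley, so the offspring is rejected. Hence on this level there may be no accepted single-swap improvement at all, and your $\Omega(n^{-2})$ bound is simply false there. The paper resolves this with a separate argument: either the cycle decomposition contains a cycle of length greater than two (and then some single transposition gains exactly one fixed point, probability $\Omega(n^{-2})$), or $\sigma$ is a product of disjoint transpositions, in which case one exhibits a specific pair of transpositions $\tau_2\circ\tau_1$ whose composition lands exactly on $A_2^+$, an event of probability $\Omega(n^{-3})$ under the $k=2$ branch of the mutation operator. This costs $O(n^3)$ for that single level, which is still dominated by $O(n^{2\lceil m/2\rceil})$, so the final bound is unaffected -- but without this case distinction your fitness-level chain does not close. (A minor additional imprecision: you say escapes from $A_1$ to $A_3$ are ``governed by Theorem~\ref{localglobalup}'', but that theorem starts from a local optimum in $A_2^+$, not from $A_1$; such direct jumps can only help and need no separate treatment.)
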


\begin{proof}
We show upper bounds on the times spent on each of the sets $A_1$, $A_2$, and $A_2^+$, which together give an upper bound for the global process.

For a permutation $\sigma$ in $A_1$, let $k$ be the number of 1-cycles. We have $n-m < k < n$. Thus the number of fitness increasing transpositions, which are transpositions that create more elements out of place, is\begin{align*}
        \frac{n(n-1)}{2}-\frac{(n-k)(n-k-1)}{2} & = k\left(n-\displaystyle\frac{k+1}{2}\right) &&\\ &\geq \frac{k(n-1)}{2}.
\end{align*}
Therefore the probability of decreasing the number of 1-cycles by applying a single random transposition is at least $ \frac{1}{e}\frac{k(n-1)/2}{n(n-1)/2} = \frac {k}{en}.
$
Hence the expected time spent in $A_1$ is bounded above by\begin{align*}
E\left[T_{A_1}\right] \leq e\sum\limits_{k=n-m}^{n-1}\frac n k \leq en\log(n).&&
\end{align*}

\new{Assume now that the current permutation $\sigma$ is in $A_2 \setminus A_2^+$. Then $\sigma$ has $n-d$ fixed points for some $d > m$, that is, it is in Hamming and fitness distance $d$ steps away from the optimum. For each $i \in [1..n]$ such that $\sigma(i) \neq i$, the transposition $\tau = (i \, \sigma(i))$ has the property that $\tau \circ \sigma$ inherits all fixed points of $\sigma$, has $i$ as additional fixed point, and possibly has $\sigma(i)$ as additional fixed point. If $d \ge m+2$, the permutation $\tau \circ \sigma$ has a higher fitness than $\sigma$, that is, it would be accepted as offspring. Since there are at least $\frac d2$ such improving transpositions, the probability that one iteration gives such a fitness improvement is at least $\frac d2 \frac 1e \binom{n}{2}^{-1}$, hence the expected waiting time for an improvement is $O(n^2/d)$. Summing up these waiting times for all $d \in [m+2..n]$ gives a runtime guarantee of $O(n^2 \log n)$ for reaching a permutation in distance $d \in \{m+1, m, 0\}$ from the optimum.} 

\new{If the current permutation $\sigma$ has distance $m+1$ from the optimum, then a reduction of the distance by two would give a permutation in the valley of low fitness, which would not be accepted. Since we cannot rule out \newer{the situation} that any transposition reducing the distance reduces it by two (this happens when $\sigma$ is the product of $\frac d2$ disjoint transpositions), we need to be more careful here. If the cycle decomposition of $\sigma$ contains a cycle of length larger than two, then there is a transposition $\tau$ such that $\tau \circ \sigma$ has distance $m$ from the optimum. Hence with probability at least $\frac 1e \binom{n}{2}^{-1} = \Omega(n^{-2})$ the current iteration increases the fitness (and thus reaches \newer{a} local or the global optimum). Assume now that $\sigma$ is the product of disjoint transpositions. Let $\tau_1$ be one of these. Now $\tau_1 \circ \sigma$ has $n-m+1$ fixed points. Hence any transposition $\tau_2$ containing one of these and one of the $m-1$ items not in place has the property that $\tau_2 \circ \tau_1 \circ \sigma$ has distance $m$ from the optimum. Since there are exactly $\frac{(n-m+1)(m-1)}{2}$ choices for $\tau_2$, we see that the probability that the mutation operator picks $k=2$ transpositions, the first equal to $\tau_1$ and the second equal to such a $\tau_2$, is at least $\Omega(n^{-3})$. Hence in either case, we have a probability of $\Omega(n^{-3})$ to leave this fitness level, and thus this takes another $O(n^3)$ time. In summary, we see that the time to reach \newer{a} local or \newer{the} global optimum is $O(n^3)$ and thus of lower order than the time we have proven for going from \newer{a} local to the global optimum.}

\new{The sum of all these times is asymptotically negligible compared to the time $O\left(n^{2\lceil\frac m2\rceil}\right)$ proven in Theorem~\ref{localglobalup}, which therefore is our upper bound on the full runtime.}
\end{proof}

\subsection{Runtime Analysis, Lower Bound}

We now prove that our upper bound from Section 6.2 is asymptotically tight. The main argument in this lower bound proof is that applying a single transposition on a permutation $\sigma$ increases the number of cycles by at most $1$, and this only if the transposition operates on elements belonging to a common cycle of $\sigma$. We first give an upper bound on the probability that a random transposition increases the number of cycles.

\begin{lemma}\label{spl}
Given a permutation $\sigma \in S_n$ with $r > 0$ distinct cycles (possibly of length one), the probability that a random transposition consists of two elements from the same cycle is at most $\frac{(n-r)(n-r+1)}{n(n-1)}$.
\end{lemma}

\begin{proof}[Proof]
Denoting by $n_{1},\dots,n_{r}$ the lengths of the different cycles, the exact probability for this event is $p = \frac{\sum_{i = 1}^r n_{i}(n_{i}-1)}{n(n-1)}$. Hence it suffices to show that $\sum_{i = 1}^r n_{i}(n_{i}-1) \le (n-r)(n-r+1)$. To this aim, note that $f : \R^r \to \R; (n_1, \dots, n_r) \mapsto \sum_{i = 1}^r n_{i}(n_{i}-1)$ is convex. Let $e_i$ be the $i$-th unit vector of the standard basis and $\bold{1} = \sum_{i=1}^r e_r$. Then $(n_1, \dots, n_r) = \sum_{i = 1}^r \frac{n_{i}-1}{n-r}(\bold{1} + (n-r)e_i)$. By the convexity of~$f$, we have
\begin{align*}
f(n_1, \dots, n_r) & = f\left(\sum_{i = 1}^r \frac{n_{i}-1}{n-r}(\bold{1} + (n-r)e_i)\right)\\
&\le \sum_{i = 1}^r \frac{n_{i}-1}{n-r}f(\bold{1} + (n-r)e_i)\\
&= (n-r)(n-r+1) \sum_{i = 1}^r \frac{n_{i}-1}{n-r}\\
&= (n-r)(n-r+1).\qedhere
\end{align*}
\end{proof}

We are now ready to prove the main result of this subsection.
\begin{theorem}\label{low}
Let $m \ge 3$ be a constant. The expected runtime of the permutation-based \oea on $\Pjump_{n,m}$ is $\Omega(n^{2\lceil\frac m2\rceil})$.
\end{theorem}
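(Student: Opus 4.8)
The plan is to show that the runtime is governed by the time the algorithm needs to leave the local-optimum plateau $A_2^+$ by a direct ``jump'' onto the identity, and that this jump succeeds with probability only $O(n^{-2\lceil m/2\rceil})$ per iteration, uniformly over the relevant states. Concretely, I would combine three ingredients: (i) with probability bounded below by a constant, the random initial permutation lies in $A_2$ and is far from the optimum; (ii) the jump landscape together with elitist selection forces the algorithm to remain in $A_2$ until it produces the identity in a single iteration; and (iii) a uniform upper bound of $O(n^{-2\lceil m/2\rceil})$ on the per-iteration probability of producing the identity from any $\sigma\in A_2$. A geometric waiting-time argument then yields $\Omega(n^{2\lceil m/2\rceil})$.

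For (i) and (ii), I would first invoke Lemma~\ref{lem:fp} (and the fact that a random permutation has $O(1)$ fixed points in expectation) to conclude that $g(\sigma_0)\le n-m$, i.e.\ $\sigma_0\in A_2$, with probability tending to $1$. I would then check, directly from the definition of $\PJump_{n,m}$, that no move from a state in $A_2$ into the gap $A_1$ is ever accepted (any such move strictly decreases the fitness) and that the fitness cannot decrease below the current $A_2$-level; hence, conditioned on $\sigma_0\in A_2$, the algorithm stays in $A_2$ at every iteration until the first iteration in which it outputs $\Id$. In particular, the first hitting time of the identity equals the first exit time from $A_2$, and in every intervening iteration the algorithm is at some $\sigma\in A_2$.

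The technical heart is ingredient (iii). For $\sigma\in A_2$ write $d:=n-c(\sigma)$ for its minimal transposition length (distance to the identity). Since $\sigma$ has at least $m$ non-fixed points, and these can lie in at most half as many non-trivial cycles, I get $d\ge\lceil m/2\rceil$. A single transposition changes the number of cycles by exactly $\pm1$, and it \emph{increases} the cycle count (a ``split'') only if its two elements lie in a common cycle of the current permutation; by Lemma~\ref{spl} this split probability is at most $\frac{(n-r)(n-r+1)}{n(n-1)}$ at a state with $r$ cycles. Producing the identity requires a net increase of the cycle count by $d\ge\lceil m/2\rceil$, hence at least $\lceil m/2\rceil$ split operations among the $k\sim\Poi(1)$ transpositions. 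Combining the split-probability bound over the required splits and summing over $k$, I would argue $\Pr[\text{one iteration maps }\sigma\text{ to }\Id]=O(n^{-2\lceil m/2\rceil})$, uniformly in $\sigma\in A_2$, the dominant contribution coming from $k=\lceil m/2\rceil$ transpositions forming a minimal factorization.

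Granting this uniform bound, the conclusion is routine: conditioned on $\sigma_0\in A_2$, the exit time $T$ from $A_2$ stochastically dominates a geometric random variable with success probability $O(n^{-2\lceil m/2\rceil})$, so $\mathrm E[T]=\Omega(n^{2\lceil m/2\rceil})$; multiplying by the constant lower bound on $\Pr[\sigma_0\in A_2]$ gives the theorem. I expect the main obstacle to be the \emph{uniformity} of the per-iteration bound in (iii) over permutations with many non-fixed points: there $n-r$ is large, so Lemma~\ref{spl} only gives a weak per-split bound, and one must show that the larger number of required splits together with the Poisson tail on $k$ more than compensates. Equivalently, one needs to control the number of length-$k$ transposition factorizations of $\sigma^{-1}$ and verify that $d=\lceil m/2\rceil$ is genuinely the worst case, with all $k>\lceil m/2\rceil$ terms contributing only lower-order corrections.
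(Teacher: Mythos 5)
Your proposal follows essentially the same route as the paper: a constant-probability argument that the initial permutation lies in $A_2$, the observation that elitism confines the process to $A_2$ until the identity is produced in a single mutation, and a per-iteration bound of $O(n^{-2\lceil m/2\rceil})$ obtained by counting the at least $q-R\ge\lceil m/2\rceil$ cycle-increasing transpositions needed and applying Lemma~\ref{spl} to each. The uniformity issue you flag as the main obstacle is resolved in the paper exactly as you anticipate, by the explicit computation showing that the bound $\frac{q!}{(n(n-1))^{\lceil q/2\rceil}}$ is maximized (up to constants) at $q=m$.
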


\begin{proof}
We consider first the case that the current permutation is in $A_2$, thus with $q \in [m..n]$ elements out of place. Let us call $R$ the number of cycles of length at least~$2$ in the cycle notation of $\sigma$. Consequently, the total number of cycles  is $n - q + R$. Composing by a transposition increases the number of cycles by at most~1. Thus, in order to reach the global optimum, the sequence of transpositions in a mutation step should at least be composed of $q - R$ transpositions, each raising the number of cycles from $i$ to $i+1$ with $i \in [n-q+R..n-1]$. Thus, with $k$ transpositions applied, an upper bound on the probability of reaching the global optimum from a state of fitness $n - q$ and with $R$ cycles of size $\geq 2$ is given by Lemma \ref{spl} as 
\begin{align*}
\binom{k}{q-R}&\prod_{i=0}^{q-R-1}\frac{(n - (n-q+R+i))(n - (n-q+R+i)+1)}{n(n-1)} \\
&= \frac{k! (q-R+1)!}{(k-q+R)!}\frac{1}{(n(n-1))^{q-R}}.
\end{align*}
Since $ 1 \leq R \leq \lfloor\frac q2\rfloor $, the bound becomes at most 
\[
\frac{k! q!}{(k-q+R)!}\frac{1}{(n(n-1))^{\lceil\frac q2\rceil}}.
\]
Finally, considering the random choice of $k$, we obtain an upper bound on the probability to reach the global optimum in one step from a state of fitness $n - q$ and with $R$ cycles of size $\geq 2$ of
\begin{align*}
\sum_{k = q-R}^\infty \frac{1}{e\cdot k!}\frac{k!\cdot q!}{(k-q+R)!}\frac{1}{(n(n-1))^{\lceil\frac q2\rceil}} &= \displaystyle\frac{ m!\prod_{i=m+1}^{q}i }{(n(n-1))^{\lceil\frac q2\rceil}} &&\\ &\leq \displaystyle \frac{(m+1)!}{(n(n-1))^{\lceil\frac m2\rceil}} := p.
\end{align*}
Hence, considering the fact that the bound above holds for any point in $A_{2}$, the expected time to reach the global optimum from a permutation $\sigma $ in $A_{2}$ is at least $\frac1p$ = $ \Omega (n^{2\lceil\frac m2\rceil}).$

For a random permutation, the expected number of fixed points is $1$ \new{(this is well-known; to see this, note that by symmetry the probability that $\sigma(i) = i$ is exactly $\frac 1n$; hence the expected number of fixed points equal to $i$ is $\frac 1n$ and thus the expected number of all fixed-points is precisely one)}. Thus, for $n-m \geq 1$, we estimate with Markov's inequality that having an initial random permutation with at most one fixed point and thus belonging to $A_2$ happens with probability at least $\frac{1}{2}$. Thus, the runtime is also $ \Omega (n^{2\lceil\frac m2\rceil})$ when taking into account the random initial permutation.
\end{proof}

\section{Scramble Mutation}\label{sec:scramble}

Both the complexity of the proofs above and the slightly obscure result, a runtime of $\Theta(n^{2 \lceil \frac m2 \rceil})$, raise the question whether our permutation-based \oea is optimally designed. The asymmetric behavior of the different local optima \newer{motivated} us to look for a mutation operator which treats all these solutions equally. A natural choice, known in the literature on permutation-based EAs~\cite{EibenS15}, is the \emph{scramble mutation} operator, which shuffles a random subset of the ground set $[1..n]$. More precisely, this operator samples a number $k$ according to a Poisson distribution with mean $\lambda=1$, selects a random set of $k$ elements from $[1..n]$, and applies a random permutation $\rho$ to this set (formally speaking, the mutation operator returns $\rho \circ \sigma$, when $\sigma$ was the parent permutation). The pseudocode for the \oea using this mutation operator is given as Algorithm~\ref{alg:scramble}.

We note that this scramble operator returns the unchanged parent when $k \in \{0,1\}$. We note further that we allow $\rho$ to have fixed points. Hence the Hamming distance of $\sigma$ and $\rho \circ \sigma$ could be smaller than~$k$. We do not see any problem with this. We note that one could choose $\rho$ as a \new{fixed-point free permutation (also called derangement) to ensure that the Hamming distance is exactly~$k$. Such random derangements can be generated efficiently in linear time, see, e.g.,~\cite{MartinezPP08}. }

\begin{algorithm}\label{alg:scramble}
\caption{The permutation-based \oea with the scramble mutation for the maximization of a given function $f : S_n \to \R$.}
\begin{algorithmic}[1] 
%\REQUIRE $n \in \mathbb {N} $
\State Choose $\sigma\in S_n$ uniformly at random
\Repeat
\State Choose $k \sim \Poi(1)$
\State Choose $S \subseteq [1..n]$ of size $k$ uniformly at random
\State Choose a permutation $\rho$ operating on $S$ uniformly at random
\State $\sigma'\leftarrow \rho \circ \sigma$
\If{$f(\sigma')\geq f(\sigma)$}
\State $\sigma \leftarrow \sigma'$
\EndIf
\Until forever
\end{algorithmic}
\end{algorithm}

For this mutation operator, we shall show a runtime of $\Theta(n^m)$ on $\Pjump_{n,m}$ when $m$ is constant. This is faster by a factor of $\Theta(n)$ compared to when using swap mutation operator if $m$ is odd. \newer{Since the proofs here are technically much easier, we effortlessly obtain bounds} that are tight apart from constant factors even when allowing that $m$ is a function of~$n$.

\begin{theorem}\label{thm:scramble}
 Let $m \geq 3$, possibly depending on $n$. The expected runtime of the permutation-based \oea with the scramble mutation operator on $\Pjump_{n,m}$ is $\Theta((m!)^2 \binom{n}{m})$.
\end{theorem}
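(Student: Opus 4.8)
The plan is to prove matching $O$ and $\Omega$ bounds, both resting on one observation that makes the scramble operator far more tractable than the swap operator: from any permutation $\sigma$ with exactly $q = n - g(\sigma)$ deranged elements, the probability that a single scramble mutation produces $\Id$ depends only on $q$ and not on the cycle structure of $\sigma$. To see this, let $D$ be the set of deranged positions; then $\rho \circ \sigma = \Id$ forces $\rho$ to agree with $\sigma^{-1}$ on $D$ and to fix everything else, so the scrambled set $S$ must contain $D$, and for each $k \ge q$ exactly one of the $k!$ permutations of $S$ works. Summing over $k$ with $j = k - q$, I would obtain
\[
P_q = \frac{(n-q)!}{e\,n!}\sum_{j=0}^{n-q}\frac{1}{j!\,(j+q)!}.
\]
Using $(j+q)! \ge q!\,j!$, the sum lies between $\tfrac{1}{q!}$ and $\tfrac{C_0}{q!}$ with $C_0 := \sum_{j\ge 0} 1/(j!)^2 < \infty$, whence $P_q = \Theta\!\left(\frac{1}{(q!)^2\binom{n}{q}}\right)$; in particular $P_m = \Theta\!\left(\frac{1}{(m!)^2\binom{n}{m}}\right)$.

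For the lower bound I would first note that $A_1$ has strictly smaller fitness than every state of $A_2$ (its fitness is at most $m-1$, while $A_2$ has fitness at least $m$), so a run that starts in $A_2$ never descends into the valley $A_1$: it either climbs within $A_2$ or jumps directly to $\Id$. Along such a trajectory every state has $q = n - g \ge m$, and I would show that $P_q$ is largest at $q = m$ via
\[
\frac{(m!)^2\binom{n}{m}}{(q!)^2\binom{n}{q}} = \frac{m!}{q!}\cdot\frac{(n-q)!}{(n-m)!} \le 1 \qquad (q \ge m),
\]
which yields $P_q \le C_0/\!\left(e\,(m!)^2\binom{n}{m}\right)$ uniformly over the trajectory. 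Consequently the expected number of iterations before the first jump to $\Id$ is $\Omega\!\left((m!)^2\binom{n}{m}\right)$. Finally, since a random permutation has one fixed point in expectation, Markov's inequality shows it lies in $A_2$ with probability $\Omega(1)$ (at least $\tfrac12$ when $m \le n-1$, and $\to e^{-1}$ when $m=n$), so the unconditional expectation is $\Omega\!\left((m!)^2\binom{n}{m}\right)$ as well.

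For the upper bound I would split the run into reaching a local optimum and then escaping it. To reach $A_2^+$ I would run a fitness-level argument: in $A_1$ the algorithm is pushed to reduce the number of fixed points until it enters $A_2$, and in $A_2$ the transposition-like move (scramble with $k=2$ on the set $\{i,\sigma(i)\}$, applying the swap) fixes at least one further element, exactly as in the sorting/\onemax analysis, costing $O(n^2\log n)$; the single boundary level at distance $m+1$ (where the distance must drop by exactly one, not two) is treated as in the earlier \PJump upper bound and costs $O(n^3)$. Since $(m!)^2\binom{n}{m} = m!\,\tfrac{n!}{(n-m)!} = \Omega(n^3)$ for all $m \ge 3$, reaching $A_2^+$ is negligible. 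Once in $A_2^+$ the run never leaves $A_2^+ \cup \{\Id\}$ (every other move strictly decreases fitness and is rejected), and by the symmetry above each iteration produces $\Id$ with probability at least $\frac{1}{e\,(m!)^2\binom{n}{m}}$, regardless of the current plateau state. Hence the escape time is dominated by a geometric variable of mean $O\!\left((m!)^2\binom{n}{m}\right)$, and the two phases sum to $O\!\left((m!)^2\binom{n}{m}\right)$.

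The main conceptual point — and the reason this is far shorter than the swap analysis of Theorem~\ref{thm:jumpUP} — is the symmetry observation: because scramble applies a uniformly random permutation to the chosen set, the plateau $A_2^+$ becomes homogeneous and the delicate random-walk-on-the-plateau argument is entirely avoided. The only genuinely technical steps I expect are (i) the uniform bound $P_q \le C_0/\!\left(e\,(m!)^2\binom{n}{m}\right)$ for all $q \ge m$, for which the monotonicity of $(q!)^2\binom{n}{q}$ is essential, and (ii) the boundary fitness level at distance $m+1$ in the climb to $A_2^+$; neither is hard, and (ii) is in any case dominated by the $\Omega(n^3)$ lower bound on the target runtime.
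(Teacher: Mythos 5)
Your proposal is correct and follows essentially the same route as the paper's proof: both rest on the observation that the probability of mutating a permutation with $q$ deranged elements into the identity is exactly $\frac{(n-q)!}{e\,n!}\sum_{k\ge q}\frac{1}{k!(k-q)!}$, which is at most $\frac{1}{(m!)^2\binom{n}{m}}$ uniformly over $A_2$ (lower bound, combined with the constant probability of starting in $A_2$) and at least $\frac{1}{e(m!)^2\binom{n}{m}}$ on $A_2^+$ (upper bound, combined with a fitness-level argument for the easy phase). Your treatment is in fact slightly more careful than the paper's in two places -- you spell out the monotonicity of $(q!)^2\binom{n}{q}$ in $q$, and you explicitly handle the boundary fitness level at distance $m+1$, which the paper folds into its $O(n^2\log n)$ claim but which, as you note, may genuinely cost $\Theta(n^3)$ and is still absorbed since $(m!)^2\binom{n}{m}=\Omega(n^3)$ for $m\ge 3$.
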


\begin{proof}
For the upper bound, and adopting previously introduced notations, the expected time spent in $A_1$ and $A_2 \setminus A_2^+$ can again easily be bounded by $O(n^2 \log n)$ via elementary fitness level arguments. We note that both the swap and the scramble mutation operator apply a particular transposition with probability $\Theta(n^{-2})$ and such mutation steps suffice to make progress in $A_1$ and $A_2 \setminus A_2^+$.

Once the current permutation is in $A_2^+$, a mutation step which leads to the global optimum can be one operating exactly on the $m$ displaced elements and bringing them into place. Such an event occurs with probability exactly $\frac{1}{em!} \binom{n}{m}^{-1} (m!)^{-1}$. Thus, the expected waiting time for such an event is at most $e (m!)^2 \binom{n}{m}$. This proves an upper bound on the expected runtime of $e (m!)^2 \binom{n}{m} + O(n^2 \log n) = (1+o(1)) e (m!)^2 \binom{n}{m}$.

For the lower bound, we note that with probability $(1+o(1)) \frac 1e$, the random initial permutation has no fixed point (see Lemma~\ref{lem:fp}), and thus is in $A_2$. Reaching the global optimum from any point of $A_2$ with $q \in [m..n]$ elements out of place demands a mutation step operating on a set containing at least these $q$ elements and bringing (or leaving) all these element\new{s} on the desired position. Thus, an upper bound for the probability of reaching the optimum from $A_2$  is 
\begin{align*}
\sum_{k = q}^n \frac{1}{e k!} \frac{\binom{n-q}{k-q}}{\binom{n}{k}} \frac{1}{k!}
 &= \frac{(n-q)!}{n!}\sum_{k = q}^n \frac{1}{e k!(k-q)!}\\
 & \leq \frac{(n-q)!}{n!}\frac{1}{q!} \sum_{k = q}^n \frac{1}{e(k-q)!}\\
 & \leq  \frac{(n-m)!}{n!} \frac{1}{m!}
  =  \frac{1}{(m!)^2 \binom{n}{m}}. 
%  \leq \frac{m^m}{(m!)^2}\frac{1}{n^m}.
\end{align*}
Since this bound holds for any permutation $\sigma$ in $A_2$, the expected time to reach the global optimum from a permutation in $A_{2}$ is at least $(m!)^2 \binom{n}{m}$. Starting from a random initial solution, the expected runtime is at least $(1+o(1)) \frac 1e (m!)^2 \binom{n}{m}$.
\end{proof}

For reasons of completeness, we also determine the runtime of the \oea with scramble mutation on the permutation version of the \leadingones benchmarks. For our lower bound proof, we need the following elementary result, which might be useful in other analyses of scramble mutations as well. \new{It determines the probability that scramble mutation changes the position of $\ell$ items in a prescribed way.} Here and in the remainder, we shall use the standard notation $\sigma(I) := \{\sigma(i) \mid i \in I\}$ for a set $I \subseteq [1..n]$.

\begin{lemma}\label{lem:scramble}
  Let $\sigma \in S_n$. Let $I \subseteq [1..n]$ and $\ell := |I|$. For all $i \in I$, let $r_i \in [1..n] \setminus \sigma(i)$, and this in a way that the $r_i$, $i \in I$, are pairwise distinct. Let $I' = \sigma(I) \cup \{r_i \mid i \in I\}$ and $\ell' = |I'|$.
	
	Let $\tau$ be the outcome of applying scramble mutation to $\sigma$. With $\ell_2 := \max\{\ell,2\}$, we have 
	\begin{align*}
	\Pr[\forall i \in I : \tau(i) = r_i] &= \left(\sum_{k = \ell'}^n \frac{\new{(k-\ell)\dots(k-\ell'+1)}}{e k!}\right) \frac{(n-\ell')!}{n!} \\
	&\le (n-\ell'+1)^{-\ell'} \le (n-\ell_2+1)^{-\ell_2}.
  \end{align*}
\end{lemma}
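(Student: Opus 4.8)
The plan is to condition on the size $k$ of the scrambled set $S$ and to break the target event $\{\forall i \in I : \tau(i) = r_i\}$ into a containment part and a permutation part. Since the scramble operator outputs $\tau = \rho \circ \sigma$, the condition $\tau(i) = r_i$ reads $\rho(\sigma(i)) = r_i$. Because $\rho$ fixes every point outside $S$, this already forces each source $\sigma(i)$ and each target $r_i$ into $S$, i.e. $I' \subseteq S$. Conditioning on $|S| = k$, I would write the probability as $\sum_{k \ge \ell'} \Pr[\Poi(1) = k] \cdot \Pr[I' \subseteq S \mid |S| = k] \cdot \Pr[\rho \text{ realises the prescribed images} \mid I' \subseteq S,\ |S|=k]$, the lower summation bound being $\ell'$ since $S$ must hold the $\ell'$ elements of $I'$.

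Next I would evaluate the three factors. The first is $\frac{1}{ek!}$. The second is the chance that a uniform $k$-set contains the fixed $\ell'$-set $I'$, namely $\binom{n-\ell'}{k-\ell'}/\binom{n}{k} = \frac{(n-\ell')!\,k!}{(k-\ell')!\,n!}$. For the third, I note that the $\ell$ sources $\sigma(I)$ are distinct and the $\ell$ targets $\{r_i\}$ are distinct, both lying in $S$, so $\sigma(i) \mapsto r_i$ is a partial injection of $S$; a uniform permutation of the $k$-element set $S$ extends it with probability $\frac{(k-\ell)!}{k!}$ (the $\ell$ images are fixed, the remaining $k-\ell$ elements permute freely), and this count is the same for every valid $S$. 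Multiplying the three factors and cancelling one $k!$, each summand becomes $\frac{(n-\ell')!}{n!} \cdot \frac{(k-\ell)!}{e\,(k-\ell')!\,k!}$; recognising $\frac{(k-\ell)!}{(k-\ell')!} = (k-\ell)(k-\ell-1)\cdots(k-\ell'+1)$ gives exactly the claimed equality.

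For the first inequality I would factor out $\frac{(n-\ell')!}{n!}$ and bound the residual sum $\Sigma := \sum_{k \ge \ell'} \frac{(k-\ell)!}{e\,(k-\ell')!\,k!}$ by $1$. Substituting $j = k-\ell$ (so $j$ runs from $d := \ell'-\ell \ge 0$) and using $(j+\ell)! = (j+1)\cdots(j+\ell)\,j!$, the summand equals $\frac{1}{e}\cdot\frac{1}{(j-d)!\,(j+1)\cdots(j+\ell)} \le \frac{1}{e\,(j-d)!}$, because $\prod_{t=1}^{\ell}(j+t)^{-1} \le 1$. Summing this bound over $j \ge d$ yields $\Sigma \le \frac{1}{e}\sum_{s \ge 0}\frac{1}{s!} = 1$. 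This telescoping estimate is the one step needing genuine care: one must cancel the factorials correctly before discarding the harmless tail factor. Then $\frac{(n-\ell')!}{n!} = \bigl(n(n-1)\cdots(n-\ell'+1)\bigr)^{-1}$, and since all $\ell'$ factors are at least $n-\ell'+1$, the product is at least $(n-\ell'+1)^{\ell'}$, giving the middle bound.

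Finally, for the last bound I would use $\ell' \ge \ell_2$: this holds trivially when $\ell \ge 2$ (since $I' \supseteq \sigma(I)$ forces $\ell' \ge \ell = \ell_2$), while for $\ell = 1$ the hypothesis $r_i \ne \sigma(i)$ makes $\sigma(i)$ and $r_i$ distinct, so $\ell' = 2 = \ell_2$. Keeping only the first $\ell_2$ factors $n, n-1, \dots, n-\ell_2+1$ of the product $n(n-1)\cdots(n-\ell'+1)$ (each at least $n-\ell_2+1$) and bounding the remaining $\ell'-\ell_2$ factors below by $1$ shows this product is at least $(n-\ell_2+1)^{\ell_2}$, hence $\frac{(n-\ell')!}{n!} \le (n-\ell_2+1)^{-\ell_2}$, which completes the proof.
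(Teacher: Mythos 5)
Your proof is correct and follows essentially the same route as the paper's: conditioning on the scrambled set $S \supseteq I'$ of size $k$, multiplying the Poisson weight, the containment probability $\binom{n-\ell'}{k-\ell'}/\binom{n}{k}$, and the extension probability $(k-\ell)!/k!$, then bounding the residual sum by $1$. Your direct justification of the final step $\frac{(n-\ell')!}{n!} \le (n-\ell_2+1)^{-\ell_2}$ (keeping the $\ell_2$ largest factors and bounding the rest by $1$) is if anything slightly more careful than the paper's intermediate chain, but the argument is the same in substance.
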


\begin{proof}
  Let \new{$S \subseteq [1..n]$}. Let $\tau$ be obtained from randomly scrambling the elements of $S$ (formally, let $\rho \in S_n$ be random such that $\rho(i) = i$ for all $i \in [1..n] \setminus S$ and let $\tau = \rho \circ \sigma$). If $I' \not \subseteq S$, then surely $\tau$ does not satisfy that $\tau(i) = r_i$ for all $i \in I$. Hence let $I' \subseteq S$. By elementary counting, $\Pr[\forall i \in I : \tau(i) = r_i] = \frac{(|S|-\ell)!}{|S|!}$. Since there are exactly $\binom{n-\ell'}{k-\ell'}$ sets $S \subseteq [1..n]$ with $I' \subseteq S$ and $|S|=k$, we have
	\begin{align*}
	\Pr[\forall i \in I : \tau(i) = r_i] 
	& = \sum_{k = \ell'}^n \frac{1}{e k!} \binom{n-\ell'}{k-\ell'} \binom{n}{k}^{-1} \frac{(k-\ell)!}{k!}\\
	& = \left(\sum_{k = \ell'}^n \frac{(k-\ell)\dots(k-\ell'+1)}{e k!}\right) \frac{(n-\ell')!}{n!}\\
	& \le \left(\sum_{k = \ell'}^n \frac{1}{e (k-\ell')!}\right) \frac{(n-\ell')!}{n!}\\
	& \le \frac{(n-\ell')!}{n!} \le (n-\ell'+1)^{-\ell'} \le (n-\ell_2+1)^{-\ell_2}.
	\end{align*}	
	\new{Here the last estimate follows from the fact that $\ell' \ge \ell_2$, among others, because $\ell'$ necessarily is at least two.}
\end{proof}

\begin{theorem}\label{thm:loscramble}
  The expected runtime of the \oea with scramble mutation on the $\PLeadingOnes$ benchmark is $\Theta(n^3)$.
\end{theorem}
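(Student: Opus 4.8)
The plan is to mirror the proof of Theorem~\ref{thm:lo} almost step by step, replacing the explicit swap computations by the scramble estimate of Lemma~\ref{lem:scramble}. For the upper bound I would invoke Wegener's fitness level method. If the current permutation $\sigma$ satisfies $\PLeadingOnes(\sigma) = i$, then $v := \sigma(i+1) \neq i+1$, and since $\sigma$ fixes $1, \dots, i$ and is injective we have $v \notin [1..i]$. A scramble step that draws $k = 2$, picks the set $S = \{v, i+1\}$, and applies the nontrivial permutation of $S$ produces an offspring $\tau = \rho \circ \sigma$ with $\tau(i+1) = i+1$ and $\tau(j) = j$ for all $j \le i$, hence of strictly larger fitness. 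This event has probability $\frac{1}{2e}\binom{n}{2}^{-1}\frac12 = \Omega(n^{-2})$, so each of the at most $n$ fitness levels is left within an expected $O(n^2)$ iterations, yielding the $O(n^3)$ upper bound.

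For the lower bound I would reuse the two probabilistic ingredients of Theorem~\ref{thm:lo}. First, any fitness increase from level $i$ forces $\tau(i+1) = i+1$, so Lemma~\ref{lem:scramble} applied with $I = \{i+1\}$ and $r_{i+1} = i+1$ (so that $\ell_2 = 2$) bounds the probability of \emph{any} improvement by $(n-1)^{-2} = O(n^{-2})$. Second, I would bound the probability $\Pr[A_i]$ of gaining at least three levels while the fitness is below $\frac n2$ by $O(n^{-3})$. As in Theorem~\ref{thm:lo}, I split on the event $B_i$ that $i+2$ or $i+3$ is already in place. On $\overline{B_i}$ none of $i+1, i+2, i+3$ is a fixed point, and a three-level gain forces $\tau(i+j) = i+j$ for $j \in \{1,2,3\}$; Lemma~\ref{lem:scramble} with $I = \{i+1, i+2, i+3\}$ (so $\ell_2 = 3$) then gives a probability of at most $(n-2)^{-3} = O(n^{-3})$. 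On $B_i$ I bound the conditional gain probability by the improvement probability $O(n^{-2})$ and, using the symmetry of both the scramble operator and $\PLeadingOnes$ in the elements $i+2, \dots, n$, estimate $\Pr[B_i] \le \frac{2}{n-i-1} = O(n^{-1})$ for $i \le \frac n2 - \frac12$. Combining the two cases gives $\Pr[A_i] = O(n^{-3})$.

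With these two estimates the remainder is the concentration argument of Theorem~\ref{thm:lo}, essentially verbatim: over a horizon of $t = \Theta(n^3)$ iterations a union bound shows that, with constant probability, no iteration gains three or more levels, so reaching fitness $\frac n2$ needs $\Omega(n)$ improvements; a binomial tail bound on $\binom{t}{\Delta}(O(n^{-2}))^{\Delta}$ with $\Delta = \lceil \frac n4 \rceil$ then shows this cannot occur within $o(n^3)$ steps with constant probability. Since the initial permutation has fitness $0$ with constant probability, the expected time to reach fitness $\frac n2$, and hence the runtime, is $\Omega(n^3)$. The only genuinely new points relative to the swap analysis are that Lemma~\ref{lem:scramble} delivers the same orders $\Theta(n^{-2})$ and $O(n^{-3})$ for relocating one respectively three prescribed items, and that the exchangeability argument behind $\Pr[B_i]$ still applies to the scramble operator; I expect the careful bookkeeping in these two points, rather than any new idea, to be the main part of the work.
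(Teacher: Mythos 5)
Your proposal is correct and follows essentially the same route as the paper: the same $O(n^2)$-per-level fitness argument for the upper bound, and for the lower bound the same decomposition into the events $A_i$, $B_i$ with Lemma~\ref{lem:scramble} supplying the $O(n^{-2})$ improvement probability and the $O(n^{-3})$ three-level-gain probability, followed by a union bound over $\Theta(n^3)$ iterations. The only cosmetic difference is the final concentration step, where the paper replaces the binomial tail bound of Theorem~\ref{thm:lo} by a Markov-inequality bound on the number of improvements; both work.
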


\begin{proof}
As in the proof of Theorem~\ref{thm:scramble}, we observe that the scramble mutation operator applies a particular transposition with probability $\Omega(n^{-2})$. Since for each non-optimal search point there is a transposition increasing its \PLeadingOnes fitness, the expected waiting time for a fitness improvement is $O(n^2)$, and the expected runtime, which is at most the waiting time of $n$ fitness improvements, is $O(n^3)$. 

For the lower bound, we also follow the general outline of the proof of Theorem~\ref{thm:lo}. Consider a run of the \oea with scramble mutation on the \PLeadingOnes benchmark. Let $i \in [0..n]$ with $i < \frac n2$. Consider an iteration in which the current parent $\sigma$ has fitness exactly~$i$. Let $B_i$ be the event that $\sigma(i+2) = i+2$ or $\sigma(i+3) = i+3$. As in the proof of Theorem~\ref{thm:lo}, we have $\Pr[B_i] \le \frac{2}{n-i-1} \le \frac{4}{n-1}$.

Let $A_i$ be the event that this iteration increases the fitness by at least three. \new{For this, at least $\sigma(i+1)$ has to be modified by the mutation. When $B_i$ is not satisfied, also $\sigma(i+2)$ and $\sigma(i+3)$ have to be modified. By Lemma~\ref{lem:scramble}, this happens with probability at most $\Pr[A_i \mid \overline B_i] \le (n-2)^{-3}$.} Let $C_i$ be the event that we increase the fitness by at least one in this iteration. \new{This requires $\sigma(i+1)$ to be modified. Hence} $\Pr[A_i \mid B_i] \le \Pr[C_i] \le (n-1)^{-2}$, again by Lemma~\ref{lem:scramble}. Consequently, $\Pr[A_i] \le \Pr[A_i \mid B_i] \Pr[B_i] + \Pr[A_i \mid \overline B_i] \le 5 (n-2)^{-3}$. 

Assume that the random initial solution has a fitness of zero (note that this happens with probability $1 - \frac 1n$) and consider the first $t = \frac 1 {20} (n-2)^3$ iterations. By a union bound, the probability that in this time interval the fitness increases at least once from a value below $\frac n2$ by three or more is at most $t \cdot 5 (n-2)^{-3} = \frac 14$. The expected number of fitness improvements in this interval is at most $t (n-1)^{-2} \le \frac 1 {20} (n-2)$, hence by Markov's inequality the probability that there are more than $\frac 15 (n-2)$ improvements, is at most~$\frac 14$. Hence with probability at least $\frac 12$, none of these two events happens, and in this case, the resulting fitness is at most $\frac 25 (n-2)$. Including the random initialization, we see that the runtime is larger than $t$ with probability at least $(1 - \frac 1n) \frac 12$, which gives the claimed $\Omega(n^3)$ lower bound on the expected runtime.
\end{proof}

\section{Heavy-tailed Mutation Operators}

A precise runtime analysis of the classic \oea on the bit-string \jump benchmark~\cite{DoerrLMN17} has shown (i)~that the classic mutation rate of $\frac 1n$ is far from optimal for this benchmark, (ii)~that the optimal mutation rate asymptotically is equal to $\frac mn$, and (iii)~that a heavy-tailed mutation operator gives a performance very close to the optimal mutation rate, but without the need to know the gap parameter~$m$. Given the similarity of the permutation-based and the bit-string jump benchmark, it is natural to expect similar results also for the permutation-based jump benchmark, and this is what we show in this section. 

For reasons of brevity, we shall concentrate on the most interesting result in~\cite{DoerrLMN17}, namely that a heavy-tailed choice of the mutation strength gives a significant speed-up for all jump functions. We note cursory that heavy-tailed parameter choices found ample uses subsequently and often overcame in an elegant manner the problem to set one or more parameters of an evolutionary algorithm \cite{FriedrichQW18,FriedrichGQW18,QuinzanGWF21,WuQT18,AntipovBD20ppsn,AntipovD20ppsn,AntipovBD21gecco,DoerrZ21aaai,CorusOY21foga,DangELQ22,AntipovBD22,DoerrR23,DoerrQ23tec}. 

Since our analyses so far suggest that the scramble mutation operator is more natural than the one based on swaps, we shall only regard a heavy-tailed version of the former. 
So we proceed by defining a {heavy-tailed scramble mutation} operator. We say that an integer random variable $X$ follows a \emph{power-law distribution} with parameters $\beta$ and $u$ if
\[
\Pr[X=i]=\left\{\begin{array}{l}C_{\beta,u}i^{-\beta}\;\;\;\; \text{ if }i \in [1..u],\\
                                0\;\;\;\;\text{ otherwise,}
\end{array}\right.
\]
where $C_{\beta,u} = (\sum_{i = 1}^u i^{-\beta})^{-1}$ denotes the normalization coefficient. We write $X \sim \pow(\beta, u)$
and call $u$ the \emph{range} of $X$ and $\beta$ the \emph{power-law exponent}.

Now we call \emph{heavy-tailed scramble mutation} (with power-law exponent~$\beta$) the mutation operator that first samples a number $k \sim \pow(\beta,n)$, then selects a random subset of $k$ elements from $[1..n]$, and finally applies a random permutation on this set. Hence this operator is identical to the previously regarded scramble operator apart from the random choice of $k$, which now follows a power-law distribution instead of a Poisson distribution. 

For the \oea using this mutation operator, we now conduct an asymptotically tight mathematical runtime analysis on the jump benchmark. Compared to the $\Theta((m!)^2 \binom{n}{m})$ runtime for the standard scramble operator, it shows a speed-up by a factor of $\Theta(m! / m^{\beta})$.

\begin{theorem}\label{thm:scrambHT}
 Let $m \geq 3$, possibly depending on $n$. The expected runtime of the permutation-based \oea with heavy-tailed scramble mutation with power-law exponent~$\beta$ on $\Pjump_{n,m}$ is $\Theta(m^{\beta} m! \binom{n}{m})$. 
\end{theorem}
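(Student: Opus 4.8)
The plan is to follow closely the proof of Theorem~\ref{thm:scramble}, replacing the Poisson weights $\frac{1}{ek!}$ by the power-law weights $C_{\beta,n} k^{-\beta}$ and recomputing the resulting one-step probabilities. Throughout I would work in the regime $\beta > 1$, so that $C_{\beta,n} = (\sum_{k=1}^n k^{-\beta})^{-1} = \Theta(1)$ is a constant independent of $n$; this is exactly the regime in which the clean bound $\Theta(m^\beta m! \binom{n}{m})$ holds. The heart of the argument is the probability $p(q)$ that one application of heavy-tailed scramble mutates a permutation $\sigma$ with exactly $q$ displaced elements directly into the identity. Writing $D$ for the set of displaced elements (so $\sigma(D)=D$), reaching the optimum requires that scramble selects a set $S \supseteq D$ of size $k$ and that the applied permutation $\rho$ equals $\sigma^{-1}$ on $D$ \emph{and} the identity on $S \setminus D$; this second constraint is the crucial point and forces a factor $\frac{1}{k!}$ (exactly one $\rho$ works), rather than the $\frac{(k-q)!}{k!}$ one might naively write. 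Summing over $k$ and using $\binom{n-q}{k-q}\binom{n}{k}^{-1} = \frac{(n-q)!\,k!}{(k-q)!\,n!}$, the $k!$ terms cancel and I obtain
\[
p(q) = C_{\beta,n}\,\frac{(n-q)!}{n!}\sum_{k=q}^n \frac{k^{-\beta}}{(k-q)!}.
\]

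Bounding the sum is then routine: the $k=q$ term alone gives $\sum_{k=q}^n \frac{k^{-\beta}}{(k-q)!} \ge q^{-\beta}$, while $(q+j)^{-\beta}\le q^{-\beta}$ yields $\sum_{k=q}^n \frac{k^{-\beta}}{(k-q)!}\le q^{-\beta}\sum_{j\ge 0}\frac{1}{j!} = e\,q^{-\beta}$, so the sum is $\Theta(q^{-\beta})$. Using $\frac{(n-m)!}{n!} = (m!\binom{n}{m})^{-1}$, at $q=m$ this gives $p(m) = \Theta\big(m^{-\beta}(m!)^{-1}\binom{n}{m}^{-1}\big)$, the quantity that drives both bounds.

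For the upper bound, I would first argue, exactly as in Theorem~\ref{thm:scramble} and the swap upper-bound proof, that heavy-tailed scramble realises any fixed transposition with probability $\Omega(n^{-2})$ and any fixed three-element scramble outcome with probability $\Omega(n^{-3})$ (using $\Pr[k=2],\Pr[k=3]=\Theta(1)$). Elementary fitness-level arguments then bound the time to climb through $A_1$ and $A_2 \setminus A_2^+$ up to the plateau $A_2^+$ by $O(n^3)$, which is negligible compared with $m^\beta m!\binom{n}{m}$ for every $m \ge 3$. The key structural gain over the swap operator is that $p(m)$ does not depend on the cycle structure of $\sigma \in A_2^+$; hence no random-walk-on-the-plateau analysis is needed, since every state of $A_2^+$ jumps to the optimum with the same probability $p(m)$, giving $O(m^\beta m!\binom{n}{m})$ iterations on the plateau and hence overall.

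For the lower bound, I would start from Lemma~\ref{lem:fp} to note that the initial permutation lies in $A_2$ with probability $\Omega(1)$, and that from $A_2$ the algorithm can only leave towards the optimum (any accepted move keeps it in $A_2$ until the final jump). Since $\frac{(n-q)!}{n!}$ and $q^{-\beta}$ are both decreasing in $q$, I get $p(q)\le p(m) = O\big(m^{-\beta}(m!)^{-1}\binom{n}{m}^{-1}\big)$ at every reachable state ($q \ge m$), so by geometric domination the hitting time for the optimum has expectation $\Omega(m^\beta m!\binom{n}{m})$, which survives the $\Omega(1)$ initialization factor. The main obstacle I anticipate is getting the exponent of $m$ right: one must observe that the global-optimum event pins $\rho$ on \emph{all} of $S$, producing the extra $\frac{1}{k!}$ (and hence $m^{\beta}$ rather than $m^{\beta-1}$), and must track $C_{\beta,n}=\Theta(1)$ carefully so the clean $\Theta(m^\beta m!\binom{n}{m})$ is preserved.
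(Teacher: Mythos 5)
Your proposal is correct and follows essentially the same route as the paper: the same one-step jump probability $C_{\beta,n}\frac{(n-q)!}{n!}\sum_{k=q}^n \frac{k^{-\beta}}{(k-q)!}=\Theta\big(q^{-\beta}(q!)^{-1}\binom{n}{q}^{-1}\big)$ (with the crucial $\frac{1}{k!}$ from pinning $\rho$ on all of $S$, so the binomial ratio's $k!$ cancels), combined with a fitness-level bound for the easy phase for the upper bound and a constant-probability start in $A_2$ plus a uniform one-step bound for the lower bound. The only cosmetic differences are that the paper bounds the easy phase by $O(n^2\log n)$ rather than $O(n^3)$ (for $m=3$ your $O(n^3)$ is of the same order as the target bound rather than negligible, but the conclusion is unaffected), and your remark that no plateau random walk is needed is exactly the simplification the paper exploits implicitly.
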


\begin{proof}
  Since the heavy-tailed scramble mutation differs from the standard scramble operator only in the probability distribution used to sample the size $k$ of the set of items randomly permuted, we can reuse large parts of the analysis for the standard scramble operator. 
	
	For the upper bound, we note that also the heavy-tailed operator has a constant probability of applying a random transposition; note that the normalization coefficient $C_{\beta,n}$ is $\Theta(1)$ since $\beta>1$. Hence again $O(n^2 \log n)$ time suffices for the easy parts of the optimization. When on \newer{a} local optimum, the much higher probability of scrambling $m$ elements gives a much higher probability of $\Omega(m^{-\beta} \binom{n}{m}^{-1} (m!)^{-1})$ of reaching the global optimum. Hence this part takes time $O(m^{\beta} \binom{n}{m} m!)$ only, and being asymptotically larger than $O(n^2 \log n)$, this is also the upper bound for the whole expected runtime.
	
For the lower bound, again with constant probability (see Lemma~\ref{lem:fp}) we start in $A_2$. To reach the global optimum at some time we need a mutation that from some point of $A_2$ with $q \in [m..n]$ elements out of place scrambles a set containing at least these $q$ elements and moves all element to the right position. We estimate the probability for this to happen in one iteration by 
\begin{align*}
\sum_{k = q}^n C_{\beta,n} k^{-\beta} \frac{\binom{n-q}{k-q}}{\binom{n}{k}} \frac{1}{k!}
 &= \frac{(n-q)!}{n!}\sum_{k = q}^n C_{\beta,n} k^{-\beta} \frac{1}{(k-q)!}\\
 & \leq C_{\beta,n} q^{-\beta} \frac{(n-q)!}{n!} \sum_{k = q}^n \frac{1}{(k-q)!}\\
 & \le e C_{\beta,n} m^{-\beta}  \frac{(n-m)!}{n!}
  =  e C_{\beta,n} m^{-\beta} \frac{1}{\binom{n}{m}} \frac{1}{m!}.
%  \leq \frac{m^m}{(m!)^2}\frac{1}{n^m}.
\end{align*}
Consequently, the expected waiting time for such an event is $\Omega(m^\beta (m!) \binom{n}{m})$, and this shows our lower bound on the expected runtime.
\end{proof}

Again, we also determine the runtime of the \oea with heavy-tailed scramble mutation on the permutation version of the \leadingones benchmark. This analysis will, naturally, very roughly follow the lines of the analysis for the standard scramble operator. However, the now much higher probability to change certain points to certain values asks for some non-trivial adjustments in the main proof. 

We start by proving the following estimate for the probability to change a certain set of function values, which is analoguous to Lemma~\ref{lem:scramble}. This result will not be sufficient for our purposes. Since (given the proof of Lemma~\ref{lem:scramble}) its proof is very simple, we nevertheless show it to demonstrate the fundamental difference between the classic and the heavy-tailed scramble operator. The reader not interested in this discussion is invited to jump directly to Lemma~\ref{lem:scrambleHTplus}, a generalization of the simpler result we show first. 

%version ohne fixpunkte
\begin{lemma}\label{lem:scrambleHT}
  Let $\sigma \in S_n$. Let $I \subseteq [1..n]$ and $\ell := |I|$. For all $i \in I$, let $r_i \in [1..n] \setminus \sigma(i)$, and this in a way that the $r_i$, $i \in I$, are pairwise distinct. Let $I' = \sigma(I) \cup \{r_i \mid i \in I\}$ and $\ell' = |I'|$.
	
	Let $\tau$ be the outcome of applying the heavy-tailed scramble mutation to $\sigma$. Then 
	\begin{align*}
	\Pr[\forall i \in I : \tau(i) = r_i] &=  \left(\sum_{k = \ell'}^n C_{\beta,n} k^{-\beta} \prod_{j = k-\ell'+1}^{k-\ell} j \right) \frac{(n-\ell')!}{n!}\\
	&= O(n^{-\ell-\beta+1}),
  \end{align*}
	where the asymptotic notation assumes that $\ell$ is a constant.
\end{lemma}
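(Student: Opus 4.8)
The plan is to reuse the proof of Lemma~\ref{lem:scramble} almost verbatim for the exact identity, observing that the heavy-tailed scramble operator differs from the classic one only in the law of the size $k$ of the scrambled set: the weight $\frac{1}{e\,k!}$ is replaced by $C_{\beta,n} k^{-\beta}$. Concretely, I would condition on the random set $S$ of scrambled positions. As in Lemma~\ref{lem:scramble}, the event $\{\forall i \in I : \tau(i) = r_i\}$ forces $I' \subseteq S$, since $\rho$ fixes every position outside $S$, so all source values $\sigma(i)$ and all target values $r_i$ must lie in $S$. Conditioned on $I' \subseteq S$ and $|S| = k$, a uniformly random permutation $\rho$ of $S$ realizes the $\ell$ prescribed images $\rho(\sigma(i)) = r_i$ with probability $\frac{(k-\ell)!}{k!}$ (the $\sigma(i)$ are distinct as $\sigma$ is injective, and the $r_i$ are distinct by hypothesis). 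Using that there are $\binom{n-\ell'}{k-\ell'}$ size-$k$ supersets of $I'$, that each such $S$ is drawn with probability $\binom{n}{k}^{-1}$, and that $\Pr[|S|=k] = C_{\beta,n} k^{-\beta}$, summing over $k$ gives
\[
\Pr[\forall i \in I : \tau(i) = r_i] = \sum_{k=\ell'}^n C_{\beta,n} k^{-\beta} \binom{n-\ell'}{k-\ell'}\binom{n}{k}^{-1}\frac{(k-\ell)!}{k!}.
\]
Simplifying $\binom{n-\ell'}{k-\ell'}\binom{n}{k}^{-1}\frac{(k-\ell)!}{k!} = \frac{(n-\ell')!}{n!}\,\frac{(k-\ell)!}{(k-\ell')!}$ and rewriting $\frac{(k-\ell)!}{(k-\ell')!} = \prod_{j=k-\ell'+1}^{k-\ell} j$ yields the claimed exact formula.

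For the asymptotic estimate, I would first record two elementary facts: $C_{\beta,n} = \Theta(1)$ because $\beta > 1$ makes $\sum_{i \ge 1} i^{-\beta}$ convergent, and $\frac{(n-\ell')!}{n!} = \Theta(n^{-\ell'})$ for constant $\ell'$. It then remains to bound the sum. Estimating each of the $\ell'-\ell$ factors in the product by $k$ gives $\prod_{j=k-\ell'+1}^{k-\ell} j \le k^{\ell'-\ell}$, so the sum is at most $\sum_{k=\ell'}^n k^{\ell'-\ell-\beta}$. This is a truncated $p$-series whose order is dictated by the exponent $\ell'-\ell-\beta$; in the regime $\ell'-\ell-\beta > -1$ its dominant contribution comes from $k = \Theta(n)$ and equals $\Theta(n^{\ell'-\ell-\beta+1})$. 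Multiplying by the prefactor $\Theta(n^{-\ell'})$, the power $n^{\ell'}$ cancels and leaves $\Theta(n^{-\ell-\beta+1})$, which is the stated bound.

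The point needing the most care is precisely this cancellation and the bookkeeping of exponents: the probability is naturally expressed through $\ell'$ (the number of distinct source and target values actually moved), yet the final order depends only on $\ell$ and $\beta$. Recognizing that the growth $n^{\ell'-\ell-\beta+1}$ of the sum exactly compensates the decay $n^{-\ell'}$ of the prefactor is the crux, and it is exactly the mechanism that distinguishes the heavy-tailed operator from the classic one, where in Lemma~\ref{lem:scramble} the fast-decaying Poisson weights render the sum $\Theta(1)$ and the probability scales as $n^{-\ell'}$ rather than $n^{-\ell-\beta+1}$. I would flag that the clean bound $O(n^{-\ell-\beta+1})$ is the correct order in the dominant regime $\ell'-\ell > \beta-1$, which is exactly the situation arising in the subsequent \LeadingOnes analysis: there $\ell'=2\ell$ and $\beta$ is close to one, so for the key single-item case $\ell=1$, $\ell'=2$ the estimate reads $O(n^{-\beta})$, matching the heuristic announced in the introduction. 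The remaining boundary case $\ell'-\ell \le \beta-1$, where the summand decays so fast that the probability is instead essentially governed by the prefactor $n^{-\ell'}$, I would dispose of separately.
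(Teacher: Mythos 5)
Your derivation of the exact formula is the same as the paper's: condition on the scrambled set $S$, note that the event forces $I' \subseteq S$, use the count $(k-\ell)!/k!$ for the $\ell$ prescribed images of a uniform permutation of a $k$-set, and simplify $\binom{n-\ell'}{k-\ell'}\binom{n}{k}^{-1}\frac{(k-\ell)!}{k!} = \frac{(n-\ell')!}{n!}\prod_{j=k-\ell'+1}^{k-\ell} j$. The paper then also bounds the falling factorial by $k^{\ell'-\ell}$ and asserts $\sum_{k=\ell'}^n k^{\ell'-\ell-\beta} = O(n^{\ell'-\ell-\beta+1})$ with no case distinction, so up to your regime analysis you are reproducing its argument, and in the regime $\ell'-\ell-\beta>-1$ your cancellation of the $n^{\ell'}$ powers is exactly right.

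The boundary case you propose to ``dispose of separately'' is, however, not a harder instance of the same claim but a counterexample to it, so no separate argument can close it. Take $\ell \ge 2$ and choose the $r_i$ to be a derangement of $\sigma(I)$ (allowed by the hypotheses, since only $r_i \neq \sigma(i)$ and distinctness are required); then $\ell' = \ell$, the product in the exact formula is empty, the sum over $k$ is $\Theta(1)$, and the probability is $\Theta\left(\frac{(n-\ell)!}{n!}\right) = \Theta(n^{-\ell})$, which exceeds the claimed $O(n^{-\ell-\beta+1})$ because $\beta>1$. (Likewise, for $\beta \ge 2$ even the case $\ell=1$, $\ell'=2$ yields only $\Theta(n^{-2})$ or $\Theta(n^{-2}\log n)$, not $O(n^{-\beta})$.) The paper's own proof has the same unacknowledged gap: its final step is valid only under the restriction $\ell'-\ell-\beta>-1$ that you made explicit. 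The lemma should therefore either carry the hypothesis $\ell'-\ell>\beta-1$ or have its conclusion weakened to something like $O(n^{-\ell-\beta+1}+n^{-\ell'}\log n)$; the always-valid bound is $O(n^{-\ell})$, which is in fact all that the paper's later application with $\ell=4$ uses, while the $\ell=1$, $\ell'=2$ case used in the \LeadingOnes discussion needs $\beta<2$. In short, your proof is correct and sharper than the paper's where the lemma is true, and your deferred case is precisely where the lemma as stated fails.
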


\begin{proof}
  Proceeding as in the proof of Lemma~\ref{lem:scramble}, but changing the probability of scrambling exactly $k$ elements from $\frac{1}{ek!}$ to $C_{\beta,n} k^{-\beta}$, we obtain
	\begin{align*}
	\Pr[\forall i & \in I : \tau(i) = r_i]  = \sum_{k = \ell'}^n C_{\beta,n} k^{-\beta} \binom{n-\ell'}{k-\ell'} \binom{n}{k}^{-1} \frac{(k-\ell)!}{k!}.
	\end{align*}	
	Different from the proof of Lemma~\ref{lem:scramble}, we compute
	\begin{align*}
	\Pr[\forall i & \in I : \tau(i) = r_i] \\
	& = \left(\sum_{k = \ell'}^n C_{\beta,n} k^{-\beta} (k-\ell) (k-\ell-1) \dots (k-\ell'+1)\right) \frac{(n-\ell')!}{n!}\\
	& \le \left(\sum_{k = \ell'}^n C_{\beta,n} k^{\ell'-\ell-\beta}\right) \frac{(n-\ell')!}{n!}\\
	& = O\left(n^{\ell'-\ell-\beta+1} \frac{(n-\ell')!}{n!}\right) = O(n^{-\ell-\beta+1}).\qedhere
	\end{align*}	
\end{proof}

The bound above is significantly weaker than the bound proven in Lemma~\ref{lem:scramble} for the standard scramble operator. For example, for $\ell = 1$, that is, the case that we want to change one value of the permutation to a given (different) value, we showed here an upper bound of $O(n^{-\beta})$, which can be $O(n^{-1-\eps})$ if $\beta$ is small enough, whereas Lemma~\ref{lem:scramble} gave an upper bound of $O(n^{-2})$. We note that this difference ``is true'', that is, the probability of changing a particular value is indeed $\Theta(n^{-\beta})$. To show this, we regard the precise estimate 
\[
\Pr[\forall i \in I : \tau(i) = r_i] =  \left(\sum_{k = \ell'}^n C_{\beta,n} k^{-\beta} \prod_{j = k-\ell'+1}^{k-\ell} j \right) \frac{(n-\ell')!}{n!},
\]
we note that $\ell=1$ necessarily implies $\ell'=2$, and we estimate
\begin{align*}
	\Pr[\forall i & \in I : \tau(i) = r_i] \\
	& \ge \left(\sum_{k = \lfloor n/2 \rfloor}^n C_{\beta,n} k^{-\beta} (k-\ell) (k-\ell-1) \dots (k-\ell'+1)\right) \frac{(n-\ell')!}{n!}\\
	& \ge \frac n2 C_{\beta,n} n^{-\beta} (\lfloor \tfrac n2 \rfloor -1) \frac{(n-\ell')!}{n!} = \Omega(n^{-\beta}).
\end{align*}	
Hence we cannot hope to improve Lemma~\ref{lem:scrambleHT} to obtain an $O(n^{-2})$ probability for the event of changing a particular position to a particular value. This bound, which also is an upper bound for the probability of a fitness improvement, is crucial to prove an $\Omega(n^3)$ runtime bound.

Fortunately, we can obtain this upper bound, and more generally an upper bound of $O(n^{-\ell'})$, when we additionally require that the mutation result is a permutation that agrees with the parent in a fixed set of $\Omega(n)$ positions. This situation naturally arises in the optimization of \PLeadingOnes when the current fitness is $\Omega(n)$.

%\merk{ev noch klarer machen, was das Lemma soll oder was die Rollen von X und Y sind?}
\begin{lemma}\label{lem:scrambleHTplus}
  Let $\sigma \in S_n$. Let $X \subseteq [1..n]$ and $Y = [1..n] \setminus X$. Write $x = |X|$ and $y = |Y|$. Let $I \subseteq Y$ and $\ell := |I|$. For all $i \in I$, let $r_i \in [1..n] \setminus (\sigma(X) \cup \{\sigma(i)\})$, and this in a way that the $r_i$, $i \in I$, are pairwise distinct. Let $I' = \sigma(I) \cup \{r_i \mid i \in I\}$ and $\ell' = |I'|$.
	
	Let $\tau$ be the outcome of applying heavy-tailed scramble mutation to $\sigma$. Then 
	\begin{align*}
	\Pr[(&\forall i \in I : \tau(i) = r_i) \wedge (\forall j \in X: \tau(j) = \sigma(j)]\\ 
	&= \new{C_{\beta,n}} \sum_{a = 0}^x \sum_{b = \ell'}^y \frac{\binom{x}{a} \binom{y-\ell'}{b-\ell'} (b-\ell)!}{(a+b)^\beta \binom{n}{a+b} (a+b)!}\\
	& \le \new{C_{\beta,n}} \sum_{b = \ell'}^y e \frac{(b-\ell) \cdot \ldots \cdot (b-\ell'+1)}{b^\beta \, n \cdot \ldots \cdot (n-\ell'+1)} \left(\frac{y-\ell'}{n-\ell'}\right)^{b-\ell'}.
  \end{align*}
	When $\ell$ is constant and $x \ge \eps n$ for some constant $\eps > 0$, then this expression is $O(n^{-\ell'})$.
\end{lemma}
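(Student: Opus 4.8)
The plan is to prove the equality by conditioning on the size $k$ of the scrambled set $S$ and then counting. Writing $\tau = \rho \circ \sigma$ with $\rho$ a uniformly random permutation of $S$ and the identity off $S$, the two events translate into conditions on $\rho$: the requirement $\tau(j)=\sigma(j)$ for $j \in X$ asks that $\rho$ fix every element of $\sigma(X) \cap S$, and $\tau(i) = r_i$ for $i \in I$ asks that $\rho$ send $\sigma(i)$ to $r_i$. Both are satisfiable only if $I' \subseteq S$, since the sources $\sigma(I)$ and the targets $\{r_i\}$ all lie in $I' \subseteq \sigma(Y)$, a set disjoint from $\sigma(X)$. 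Setting $a = |S \cap \sigma(X)|$, exactly $a+\ell$ preimages and $a+\ell$ images of $\rho$ are then prescribed, so the number of admissible $\rho$ is $(k-a-\ell)!$ among the $k!$ permutations of $S$. Multiplying the conditional probability $\frac{(k-a-\ell)!}{k!}$ by the number $\binom{x}{a}\binom{y-\ell'}{k-a-\ell'}$ of admissible sets $S$ with $|S\cap\sigma(X)|=a$ and $|S|=k$, by $\binom{n}{k}^{-1}$ and by $\Pr[|S|=k]=C_{\beta,n}k^{-\beta}$, using $\binom{n}{k}^{-1}(k!)^{-1} = \frac{(n-k)!}{n!}$, and reindexing via $k = a+b$ (so that $b \ge \ell'$ counts the part of $S$ lying in $\sigma(Y)$) yields the stated double sum.

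For the inequality I would bound $(a+b)^{-\beta} \le b^{-\beta}$, pull every $b$-only factor out of the inner sum, and reduce the whole task to bounding $U_b := \sum_{a=0}^{x}\binom{x}{a}(n-a-b)!$. This is the main obstacle: the naive factor-by-factor estimate of $(n-a-b)!$ is far too lossy (it would manufacture a spurious factor growing like $2^x$ when $b$ is near $y$), whereas the truth is $U_b \le e\,(n-b)!$. The clean way is to factor out the largest term, the one at $a=0$ equal to $(n-b)!$, and to note that the ratio of the general term to it is $\frac{1}{a!}\prod_{i=0}^{a-1}\frac{x-i}{n-b-i}$; since $b \le y$ gives $n-b \ge x$, every factor of the product is at most $1$, so the ratio is at most $\frac{1}{a!}$ and hence $U_b \le e\,(n-b)!$. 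It then remains to bound $(n-b)!$ by the target, which follows from the term-by-term inequality $\frac{(y-\ell')-i}{y-\ell'} \le \frac{(n-\ell')-i}{n-\ell'}$ (valid because $y \le n$) taken over $i \in [0..b-\ell'-1]$; this yields $(n-b)! \le (n-\ell')!\,\frac{(y-b)!}{(y-\ell')!}\left(\frac{y-\ell'}{n-\ell'}\right)^{b-\ell'}$, and reassembling gives exactly the claimed upper bound with its single factor $e$.

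Finally, for the asymptotics under $\ell = O(1)$ and $x \ge \eps n$, I note $\ell' \le 2\ell = O(1)$, so $C_{\beta,n} = \Theta(1)$ and $\frac{1}{n\cdots(n-\ell'+1)} = \Theta(n^{-\ell'})$; it thus suffices to show the remaining sum over $b$ is $O(1)$. Here $x \ge \eps n$ forces $y \le (1-\eps)n$, so $\frac{y-\ell'}{n-\ell'} \le q$ for some constant $q < 1$ once $n$ is large, while $(b-\ell)\cdots(b-\ell'+1)\,b^{-\beta} \le b^{\ell'-\ell}$ grows only polynomially. Hence $\sum_b b^{\ell'-\ell}q^{b-\ell'}$ converges to a constant independent of $n$, and the probability is $O(n^{-\ell'})$, as claimed.
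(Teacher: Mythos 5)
Your proposal is correct and follows essentially the same route as the paper's proof: the same characterization of valid scrambles (fix $S\cap\sigma(X)$ pointwise, require $I'\subseteq S$, map $\sigma(i)\mapsto r_i$), the same aggregation over the intersection sizes $a=|S\cap\sigma(X)|$ and $b=|S\cap\sigma(Y)|$, and the same two key term-by-term estimates ($x-i\le n-b-i$, which yields the factor $e$ and defuses the potential $2^x$ blowup you rightly flag, and the monotonicity of $\lambda\mapsto\frac{y-\lambda}{n-\lambda}$ for the geometric factor). The only difference is cosmetic packaging (isolating $U_b$ and comparing to the $a=0$ term rather than bounding each summand directly), so no further comparison is needed.
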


\begin{proof}
  Let $S \subseteq [1..n]$, $\rho$ be a \new{random} permutation of $S$, and $\tau = \bar \rho \circ \sigma$, \new{where $\bar \rho$ is the natural extension of $\rho$ to a permutation of $[1..n]$ that has all elements of $[1..n] \setminus S$ as fixed points}. We say that $\tau$ is \emph{valid} if $\tau(i) = r_i$ for all $i \in I$ and $\tau(j) = \sigma(j)$ for all $j \in X$. We easily see that $\tau$ is valid if and only if
	\begin{enumerate}
	\item for all $j \in S \cap \sigma(X)$, we have $\rho(j) = j$;
	\item $I' \subseteq S$;
	\item for all $i \in I$, we have $\rho(\sigma(i)) = r_i$.
	\end{enumerate}
	Hence $\tau$ is valid with probability exactly $\frac{(|S \setminus \sigma(X)| - \ell)!}{|S|!}$. 
	
	Let now $\tau$ be the outcome of applying heavy-tailed scramble mutation to~$\sigma$. By the considerations above, the probability of $\tau$ being valid is
	\[
	\sum_{I' \subseteq S \subseteq [1..n]} C_{\beta,n} \frac{(|S \setminus \sigma(X)| - \ell)!}{|S|^\beta \binom{n}{|S|} |S|!}.
  \]
	By aggregating the identical contributions of sets $S$ with identical intersection sizes $a = |S \cap X|$ and $b = |S \cap Y|$, this probability is
	\begin{align*}
	\sum_{a = 0}^{x} & \sum_{b = \ell'}^y C_{\beta,n} \binom{x}{a} \binom{y-\ell'}{b-\ell'} \frac{(b -\ell)!}{(a+b)^\beta \binom{n}{a+b} (a+b)!} \\
	&= \sum_{a = 0}^{x} \sum_{b = \ell'}^y C_{\beta,n} \\
	&\quad\cdot\frac{x \cdot \ldots \cdot (x-a+1) \, (y-\ell') \cdot \ldots \cdot (y - b + 1) \, (b-\ell) \cdot \ldots \cdot (b-\ell'+1)}{(a+b)^\beta \,  a! \, n \cdot \ldots \cdot (n - a - b + 1)}.
  \end{align*}	
	We note that for all $\alpha \in [0..a-1]$ and $b \in [\ell'..y]$, we have $x - \alpha = n - y - \alpha \le n - b - \alpha$. Also, for $b \in [\ell'..y]$ and $\lambda \in [\ell'..b-1]$, we have $\frac{y-\lambda}{n-\lambda} \le \frac{y-\ell'}{n-\ell'}$. With these estimates, we see that the above expression is at most
	\[
	\new{C_{\beta,n}} \sum_{a = 0}^{x} \sum_{b = \ell'}^y \frac{(b-\ell) \cdot \ldots \cdot (b-\ell'+1)}{b^\beta \, a! \, n \cdot \ldots \cdot (n - \ell' + 1)}\left(\frac{y-\ell'}{n-\ell'}\right)^{b-\ell'}.
	\]
	Using $\sum_{a=0}^x \frac{1}{a!} \le \sum_{a=0}^\infty \frac 1{a!} = e$, we have shown the non-asymptotic claims. 
	
	For the asymptotic claim, we note that by definition $\new{\ell \le \ell' \le 2 \ell}$. Hence both $\ell$ and $\ell'$ are constant. With our assumption $x \le \eps n$ for a constant $\eps > 0$, we further have that $\frac{y-\ell'}{n-\ell'} \le \frac yn \le 1-\eps$ is at most a constant less than one. This allows to estimate the above expression by $O(n^{-\ell'} \sum_{b=\ell'}^\infty b^{\ell'-\ell-\beta} (1-\eps)^b) = O(n^{-\ell})$.	
\end{proof}

\begin{theorem}\label{thm:loscrambleht}
  Let $\beta > 1$. The expected runtime of the \oea with heavy-tailed scramble mutation with power-law exponent~$\beta$ on the $\PLeadingOnes$ benchmark is $\Theta(n^3)$.
\end{theorem}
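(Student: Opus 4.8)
The plan is to establish the matching $O(n^3)$ and $\Omega(n^3)$ bounds separately. The upper bound is a routine repetition of the fitness-level argument already used for the standard scramble operator in Theorem~\ref{thm:loscramble}, whereas the lower bound is the delicate part. The reason is that the heavy-tailed operator changes a single prescribed value only with probability $\Theta(n^{-\beta})$ (Lemma~\ref{lem:scrambleHT}), which can be as large as $n^{-1-\eps}$ and is therefore far too weak to yield an $\Omega(n^3)$ bound in the naive way. The remedy is to run the classic counting argument only in a window of fitness values $[n/2,3n/4)$, where any accepted step must preserve a prefix of $\Omega(n)$ positions, so that the much stronger estimate of Lemma~\ref{lem:scrambleHTplus} becomes available.

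For the upper bound I would first note that $C_{\beta,n} = \Theta(1)$ because $\beta > 1$, so the operator chooses $k=2$ with constant probability and hence applies any fixed transposition with probability $\Theta(n^{-2})$. For a non-optimal $\sigma$ of fitness $i$, the transposition exchanging the values $i+1$ and $\sigma(i+1)$ places $i+1$ correctly while fixing $1,\dots,i$, and thus increases the fitness; so each iteration improves with probability $\Omega(n^{-2})$, the expected time per improvement is $O(n^2)$, and at most $n$ improvements are needed, giving $O(n^3)$.

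For the lower bound I would argue as in Theorem~\ref{thm:lo}. I first bound the probability $\Pr[A_i]$ of a fitness gain of at least three when the current fitness is $i < 3n/4$. Writing $u \in \{1,2,3\}$ for the number of positions among $\{i+1,i+2,i+3\}$ that are not already fixed (position $i+1$ always being one of them), such a step must change these $u$ positions to their correct values, which by Lemma~\ref{lem:scrambleHT} (with $\ell = u$) has probability $O(n^{-u-\beta+1})$; and by the same symmetry of the non-leading elements used in Theorem~\ref{thm:lo}, the probability that exactly the remaining $3-u$ positions are fixed is $O(n^{-(3-u)})$. Since $\beta > 1$, each of the at most three contributions is $O(n^{-(3-u)}\, n^{-u-\beta+1}) = O(n^{-\beta-2}) = O(n^{-3})$, so $\Pr[A_i] = O(n^{-3})$ for every $i < 3n/4$. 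Second, for $i \in [n/2,3n/4)$ I apply Lemma~\ref{lem:scrambleHTplus} with $X = \{1,\dots,i\}$ (so $x = i \ge n/2 = \Omega(n)$) and $I = \{i+1\}$: any fitness improvement forces $\tau(i+1) = i+1$ while $1,\dots,i$ stay fixed, an event of probability $O(n^{-\ell'}) = O(n^{-2})$ because here $\ell' = 2$.

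It remains to combine these estimates by the now-standard accounting. With probability $1 - \tfrac1n$ the initial permutation has fitness $0$ and hence lies below the window. Fix $t = c\,n^3$ for a sufficiently small constant $c$ and consider the first $t$ iterations. A union bound over the jump estimate shows that, with probability at least $\tfrac34$, no iteration with fitness below $3n/4$ ever raises the fitness by three or more; in particular the window cannot be skipped, and once inside it every accepted step raises the fitness by at most two. A Markov-inequality bound using the $O(n^{-2})$ improvement probability shows that, again with probability at least $\tfrac34$, fewer than $n/9$ improving steps occur while the fitness is in $[n/2,3n/4)$; since crossing a window of width $n/4$ with steps of size at most two needs $\Omega(n)$ improvements, the fitness then stays below $3n/4$, so the optimum is not found within $t$ iterations. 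Together these events show that the runtime exceeds $t = \Omega(n^3)$ with constant probability, which yields the claimed lower bound. The main obstacle, and the reason the proof needs more than a transcription of Theorem~\ref{thm:loscramble}, is precisely the weakness of the single-value bound $\Theta(n^{-\beta})$: it both forces the restriction to the high-fitness window (so that Lemma~\ref{lem:scrambleHTplus} can supply the $O(n^{-2})$ improvement probability) and makes the free-rider case $u=1$ of the jump estimate nontrivial, where only the inequality $\beta > 1$ rescues the bound $n^{-\beta}\cdot n^{-2} = O(n^{-3})$.
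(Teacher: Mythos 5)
Your proposal is correct and follows essentially the same route as the paper's proof: the same fitness-level upper bound, and a lower bound that confines the counting argument to a fitness window of width $\Theta(n)$ where the prefix-preservation requirement makes Lemma~\ref{lem:scrambleHTplus} applicable, combined with free-rider counting via the symmetry of the non-leading elements and Markov's inequality on the number of improvements. The only (immaterial) difference is bookkeeping: the paper works in the window $[\frac n4, \frac n2]$ and controls the entry overshoot by excluding gains of $7$ or more below fitness $\frac n4$, whereas you work in $[\frac n2, \frac{3n}{4})$ and exclude gains of $3$ or more uniformly below fitness $\frac{3n}{4}$ via Lemma~\ref{lem:scrambleHT} plus free-riders, exploiting $\beta>1$ to get $n^{-2-\beta}=O(n^{-3})$.
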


\begin{proof}
  The upper bound, as several times already in this work, is a simple fitness level argument based on the observation that also the heavy-tailed scramble mutation operator in any search point moves a particular item on a particular position via a $2$-cycle with probability $\Omega(n^{-2})$. 
	
	For the more interesting lower bound, consider a run of the \oea with heavy-tailed scramble mutation on $\PLeadingOnes$. Let $T$ be the first time that the parent individual has a fitness of at least $\frac n4$. Let $c$ be a sufficiently small constant and $\ell = \lfloor cn^3 \rfloor$. We consider the event $\EE$ that 
	\begin{itemize}
	\item $T \ge cn^3$ or the fitness of the parent in iteration $T$ is at most $\frac n4 + 6$; and
	\item in each iteration $t \in [T..T+\ell-1]$ that starts with a search point with fitness at most $\frac n2$, the fitness increases by at most~$3$; and
	\item there are at most $\frac 1{15} n$ fitness improvements in the iterations $T, \dots, {T+\ell-1}$.
	\end{itemize}
	We first show that this event implies a runtime of $\Omega(n^3)$. There is nothing to show when $T \ge cn^3$, so let us assume that the parent in iteration $T$ has a fitness of at most $\frac n4 + 6$. By the third condition of the definition of~$\EE$, we know that there are at most $\frac 1 {15} n$ fitness improvements in the following $\ell$ iterations. By induction, we see that the fitness before the $i$-th improvement, \new{$i = 1, \ldots, \frac 1 {15} n$}, is at most $\frac n4 + 6 + 3(i-1)$, which is less than $\frac n2$ (when $n$ is sufficiently large), hence by the second condition the $i$-th improvement increases the fitness by at most~$3$, giving a fitness of at most $\frac n4 + 6 + 3i$. Consequently, after iteration $T+\ell-1$, we have a fitness of at most ${\frac n4 + 6 + \frac 15 n < n}$, hence the runtime is at least $T + \ell \ge \ell = \Omega(n^3)$. 

	Hence it suffices to show that the event $\EE$ shows up with at least constant (positive) probability. To do so, we consider the inverse event, which is that one of the three conditions above is not satisfied.
	
	If $T < cn^3$ and the fitness of the parent in iteration $T$ is more than $\frac n4 + 6$, then by definition of $T$, in the first $cn^3$ iterations a fitness improvement of at least~$7$ has occurred at least once or the random initial solution had a fitness of more than $\frac n4$. The latter event occurs with probability at most $\frac 1n$ (which is the probability for the initial individual to have a positive fitness at all). For the fitness to increase by at least~$7$ in one iteration before time~$T$, we need that the mutation operation sets four items previously not in place onto the right position (probability $O(n^{-4})$ by Lemma~\ref{lem:scrambleHT}) or we need that four items out of the six ``next'' ones (not counting the first item out of place) are in place by chance (so-called free-riders). Since, as discussed earlier, the items not contributing to the fitness are uniformly distributed (on the positions not contributing to the fitness), the probability that four fixed such items are in place is $\frac{(n-f(x)-1-4)!}{(n-f(x)-1)!} = \frac{1}{(n-f(x)-1)\cdot\ldots\cdot(n-f(x)-4)}$, which is at most $(\frac{4}{3n})^4 +o(1)$ since the current fitness $f(x)$ is below $\frac n4$ by definition of~$T$. A union bound over the $\binom{6}{4}$ ways to have four free-riders in six positions shows that the probability for having four or more free-riders here is $O(n^{-4})$ as well.  Hence the probability that one such iteration increases the fitness by $7$ or more is $O(n^{-4})$. A union bound over the $T = cn^3$ iterations shows that such a fitness increase occurs only with probability $o(1)$ in the first $T$ iterations. Overall, we see that the probability that the first item of the definition of $\EE$ is not satisfied, is $o(1)$. 
		
		For the third item, by Lemma~\ref{lem:scrambleHTplus}, the probability for a fitness improvement when the fitness is already $\Omega(n)$ is at most $Cn^{-2}$, where $C$ is a constant derived from the asymptotic $O(n^{-\ell'})$ statement in the lemma. Note that when the current fitness is already $\Omega(n)$, then a fitness improvement is possible only when the corresponding $\Omega(n)$ positions are not changed by the mutation. Hence we can take $|X| = \Omega(n)$ in the lemma. As discussed earlier, $\ell=1$ necessarily implies $\ell' = 2$. Hence expected number of fitness improvements in the $\ell$ iterations starting with iteration~$T$ is at most $\ell Cn^{-2} \le cCn$. By Markov's inequality, the probability to have more than $\frac 1 {15}n$ fitness improvements in the $\ell$ iterations starting with iteration~$T$ is at most $(cCn)/ (\frac 1{15}n)$, which can be made less than \new{$\frac 1{3}$} by taking $c$ sufficiently small.  
	
		For the second item, we reuse arguments from the previous lower bound proofs for $\PLeadingOnes$. To have a fitness gain of at least three when starting with a parent with fitness between $\frac n4$ and $\frac n2$, we either need that the mutation operator sets three items previous not in place onto the right position and keeps all $\Theta(n)$ items in place which contribute to the fitness (this happens with probability $O(n^{-3})$ by Lemma~\ref{lem:scrambleHTplus}), or it sets at least one wrongly placed item right, keeps the $\Theta(n)$ previous ones in place, and there is at least one free-rider in the two following bits. By Lemma~\ref{lem:scrambleHTplus}, the probability that the mutation operation is of this kind is $O(n^{-2})$, by arguments used previously (uniform distribution of the items not contributing to the fitness), the probability for a free-rider in a fitness level below $\frac n2$ is $O(n^{-1})$. Hence the probability that one iteration increases the fitness by three or more is $O(n^{-3})$. By taking $c$ sufficiently small, a Markov bound argument analogous to the previous paragraph shows that the second item is violated with probability at most~\new{$\frac 1 {3}$}. 
		
		In summary, we see that $\EE$ holds with probability at least \new{$1 - (o(1) + \frac 1 {3} + \frac 1 {3})$}, which is a positive constant as desired.
\end{proof}

\section{Experiments and Fine-Tuning of the Mutation Operator}\label{sec:experiments}

To see to what extent our asymptotic results are meaningful already for moderate problem sizes, and also to see differences inside the same asymptotic runtime class, we also conducted a small experimental evaluation of the permutation-based \oea with four different mutation operators (swap and scramble, both with $k \sim \Poi(1)$ and $k \sim \pow(n,1.5)$) on the permutation-based \leadingones and \jump benchmark.

For all experiments, we report the runtime in terms of the number of fitness evaluations until the optimum is found. From the definitions of the different mutation operators, it is clear that they have different probabilities to create an offspring identical to the parent. Since this will have an influence on the performance, we first discuss this aspect in more detail.

\subsection{Void Mutations}

To ease the language, we call a mutation operation \emph{void} if it creates an offspring that is identical to the parent. It is clear that such an offspring does not require a new fitness evaluation, however, it is less clear to what degree one should ignore such operations in the performance evaluation of an algorithm (we refer to~\cite{JansenZ11foga} for a detailed discussion of this non-trivial question). Clearly, in simple elitist optimization processes, void mutations are not very helpful. We note that via sampling $k \in \{0,1\}$, the Poisson scramble operator already has a probability of $\frac 1e + \frac 1e \approx 0.74$ to perform a void mutation. Hence by sampling $k$ conditional on being at least~$2$, the probability for a void mutation can be reduced significantly, leading to a factor-four performance gain. We remark that void mutations are not always useless. For example, in non-elitist evolutionary algorithms, they help to preserve a good solution for some while in the population. The known runtime guarantees for such algorithms, e.g.,~\cite{JagerskupperS07,Lehre10,RoweS14,DangL16algo,CorusDEL18,DoerrK21algo,FajardoS21foga} all critically depend on the fact that standard bit mutation with constant probability creates a copy of the parent. 
%\merk{woanders:}
%We note that the different mutation operators have very different chances to produce an offspring that is identical to the parent (see the more detailed discussion further below). To allow an easy factoring out of the differences made through such iterations, we also report the number of fitness evaluations of offspring that are different from their parent. We note that we are not using an archive here and we are not counting the number of different search points visited, but we just do not count iterations in which the offspring equals the parent. 

We now discuss the ratio of void mutations of the different mutation operators regarded in this work. We quickly repeat their precise definitions. For the operators using a Poisson distribution, we sample a number $k$ according to a Poisson distribution with mean $\lambda = 1$. For the heavy-tailed operators, we sample $k$ from a power-law distribution with range $[1..n]$ and power-law exponent $\beta > 1$, where the choice $\beta = 1.5$ was recommended in~\cite{DoerrLMN17}. We use this number $k$ as follows. In the case of swap mutations, we apply $k$ random transpositions (recall that this is different from~\cite{ScharnowTW04}, where $k+1$ swaps were applied). We choose the $k$ transpositions independently and uniformly at random from the set of $\binom{n}{2}$ transpositions on $[1..n]$. In the case of scramble mutations, we scramble (that is, randomly permute) a random set of $k$ items if $k \le n$ and we return the parent for $k > n$. 

For the swap operators, we see that for $k=0$, the offspring always equals the parent, whereas for $k=1$, it never equals the parent. For $k \ge 2$ (and $k$ even) the offspring can equal the parent, but this happens with negligible probability $O(n^{-2})$ only (see below for a proof of this claim). For the scramble operators, the offspring is always equal to the parent if $k = 0$, $k=1$, or $k > n$. For $k \in [2..n]$, the offspring equals the parent if and only if the permutation describing the scrambling was chosen as identity, hence with probability $\frac 1 {k!}$. We note that for the typical case that $k$ is constant, this contributes non-negligibly to the probability of recreating the parent. From these considerations, we obtain the following elementary result.

\begin{lemma}\label{lem:void}
  Let $P_0 := P_0(n)$ (or $P_0 := P_0(n,\beta)$ for the heavy-tailed operators) denote the probability that one of the mutation operators discussed above creates an offspring that is equal to the parent. Then, independent of the parent, $P_0$ satisfies the following.
	\begin{itemize}
		\item Swap mutation, Poisson distribution: $0.367879... \approx \frac 1e \le P_0 \le \frac 1e + \binom{n}{2}^{-1}$.
		\item Swap mutation, power-law distribution: $P_0 \le \binom n2 ^{-1}$.
		\item Scramble mutation, Poisson distribution: 
		\[
		0.838612... \approx I_0(2) \le P_0 \le I_0(2) + \frac{1}{e(n+1)!} \frac{n+2}{n+1}.
		\]
		Here $I_n(\cdot)$ is a modified Bessel function of the first kind.
		\item Scramble mutation, power-law distribution: 
		\[
		P_0(n,\beta)  = C_{\beta,n} \sum_{k=1}^n k^{-\beta} \frac{1}{k!}.
		\]
		For any $n_0 \in \N$ and all $n \ge n_0$, we have
		\[
		P_0^-(n_0,\beta) := \frac{1}{\zeta(\beta) C_{\beta,n_0}} P_0(n_0,\beta) \le P_0(n,\beta) \le P_0(n_0,\beta).
		\]
		Here $\zeta(\cdot)$ is the Riemann zeta function. In particular, $\zeta(1.5) = 2.612375...$. Some concrete values for \new{$P_0(n,\beta)$} are given in Table~\ref{tab:numbers}.
	\end{itemize}
\end{lemma}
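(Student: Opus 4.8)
The plan is to handle the four operators one at a time, in each case conditioning on the sampled value $k$ and computing the conditional probability $q_k$ of recreating the parent given that $k$ was drawn. Since $q_k$ depends only on $k$ and the operator (not on the parent), we may write $P_0 = \sum_k \Pr[K = k]\, q_k$, where $\Pr[K=k] = \frac{1}{e k!}$ for the Poisson operators and $\Pr[K=k] = C_{\beta,n} k^{-\beta}$ for the power-law operators. All four bounds then follow from identifying the correct $q_k$ and summing.

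For the two swap operators, the key sub-claim is that $q_0 = 1$, $q_1 = 0$, and $q_k \le \binom{n}{2}^{-1}$ for every $k \ge 2$. I would prove the last inequality by conditioning on the first $k-1$ transpositions: with $\pi := T_{k-1} \circ \dots \circ T_1$, a void mutation requires $T_k = \pi^{-1}$, which has positive probability only when $\pi$ is itself a transposition, in which case $\Pr[T_k = \pi^{-1}] = \binom{n}{2}^{-1}$; hence $q_k = \Pr[\pi \text{ is a transposition}] \cdot \binom{n}{2}^{-1} \le \binom{n}{2}^{-1}$. For the Poisson swap operator the lower bound $\frac1e$ then comes from the $k=0$ term alone (the $k=1$ term contributes nothing since $q_1 = 0$), and the upper bound from $P_0 \le \frac1e + \binom{n}{2}^{-1}\sum_{k\ge 2}\frac{1}{ek!} \le \frac1e + \binom{n}{2}^{-1}$. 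For the power-law swap operator $k \ge 1$ always and $q_1 = 0$, so $P_0 = \sum_{k=2}^n C_{\beta,n} k^{-\beta} q_k \le \binom{n}{2}^{-1}\sum_{k=1}^n C_{\beta,n} k^{-\beta} = \binom{n}{2}^{-1}$, using that the power-law weights sum to one.

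For the two scramble operators one has $q_k = \frac{1}{k!}$ for $k \in [0..n]$ (the scrambling permutation must be the identity on the chosen set) and $q_k = 1$ for $k > n$ (the operator returns the parent). For the Poisson scramble operator this yields the exact identity
\[
P_0 = \frac1e \sum_{k=0}^{n} \frac{1}{(k!)^2} + \frac1e \sum_{k=n+1}^{\infty} \frac{1}{k!}.
\]
Recognizing $\sum_{k=0}^{\infty} (k!)^{-2} = I_0(2)$, the lower bound $\frac1e I_0(2)$ follows because $\frac{1}{k!} \ge \frac{1}{(k!)^2}$ makes the combined tail error nonnegative, and the upper bound follows by bounding $\sum_{k \ge n+1}\frac{1}{k!}$ against a geometric series to get exactly $\frac{1}{e(n+1)!}\frac{n+2}{n+1}$. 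For the power-law scramble operator, $q_k = \frac{1}{k!}$ throughout gives the stated exact formula $P_0(n,\beta) = C_{\beta,n}\sum_{k=1}^n k^{-\beta}/k!$.

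The monotonicity bounds in the last case are where I expect to spend the most care. Writing $S(n) = \sum_{k=1}^n k^{-\beta}/k!$ and $Z(n) = \sum_{k=1}^n k^{-\beta}$, so that $P_0(n,\beta) = S(n)/Z(n)$, the lower bound is immediate: for $n \ge n_0$ we have $S(n) \ge S(n_0)$ and $Z(n) \le \zeta(\beta)$, whence $P_0(n,\beta) \ge S(n_0)/\zeta(\beta) = P_0^-(n_0,\beta)$. The upper bound $P_0(n,\beta) \le P_0(n_0,\beta)$ I would obtain from a mediant argument: passing from $n_0$ to $n$ augments the ratio $S/Z$ by terms whose individual ratio $\frac{k^{-\beta}/k!}{k^{-\beta}} = \frac{1}{k!} \le \frac{1}{(n_0+1)!}$ is strictly smaller than the current value $P_0(n_0,\beta)$, so by the standard mediant inequality each addition can only decrease the ratio. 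The main obstacle is therefore not any single hard computation but rather establishing the swap sub-claim $q_k \le \binom{n}{2}^{-1}$ cleanly and uniformly in $k$, and arguing the ratio monotonicity carefully; everything else reduces to recognizing the Bessel series and summing geometric tails.
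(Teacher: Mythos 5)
Your proposal is correct and follows essentially the same route as the paper's proof: conditioning on $k$, bounding the swap case by conditioning on the first $k-1$ transpositions, recognizing the Bessel series and bounding the Poisson tail geometrically, and proving monotonicity of the heavy-tailed scramble probability by viewing it as a weighted average of the decreasing values $\frac{1}{k!}$ (your mediant phrasing is the same idea). The only cosmetic difference is that the paper additionally notes via the sign of a permutation that $q_k=0$ for odd $k$, which you do not need for the stated bounds.
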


\begin{proof}
  Consider a sequence $\tau_k, \dots, \tau_1$ of random transpositions and let $\tau = \tau_k \circ \dots \circ \tau_1$. If $k$ is odd, then the sign of $\tau$ is $-1$, and hence $\tau$ cannot be the identity permutation. Hence let $k$ be even and positive. We condition on the outcomes of $\tau_{k-1}, \dots, \tau_1$. For $\tau$ to be the identity permutation, we need that $\tau_k$ is the inverse of $\tau_{k-1} \circ \dots \circ \tau_1$. If the latter is not a transposition, then this is just not possible. Otherwise, $\tau_k$ is this particular transposition with probability exactly $\binom n2^{-1}$. Consequently, also without conditioning on $\tau_{k-1}, \dots, \tau_1$, the probability that $\tau$ is the identity is at most $\binom n2^{-1}$.
	
	From this preliminary consideration, we immediately derive the first two claims, recalling that $k=0$ with probability $\frac 1e$ in the Poisson case and with probability zero in the power-law case. 
	
  For the scramble operator, the probability that the parent equals the offspring is $\Theta(1)$ for all constant $k \ge 1$. For this reason, the following two estimates are more technical. More precisely, for a given $k \le n$, the probability that the scramble mutation operator recreates the parent is exactly~$\frac 1 {k!}$ (and it is one for $k > n$). Consequently, the probability to recreate the parent when using the Poisson scramble operator is
	\[
	P_0 = \sum_{k=0}^n \frac{1}{ek!}\frac{1}{k!} + \Pr[\Poi(1) > n].
	\]
	\new{Since $\Pr[\Poi(1) > n] = \sum_{k=n+1}^\infty \frac{1}{ek!}$}, we see that this number is at least $\sum_{k=0}^\infty \frac{1}{e(k!)^2}$ and at most $\sum_{k=0}^\infty \frac{1}{e(k!)^2} + \Pr[\Poi(1) > n]$. We note that $\sum_{k=0}^\infty \frac{1}{e(k!)^2} = \frac 1e I_0(2) \approx 0.838612...$, where $I_n(\cdot)$ is a modified Bessel function of the first kind and the specific value $I_0(2) = 2.2795853023...$ can be found in~\cite[A070910]{oeis}. Also, $\Pr[\Poi(1) > n] = \sum_{k=n+1}^\infty \frac 1{ek!} \le \frac{1}{e(n+1)!} \sum_{i=0}^\infty (n+2)^{-i} = \frac{1}{e(n+1)!} \frac{n+2}{n+1}$. 
	
	Finally, with analogous arguments, for the heavy-tailed scramble operator we have
	\[
	P_0 = P_0(n,\beta) = C_{\beta,n} \sum_{k=1}^n k^{-\beta} \frac{1}{k!}.
	\]
	We have not found a closed formula for this expression. However, we observe that $P_0$ is decreasing for growing value of~$n$. This is because $C_{\beta,n}$ is decreasing in~$n$, so the term $\frac{1}{k!}$ appears with a smaller coefficient in the sum when $n$ grows. Since this sum is a convex combination of $\frac 1 {k!}$, ${k \in [1..n]}$, and since these are decreasing for growing~$k$, our claim that $P_0$ is decreasing in $n$ follows. This immediately gives the claimed upper bound for $P_0(n,\beta)$. 
	
	For a lower bound, we note that $C_{\beta,n}$ by definition is decreasing with limit $\lim_{n \to \infty} C_{\beta,n} = \frac{1}{\zeta(\beta)}$, where $\zeta$ is the Riemann zeta function. Hence we have $P_0(n,\beta) \ge \frac{1}{\zeta(\beta)} \sum_{k=1}^{n} k^{-\beta} \frac{1}{k!} \ge \frac{1}{\zeta(\beta)} \sum_{k=1}^{n_0} k^{-\beta} \frac{1}{k!} = \frac{1}{\zeta(\beta) C_{\beta,n_0}} P_0(n_0,\beta)$ for all $n_0 \le n$. The specific value $\zeta(1.5) = 2.6123753486...$ can be found in~\cite[A078434]{oeis}.
\end{proof}

\begin{table}
\caption{The normalizing constant $C_{\beta,n}$, the probability $P_0(n,\beta)$ that heavy-tailed scramble mutation creates an offspring identical to the parent, and its lower bound $P_0^-(n,\beta)$ for some values of $n$ and $\beta=1.5$. We also have $C_{\beta,\infty} := \lim_{n\to \infty} C_{\beta,n} = \frac{1}{\zeta(\beta)}$ and thus $C_{1.5,\infty} = 0.382793$. All values are rounded to six digits. } \label{tab:numbers}
\begin{center}
\begin{tabular}{|r||r|r|r|}
\hline
$n$ & $C_{1.5,n}$ & $P_0(n,1.5)$ & $P_0^-(n,1.5)$ \\
\hline
  10 & 0.501169 & 0.608876 & 0.465060\\
 100 & 0.414444 & 0.503512 & 0.465060\\
1000 & 0.392288 & 0.476596 & 0.465060\\
\hline
\end{tabular}
\end{center}
\end{table}

The proof above has shown that many void mutations are easy to avoid by adapting the mutation operator suitably, that is, by sampling $k$ conditional on $k \ge 1$ (for swap mutation) and $k \in [2..n]$ (for scramble mutation) and by sampling the scrambling permutation in the scramble operator different from the identity. For swap mutations, there is also the type of void mutation that chooses $k \ge 2$ and swaps $\tau_k, \dots, \tau_1$ such that $\tau_k \circ \dots \circ \tau_1$ is the identity. While these are not truly difficult to detect, they are less immediate to detect than the void mutations discussed before. At the same time, these void mutations are rare, showing up with probability at most $\binom{n}{2}^{-1} = O(n^{-2})$. With such a small rate of occurrence, it does not pay off to try to detect such mutations and replace them by a non-void one via resampling. For this reason, we shall ignore such void mutations in the following, that is, in our experiments we do not try to detect them and we do count them as fitness evaluation. To fix a notion, we call such void mutations \emph{hard to detect}. Consequently, we call \emph{easy-to-detect} void mutation a mutation step with $k=0$ for swap mutation and with $k \in \{0,1\}$ or $k > n$ for scramble mutation. We also call a scramble mutation easy-to-detect void if the scrambling mutation is the identity. In other words, all void mutations are easy to detect except for the $O(n^{-2})$ ratio of swap mutations in which a positive number of transpositions gives the identity.

\subsection{Experiments on \PLeadingOnes}

\begin{figure}
\includegraphics[width = 0.9\textwidth]{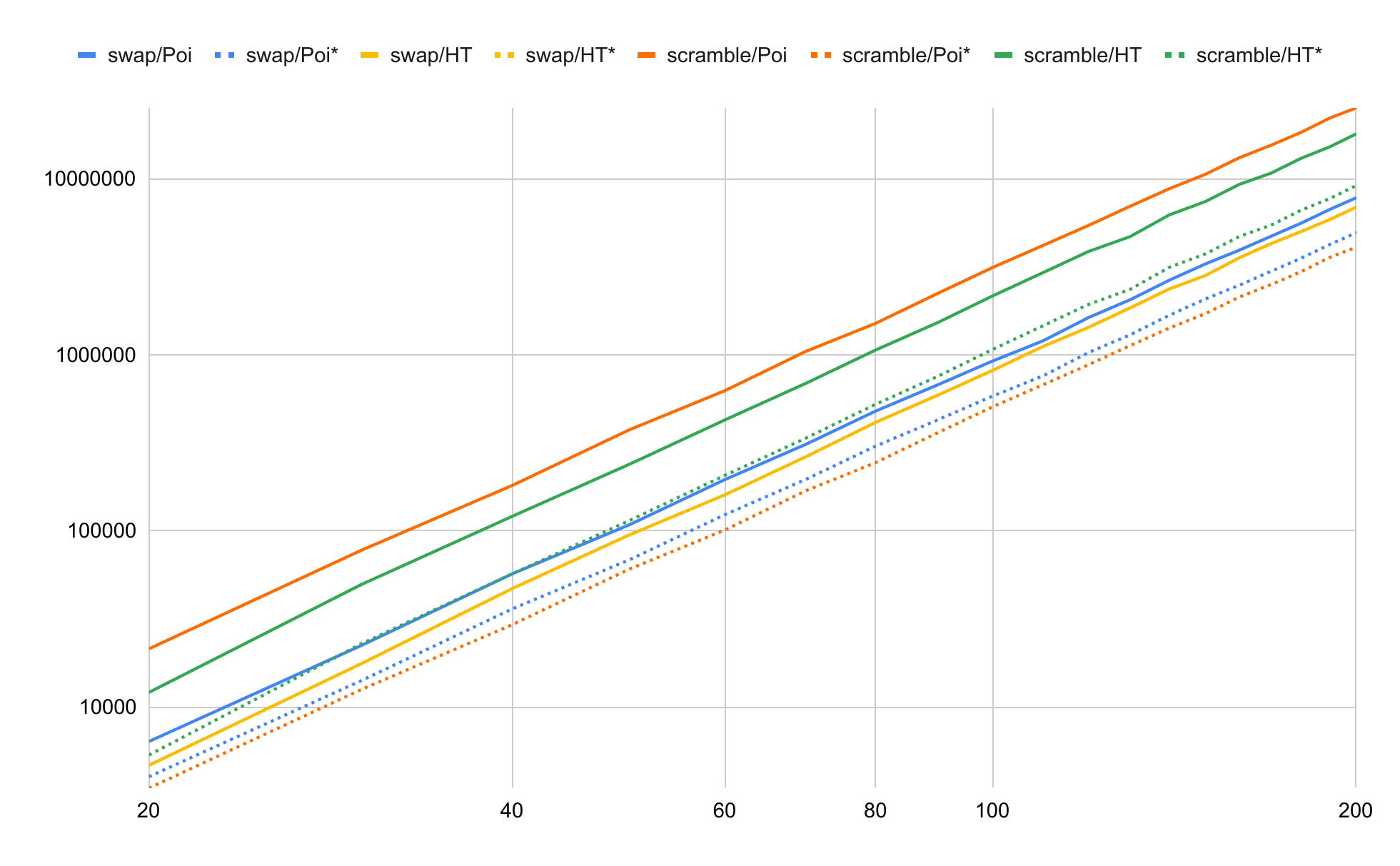}
\caption{Runtimes of the \oea with different mutation operators on the permutation-based \leadingones problem \new{for problems sizes $n = 20, 30, \dots, 200$}. The starred versions (dotted lines) are those that do not count easy-to-detect mutations in which parent and offspring are identical. Since the heavy-tailed swap operator does not have such mutations, these two lines are identical and cannot be seen separately in the figure.}\label{fig:leadingones}
\end{figure}

In Figure~\ref{fig:leadingones}, we report the runtime (number of function evaluations, averaged over 50 runs) of the \oea with the four different mutation operators on the permutation-based \LeadingOnes benchmark, both when counting all fitness evaluations and when ignoring easy-to-detect void mutations. To keep the plot readable, we do not display variances or other dispersion measures. We note that our mathematical runtime analyses have shown that a constant fraction of the runtime stems from a period in which improvements are found with probability $\Theta(n^{-2})$ and each improvement gains a constant number of fitness levels only. This suggests that the runtime is determined by many stochastically independent ingredients (the iterations), each of which has only a small influence on the final result. Such random variables usually are strongly concentrated around their mean. A glimpse into the raw data of our experiments confirms this intuitive reasoning. For example, for $n=100$, in all eight data sets we have a standard deviation that is below 13\% of the respective mean. 

In the double-logarithmic plot in Figure~\ref{fig:leadingones}, we clearly see eight curves that display a polynomial growth. The (eight) ratios of \new{corresponding} values for $n=200$ and $n=150$ are between $2.37$ and $2.45$ (rounded to two digits), which is very close to the $(4/3)^3 = 2.37$ a function $x \mapsto Cx^3$ would give. So this data shows that the cubic runtime behavior shown in our asymptotic results can also be observed for moderate problem sizes. 

More interesting are the leading constants revealed by the plots, that is, how the $\Theta(n^3)$ runtimes compare when looking inside the asymptotic order of growth. Here, apparently the void mutations have an important influence: The algorithm with Poisson-scramble mutation is the slowest when counting all iterations, but is the fastest when ignoring easy-to-detect void mutations. Since for a $(1+1)$-type algorithm void mutations cannot bring any advantage, it appears most sensible to concentrate our remaining analysis on the data that ignores void mutations, but to take the note that void mutations are more critical here than in bit-string representations -- the Poisson-scramble \oea becomes faster by a factor of roughly $\frac{1}{1- 0.838613} \ge 6$ when ignoring or avoiding void mutations, whereas the \oea using bit-wise mutation for bit-string representations only improves by a factor of roughly $\frac{1}{1-\frac 1e} \approx 1.58$. These estimates are based on the theoretical values computed in Lemma~\ref{lem:void} and the corresponding, elementary and well-known result for bit-wise mutation. As Table~\ref{tab:void} shows, the true speed-ups are essentially identical to the theoretical predictions. 

Concentrating on the plots ignoring easy-to-detect void mutations, that is, the dotted lines in Figure~\ref{fig:leadingones} (note that there are no void mutations for the heavy-tailed swap operator, hence this line is identical to (and thus covered by) the corresponding solid line), we see that the Poisson scramble operator leads to the best runtimes, whereas the heavy-tailed scramble operator gives the least favorable results. Recalling that the \leadingones problem is a relatively simple, unimodal problem, we suggest to not over-interpret these results and, in particular, to not try to generalize them to more difficult problems. It is not surprising that the two heavy-tailed operators, which put more probability mass on higher values of~$k$, that is, on mutations that change more items, do not profit from this property on a unimodal problem. Comparing the two operators building on the Poisson distribution, we note that the scramble operator (due to the fact that $k=2$ is the first non-void value) has a much higher probability of swapping to elements, namely of $\frac 1 {2e} / (1 - 1/e - 1/e) \approx 0.6961$, than the swap operator (having a probability of $\frac 1e / (1 - 1/e) \approx 0.2325$). Hence the true reason for the scramble operator being superior might not be related to scrambling versus swapping, but rather to the precise probabilities for applying certain minimal changes to the individual. Clearly, a mathematical runtime analysis aiming at making precise the leading constant of the runtime would be a good way to completely understand the reasons for the different runtimes observed in Figure~\ref{fig:leadingones}.

\begin{table}
\caption{Ratio of easy-to-detect void mutations in our experiments for $n=100$ compared to the theoretical values from Lemma~\ref{lem:void}. All values are rounded to six digits. } \label{tab:void}
\begin{center}
\begin{tabular}{|r||r|r|}
\hline
operator & experiment & theory \\
\hline
swap/Poi      & 0.367882 & 0.367879 \\
swap/HT       & 0        & 0        \\
scramble/Poi  & 0.838570 & 0.838613 \\
scrambe/HT    & 0.503550 & 0.503512 \\
\hline
\end{tabular}
\end{center}
\end{table}

\subsection{Experiments on \PJump}

\begin{figure}[h]
\includegraphics[width = 0.9 \textwidth]{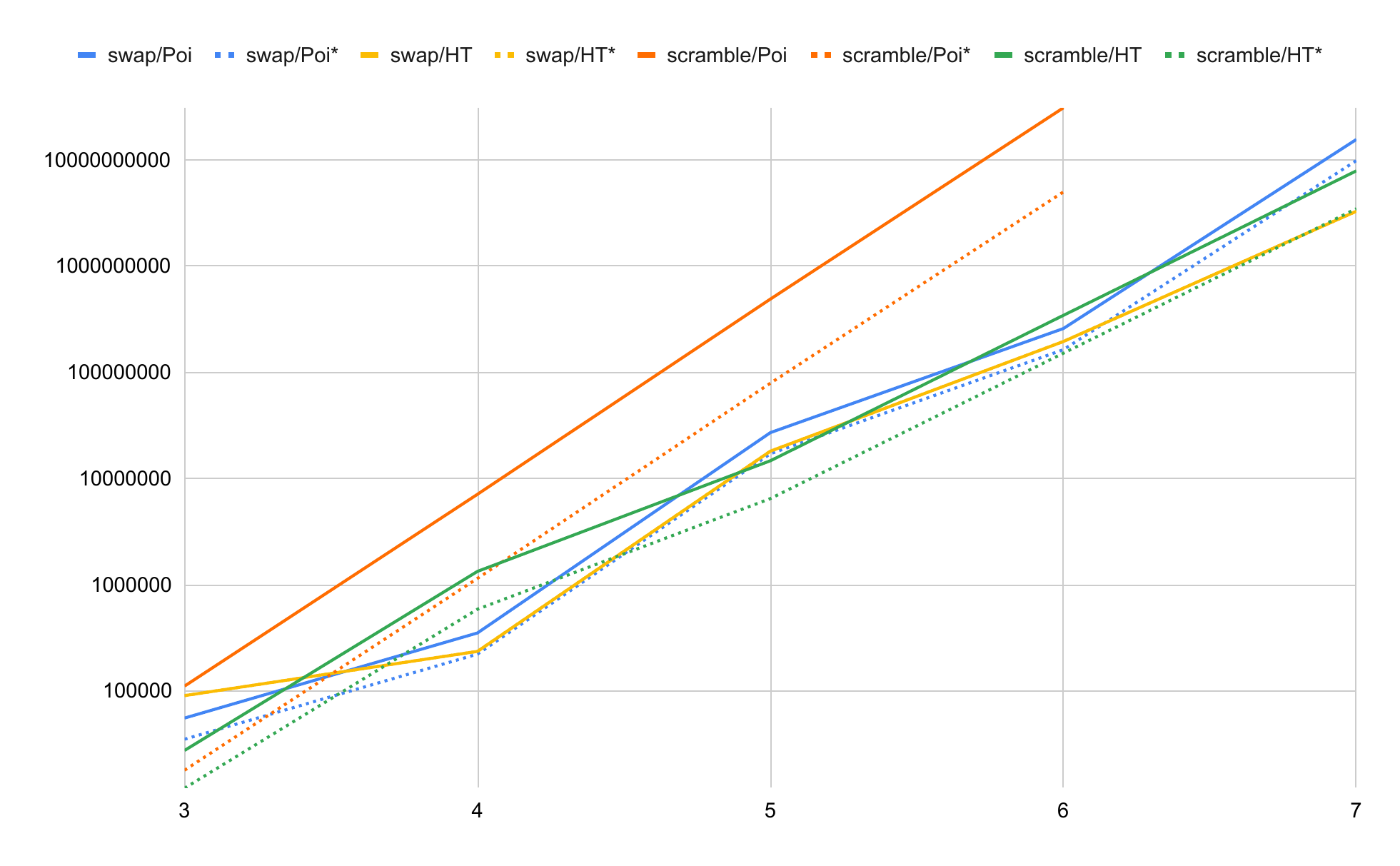}
\caption{Runtimes of the \oea with different mutation operators on the permutation-based \jump problem of fixed problem size $n=20$ with jump sizes $m \in [3..7]$ (no data point for the Poisson scramble operator for $m=7$ due to the excessive runtimes). The starred versions (dotted lines) are those that do not count easy-to-detect mutations in which parent and offspring are identical. Since the heavy-tailed swap operator does not have such mutations, these two lines are identical and cannot be seen separately in the figure.}\label{fig:jump}
\end{figure}

In Figure~\ref{fig:jump}, we report our experimental results for the permutation-based jump problem. Due to the difficulty of this problem, we could only regard relatively small problem sizes and moderate jump sizes. For that reason, we display only results for the largest problem size ($n=20$) for which we could obtain a decent number of results, and we vary the jump size $m$ from $3$ to~$7$ (but skipped $m=7$ for the Poisson scramble operator due the high runtimes). Possibly due to these limitations, but possibly also due to our incomplete understanding of the random walk on the local optim\new{a} for the swap operator, our results are not fully conclusive. 

We report in Figure~\ref{fig:jump} the averages over $30$ runs. We again do not display standard deviations in the figure, but we note here that in all experiments the standard deviation was between $75$\% and $122$\% of the expectation. This fits to our intuition, which is that the typical optimization process on a jump function consists of an easy and fast move to \newer{a} local optimum and then of a large number of attempts to reach the global optimum. This intuition would suggest that the runtime is well described by a geometric distribution, and such distributions have a standard deviation very close to the expectation (when the success probability is low, as in our experiments). This intuition could be made precise for the algorithms using scramble mutation, whereas for the swap algorithms the random walk on the local \newer{optima} makes such arguments more difficult. That said, we could not detect that the standard deviations showed some consistent differences between the two operators. We give exemplarily the variances for the case $m=6$ in Table~\ref{tab:vars}.
\begin{table}
\caption{Variances of the jump experiments for $m=6$ and when excluding the easy-to-detect void mutations (the differences to the numbers when counting all fitness evaluations differ by less than $10^{-5}$). All values are rounded to six digits. } \label{tab:vars}
\begin{center}
\begin{tabular}{|r||r|r|}
\hline
 & Poisson & heavy-tailed \\
\hline
swap & 1.095519	& 0.748826	\\
scramble & 1.120062	& 0.944077 \\
\hline
\end{tabular}
\end{center}
\end{table}

What is very clear is that for the scramble mutation operator, the heavy-tailed variant clearly beats the Poisson one. This agrees with our mathematical results. Somewhat surprisingly, the heavy-tailed version is not that clearly superior for the swap operator, in particular, when comparing the plots without void mutations (which again appears more fair). One reason could be that, as can be seen from the proof of Theorem~\ref{thm:jumpUP}, the best way to leave the local optima is to wait until the random walk has reached a permutation that can be written as product mostly of disjoint transpositions, and then move to the optimum from this. For the case $m=6$, for example, this means that the typical way to reach the global optimum is to apply three particular disjoint swaps (in an arbitrary order). In terms of the distribution of $k$, this means that only the value $k=3$ has to be sampled, and this value has similar probabilities under the Poisson law conditional on avoiding the value one ($(1-\frac 1e)^{-1} \frac{1}{e3!} = 0.0969...$) and the power-law ($C_{1.5,n} 3^{-1.5} \approx 0.443255 \cdot 0.192450 = 0.0853...$). In contrast, the typical way to leave \newer{a} local optimum via a scramble mutation uses $k=6$, and for this value the Poisson and the power-law distribution differ significantly ($(1-\frac 1e)^{-1} \frac{1}{e6!} = 0.0008...$
%(1-1/e)^(-1) / (e*6!) = 0.00080830098 
versus $C_{1.5,n} 6^{-1.5} \approx 0.443255 \cdot  0.068041 = 0.0301...$).
%0.03015951345

We recall that we do not well understand the random walk on the local \newer{optima}, and consequently, we cannot estimate at the moment how this part of the optimization process is affected by the distribution of~$k$. We note that for $m=7$ the heavy-tailed swap operator is clearly superior to the Poisson one (more than a factor-$3$ difference in the runtimes). This and the very different asymptotics of the Poisson and power law suggest that the performance similarity observed for $m \le 6$ will not repeat for larger values of~$m$. For this reason, our general recommendation would be to prefer a power-law for sampling~$k$, regardless of which mutation operator is employed.

Also well visible in the data is that the two scramble operators show a runtime behavior which is little affected by the parity of the jump size, whereas the two swap operators have a harder stand when $m$ is odd. This again agrees with our theoretical results (where we note that we have no proven results for the heavy-tailed swap operator because it appeared less interesting and much harder to analyze than the heavy-tailed scramble operator).

\section{Conclusions}

We designed a simple and general way to transfer the classic benchmarks from pseudo-Boolean optimization into permutation-based benchmarks. Our hope and long-term goal is that the theory of permutation-based EAs can profit from these in a similar manner as the classic EA theory has profited from benchmarks for bit-string representations.

As a first step in this direction, we conducted a mathematical runtime analysis on the permutation-based \leadingones and \jump function classes\new{, \PLeadingOnes and \Pjump}. While the \new{\PLeadingOnes} analyses provided no greater difficulties and the results, $\Theta(n^3)$ runtimes for all mutation operators regarded, were as expected, the situation was more interesting for \new{\PJump}. Both from the resulting runtime and the difficulties in the proof, we deduced that the previously commonly used mutation operator of applying a random number of transpositions has some drawbacks not detected so far. We overcame these difficulties by switching to the scramble mutation operator, which both leads to better runtimes and to more natural proofs. We also observed that heavy-tailed mutation strengths, proposed a few years ago for the bit-string representation, are profitable in permutation-based EAs as well. 

From a broader perspective, this work confirms what is known from empirical and applied research, \new{see, e.g.,~\cite{EibenS15}}, namely that it is not immediately obvious how to transfer expertise in evolutionary computation for bit-string representations to permutation-based optimization. In this light, this work suggests as interesting future work to investigate how some recently discussed questions can be answered in the permutation world. We find the following three particular topics most interesting and timely.
\begin{itemize}
	\item Precise runtime analyses: Runtime results that are only tight up to the asymptotic order of magnitude (such as our $\Theta(n^3)$ runtime bounds for the \new{\PLeadingOnes} benchmark) often do not allow to discriminate different algorithms, different variants of an algorithm, or different parameter settings. Here runtime results that include the leading constant, that is, which are tight apart from $(1+o(1))$ factors, can be more useful. For bit-string representations, such results have existed for a long time, see, e.g., the early works on $\onemax$ and $\needle$~\cite{GarnierKS99} or \leadingones~\cite{BottcherDN10,Sudholt13}, and have given interesting additional information, e.g., that the optimal mutation rate on \onemax is $p =  (1 \pm o(1)) \frac 1n$~\cite{GarnierKS99} and on \leadingones is $p\approx \frac{1.59}{n}$~\cite{BottcherDN10}. In contrast, no tight runtime analyses are known for permutation representations. 
	
	We are optimistic, though, that also for the \new{\PLeadingOnes} benchmark precise runtime results can be obtained for the four mutation operators discussed in this work. Most likely, the variant of the fitness-level method proposed in~\cite{DoerrK21gecco}, which allows a convenient treatment of free-riders, is a good tool here. There remains some work to be done, though. In particular, the estimate of the probability that a fitness level is not reached due to free-riders might be more challenging in the permutation case where, as our proofs reveal, the probability for a free-rider depends significantly on the fitness level and varies in a range from $\Theta(\frac 1n)$ to constant. While a precise runtime analysis for \PLeadingOnes is maybe the most natural continuation of this work, we note that there have been many other interesting and insightful precise runtime analyses for bit-string representation (see the short survey~\cite[Section~2.2]{Doerr22}) which would be interesting to extend to permutation representations. 
	\item Stagnation detection: Stagnation detection was proposed in~\cite{RajabiW22} (and further developed in~\cite{RajabiW21gecco,RajabiW23,DoerrR23}) as a natural way to improve the performance of evolutionary algorithms when they get stuck in a local optimum. Given the power of this approach, it would be interesting to extend it to permuation-based optimization. Clearly, this needs a good understanding how the safety parameter~$R$ has to be chosen, but possibly other adjustments are necessary as well. We note that any reasonable implementation of stagnation detection on the $\PJump_m$ benchmark would, with at least constant probability, reach a local optimum that can only be written as product of $m-1$ transpositions. Since stagnation detection, at least so far, is very restricted in moving to search points of equal fitness, this would be the current solution until the global optimum is found. Consequently, the random-walk arguments used in Section~\ref{sec:jump} to prove an upper bound of $O(n^{2\lceil m/2\rceil})$ cannot be applied, and thus a runtime of $\Theta(n^{2(m-1)})$ would most likely result (with the classic swap mutation operator).
	\item Drift analysis and linear functions: In this first work, we were able to prove all our upper bounds via the elementary fitness level method~\cite{Wegener05}. In the runtime analysis for bit-string representations, drift analysis has become a standard method to deal with many problems where fitness level arguments could not be employed, see the survey~\cite{Lengler20bookchapter}. The problem which has most propelled this progress is the innocent-looking question how the \oea optimizes linear functions $x \mapsto \sum_{i=1}^n w_i x_i$~\cite{DrosteJW02}, leading to the first use of drift arguments in this community~\cite{HeY01} and subsequently to many technical refinements such as average drift~\cite{Jagerskupper11}, multiplicative drift~\cite{DoerrJW12tcs}, and drift with tail bounds~\cite{DoerrG13algo}. With this development in mind, the question how drift arguments can be employed in the analysis of permutation-based evolutionary algorithms is very interesting. Clearly, the permutation-based linear functions benchmark (derived from the classic linear functions problem via our general construction) appears a good starting point for such research.
\end{itemize}

\new{\subsection*{Conflicts of Interest}
The authors have no conflicts of interest with regard to this work. 
}

\subsection*{Acknowledgement}
This work was supported by a public grant as part of the
Investissements d'avenir project, reference ANR-11-LABX-0056-LMH,
LabEx LMH.

\bibliographystyle{alpha}
\bibliography{ich_master,alles_ea_master,rest}

}%end sloppy
%\end{large}
\end{document}